\documentclass[a4paper,11pt]{style}

\usepackage{amsmath,amsfonts,amssymb,amsthm}
\usepackage{mathtools}
\usepackage{bbm}
\usepackage{accents}

\usepackage{comment}
\usepackage{adjustbox}
\usepackage{tikz}
\usepackage{overpic}
\usepackage{enumitem}
\usepackage{rotating}
\usepackage{algorithm}
\usepackage{algorithmic}
\usepackage[textsize=tiny]{todonotes}

\usepackage{amsmath,amsfonts,amssymb,amsthm}

\usepackage{amsmath}
\usepackage{amssymb}
\usepackage{mathtools}
\usepackage{amsthm}
\usepackage{natbib}
\usepackage{xcolor}

\usepackage[most]{tcolorbox}

\definecolor{sand}{RGB}{248,245,235}  
\definecolor{sandborder}{RGB}{220,215,200}
\definecolor{darkblue}{RGB}{40,70,140}   
\definecolor{parchment}{RGB}{252,249,240}
\definecolor{parchmentborder}{RGB}{222,215,195}
\definecolor{sage}{RGB}{244,248,244}
\definecolor{sageborder}{RGB}{200,210,200}
\definecolor{lavender}{RGB}{247,246,250}
\definecolor{lavenderborder}{RGB}{210,205,225}

\theoremstyle{plain}

\newtheorem{proposition}{Proposition}
\newtheorem{lemma}{Lemma}
\newtheorem{theorem}{Theorem}

\newtheorem{remark}{Remark}
\newtheorem{corollary}{Corollary}

\newcommand{\E}{\mathbb{E}}

\newcommand{\Var}{\mathrm{Var}}
\newcommand{\Cov}{\mathrm{Cov}}

\title{AREX: Affine-Residual Exponential Integrator for Few-Step Sampling  in Flow Matching}

\renewcommand{\authorlist}{%
Shizheng Lin\textsuperscript{1,2}   \hspace{0.35cm}
  Soon Hoe Lim\textsuperscript{*}\textsuperscript{1,2}
  \hspace{0.35cm}
  N. Benjamin Erichson\textsuperscript{3,4}
}

\renewcommand{\affiliationlist}{%
  {\small
    \textsuperscript{1}Department of Mathematics,
    KTH Royal Institute of Technology
  }\\[1.5pt]
  {\small
    \textsuperscript{2}Nordita,
    KTH Royal Institute of Technology and Stockholm University
  }\\[1.5pt]
  {\small
  \textsuperscript{3}International Computer Science Institute
  }\\[1.5pt]
  {\small
    \textsuperscript{4}Lawrence Berkeley National Laboratory
  }%
}

\begin{document}

\abstract{%
We introduce AREX, a training-free sampler for pretrained flow matching models that uses the target mean and covariance to capture an analytically tractable part of the sampling dynamics. We show that the velocity field of the moment-matched Gaussian target is the $L^2$-optimal affine approximation to the marginal velocity field. This motivates decomposition of the learned dynamics into an affine component over the whole sampling path, determined by the first two target moments, and a neural residual term. AREX keeps the affine component and integrates it using an explicit matrix-valued propagator. In turn, we only require to 
integrate over the residual term. This differs from scalar exponential integrators, which analytically handle only isotropic linear dynamics. Across image and text-to-image generation tasks, AREX consistently improves sample fidelity in the few-step sampling regime without retraining the underlying model. \\

{\sffamily\bfseries\color{muxfg}Code: } {\color{blue}{\url{https://github.com/glasgow233/AREX}
}}
}

\maketitle

\begingroup
\renewcommand{\thefootnote}{*}
\footnotetext{Work done while visiting ICSI/UC Berkeley.}
\endgroup

\begin{figure*}[!h]
    \centering
    \includegraphics[width=\textwidth]{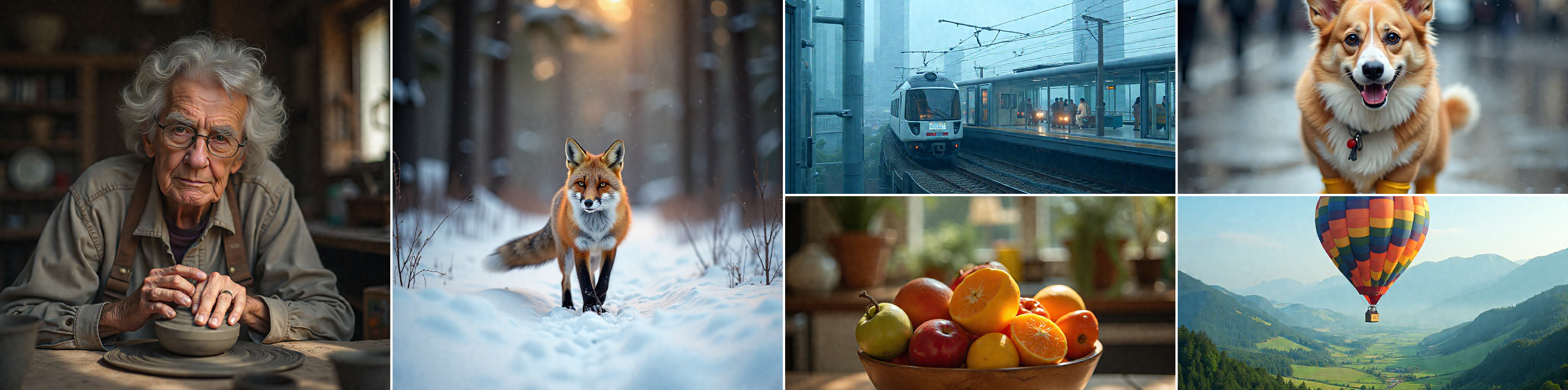}
    \vspace{-1.5mm}
    \caption{
    \textbf{Few-step text-to-image generation with AREX.}
    Qualitative samples from the pretrained FLUX.1-dev model using AREX-C with only $8$ network evaluations.
    AREX analytically propagates the moment-matched affine component of the learned flow and numerically integrates only the residual.
    All samples are generated without retraining the model.
    }
    \label{fig:teaser}
\end{figure*}

\section{Introduction}
\label{sec:intro}


Flow matching (FM) \citep{lipman2023flowmatchinggenerativemodeling} and related variants are  ODE-based generative modeling frameworks based on dynamical measure transport. Given a prescribed probability path from a simple reference distribution to the data distribution, FM learns the associated velocity field with a neural network and generates samples by integrating the resulting ODE. The computational bottleneck is sampling: resolving the learned flow accurately can require many neural network evaluations, and  efficient few-step generators can reduce the number of function evaluations (NFEs).

There are two main lines of work. On the training side, Mean Flow \citep{geng2025meanflowsonestepgenerative, geng2026improved}~proposes to learn the average velocity, while Align Your Flow \citep{sabour2025alignflowscalingcontinuoustime} and pi-Flow \citep{chen2026piflowpolicybasedfewstepgeneration} distill a few-step generator from a teacher. On the sampling side, training-free samplers such as  DDIM \citep{song2022denoisingdiffusionimplicitmodels}, DPM-Solver \citep{lu2022dpmsolverfastodesolver}, DPM-Solver++ \citep{Lu_2025}, DPM-Solver-v3 \citep{zheng2023dpmsolverv3improveddiffusionode}, DEIS \citep{zhang2023deis}, UniPC \citep{zhao2023unipcunifiedpredictorcorrectorframework}, CAB \citep{roy2026cabacceleratingflowdiffusion}, A-FloPS \citep{jin2026aflopsacceleratingdiffusionmodels}, and STORK \citep{tan2025storkfasterdiffusionflow} can be deployed to any pretrained model without further training. Several of them have a  similar structure. DDIM, the DPM-Solver family, DEIS, UniPC, A-FloPS and CAB split the  generative dynamics into a linear part, which can be integrated exactly, and a residual part, which can be integrated by a quadrature. The linear part is typically a scalar multiple of the identity. In A-FloPS the scalar is fixed by the noise schedule first and then refit to the trajectory at every step. However, these methods ignore the underlying data geometry and the fact that the distribution a sampler moves towards can be strongly anisotropic. 
Since the scalar linear part could not help in capturing the anisotropy, the neural network is exposed to the larger challenge of dealing with both the anisotropy and the residual part. 

In this work, we focus on the sampling side and ask:
\begin{tcolorbox}[
  colback=parchment,
  colframe=sandborder,
  boxrule=0.1pt,
  left=8pt,
  right=8pt,
  top=0.3pt,
  bottom=0.3pt
]
\centering
How can we leverage both the analytical structure of the flow dynamics and the geometry of the target data to design a better training-free sampler in the low-NFE regime?
\end{tcolorbox}
To tackle this question we use the insight that the first two moments of the target distribution induce a nontrivial part of the flow dynamics. The target covariance describes how the data vary along different directions. Replacing the target by a Gaussian with the same mean and covariance preserves this directional information while making the corresponding velocity field affine. Along each eigenvector of the covariance, the dynamics then expand or contract at a rate determined by the corresponding eigenvalue. For a general target, this affine field is the $L^2$-optimal affine approximation to the population velocity field at every time. It also admits a closed-form solution. Since the same target covariance determines the principal directions throughout the trajectory, the dynamics decouple along these directions and can be solved independently.

This gives us a principled affine component that does not need to be approximated numerically. We introduce  \emph{Affine--Residual Exponential Integrator} (AREX), which decomposes the learned velocity field into this moment-matched affine component and a residual term. AREX solves the affine part analytically and applies numerical quadrature only to the residual, so the direction-dependent dynamics captured by the target covariance do not have to be recovered using network evaluations. We use an Adams--Bashforth (AB) scheme that reuses residual evaluations from previous steps while requiring only one new network evaluation per step. Gaussian teleportation \citep{wang2024unreasonableeffectivenessgaussianscore} similarly exploits a Gaussian approximation, but only over the initial part of the trajectory before switching to a conventional solver; AREX retains the affine component over the full trajectory. We further extend this construction to conditional generation with AREX-C, using the within-condition covariance for the affine dynamics and estimating the condition-specific mean using the first network evaluation. Figure~\ref{fig:overview_visual} illustrates the resulting few-step behavior.

\begin{figure}[!t]
\centering
\includegraphics[width=0.9\textwidth]{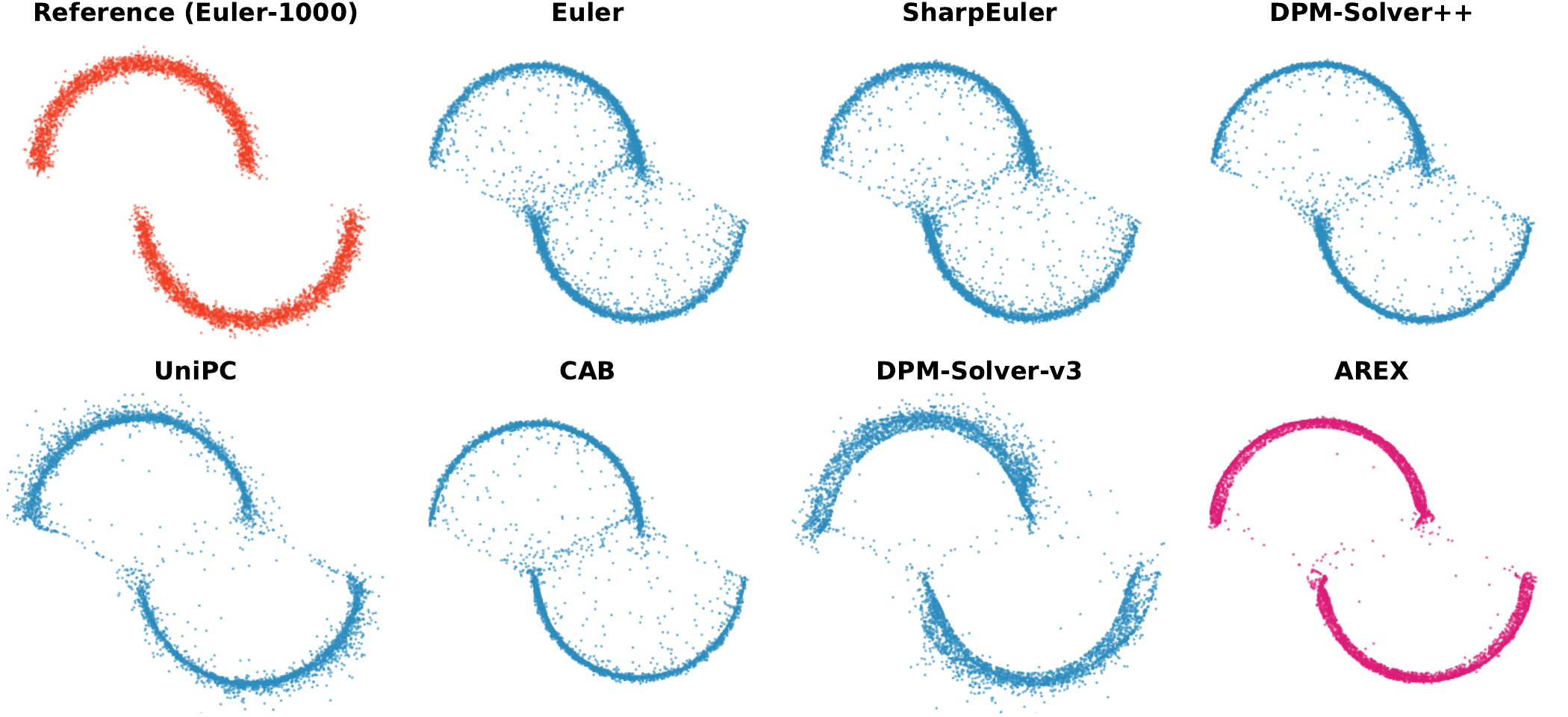}
\vspace{-0.4cm}
\caption{Unconditional sampling on the two-moons target at NFE $4$. The Euler-1000 solution is shown as a high-accuracy reference. At the same four-evaluation budget, AREX recovers the target geometry better than the competing training-free samplers.}
\label{fig:overview_visual}
\end{figure}

Our main contributions are as follows.
\begin{itemize}[leftmargin=10pt, nosep]

    \item \textbf{Method.} We introduce AREX, a training-free exponential integrator that uses the target mean and covariance to separate a moment-matched affine component from the learned velocity field. AREX solves this affine component analytically and applies Adams--Bashforth quadrature only to the remaining residual, requiring one new network evaluation per step. We further extend the construction to conditional generation with AREX-C using a shared within-condition covariance and a condition-specific mean estimated using the first network evaluation.

    \item \textbf{Theory.} 
    We provide theoretical results and organize them under  three principles to justify AREX. We show that the moment-matched affine field is $L^2$-optimal (Proposition \ref{prop:main-backbone}), connect the residual term to non-Gaussian score correction (Proposition \ref{prop:main_score}), derive a closed-form formula for the propagator for the affine component (Proposition \ref{prop:app-transport}),  quantify the resulting discretization error (Proposition \ref{prop:app-local}), and provide a distributional error bound for AREX-AB$p$ (Theorem \ref{thm:arex-abp-informal}).
    

    \item \textbf{Results.} We evaluate AREX and AREX-C on unconditional CIFAR-10, class-conditional ImageNet-256, and text-conditional MJHQ-30K generation. Across these settings, AREX gives the best FID among the compared training free samplers in the low NFE regime, from NFE $2$--$8$ on CIFAR-10 and ImageNet-256, and from NFE $4$--$10$ on MJHQ-30K, without retraining the flow.

\end{itemize}

\section{AREX: Affine--Residual Exponential Integrator for Sampling}
\label{sec:method}

In flow matching, we generate a sample by integrating the learned velocity field $v_\theta$: 
\begin{equation}
    \dot{x}_t = v_\theta(t,x_t),
    \qquad
    x_0 \sim \mathcal{N}(0,I_d).
\end{equation}
When only a few network evaluations are possible, the numerical solver must approximate the ODE trajectory using a coarse time grid. In this work, our aim is to address this problem, introducing a method to make it easier to integrate, without changing or retraining $v_\theta$. To do so, we borrow a standard idea behind exponential integrators: split the dynamics into a part that can be solved exactly and a residual that must be approximated numerically \citep{Hochbruck_Ostermann_2010}. The choice of this split determines how much work is left to the numerical solver.

We advocate using the first two moments of the target distribution to identify an affine component of the dynamics that has an exact solution. To see why these quantities are useful, recall that the source distribution is isotropic, while the target generally is not. Its covariance describes the directions in which the data vary and the scale of that variation. Along the flow matching path, these directions must therefore expand or contract by different amounts as the source is transported toward the target. This direction-dependent motion is already present before accounting for any finer non-Gaussian structure of the data. For a Gaussian target, the mean and covariance completely determine the flow, and the resulting velocity field is affine. More importantly, this affine flow can be integrated in closed form. This suggests using the Gaussian with the same mean and covariance as the target to capture the part of the learned dynamics that is explained by its global geometry.
AREX leverages this idea by integrating the corresponding affine flow exactly and applying numerical quadrature only to the residual between the affine and learned velocity fields.

\subsection{Target moments describe direction-dependent dynamics}

We now make this intuition precise for the linear flow matching path used throughout this work. 
Let $p_0 := \mathcal{N}(0,I_d)$ and $p_1$ denote the source and target distribution. Given  $X_0\sim p_0$, $X_1\sim p_1$, with $X_0$ and $X_1$ independent, we connect the two distributions through the linear interpolation
%
\begin{equation}
X_t=(1-t)X_0+tX_1.
\end{equation}
We denote the target mean by $m_1=\mathbb{E}[X_1]$ and its covariance by $\Sigma_1=\operatorname{Cov}(X_1)$. The covariance describes how the target varies across directions. To characterize this directional variation, we compute its eigendecomposition $\Sigma_1 = Q\Lambda Q^\top$,  with eigenvalues $\Lambda=\operatorname{diag}(\lambda_1,\ldots,\lambda_d)$.
The columns of $Q$ give the principal directions of the target, while $\lambda_j$ gives its variance along direction $j$. When these variances differ, the target is anisotropic, with different scales along different directions. This direction-dependent structure also appears along the flow path. In direction $j$, the variance of $X_t$ is
\begin{equation}
    \sigma_j(t)^2
    =
    (1-t)^2+t^2\lambda_j.
    \label{eq:sigma_direction}
\end{equation}
Thus, different directions expand or contract at different rates as the source is transported toward the target. A scalar linear term cannot capture this behavior because it applies the same scaling in every direction. This motivates choosing an affine component that captures this direction-dependent structure before applying numerical approximation. 

\subsection{A moment-matched affine flow}

To construct an affine component that captures this directional structure, we consider the Gaussian distribution $\mathcal{N}(m_1,\Sigma_1)$ with the same mean and covariance as the target. Under the same linear interpolation, the marginal velocity field of this Gaussian is affine:
\begin{equation}
v_G(t,x)
=
m_1 + K_t(x-tm_1),
\label{eq:vG}
\end{equation}
where
\begin{equation}
K_t
=
\bigl[t\Sigma_1-(1-t)I\bigr]
\bigl[(1-t)^2I+t^2\Sigma_1\bigr]^{-1}.
\label{eq:Kt}
\end{equation}
For a Gaussian target, $v_G$ is exactly the  velocity field $v^*(t,x) := \mathbb{E}[X_1 - X_0 | X_t = x]$ (see App. \ref{app_maintheory}). For a general target, we use it as the affine component of the learned dynamics. This choice depends only on the target mean and covariance and therefore captures the directional structure described above, while leaving the remaining non-Gaussian component to the residual. In Section~\ref{sec:theory}, we show that this choice is also the $L^2$-optimal affine approximation to the population velocity field.

The key property of this affine field is that its flow can be solved exactly. Although $K_t$ changes with time, it depends on $t$ only through functions of the same covariance matrix $\Sigma_1$. The eigendecomposition of $\Sigma_1$ therefore diagonalizes $K_t$ at every time, so the dynamics separate into one scalar equation along each principal direction of the target.
For $0\leq s\leq t<1$, this gives the propagator
\begin{equation}
F(t,s)
=
Q
\operatorname{diag}
\left(
\frac{\sigma_j(t)}{\sigma_j(s)}
\right)
Q^\top,
\label{eq:prop}
\end{equation}
where $\sigma_j(t)^2=(1-t)^2+t^2\lambda_j$ as defined above. Starting from $x(s)$, the affine flow is then
\begin{equation}
x(t)
=
tm_1
+
F(t,s)\bigl(x(s)-sm_1\bigr).
\label{eq:affine_transport}
\end{equation}
The factor $\sigma_j(t)/\sigma_j(s)$ gives the exact change in scale along direction $j$ between times $s$ and $t$.

\subsection{Integrating the residual}
AREX solves the affine dynamics determined by the target mean and covariance in closed form. The difference between the learned velocity field and this affine component defines the residual, which we approximate numerically:
\begin{equation}
r(t,x)
=
v_\theta(t,x)-v_G(t,x).
\label{eq:residual}
\end{equation}
Thus, $v_\theta=v_G+r$, and the residual captures the part of the learned dynamics not explained by the moment-matched affine field. Since the flow generated by $v_G$ is known exactly, we can use variation of constants to separate its contribution from that of the residual. Over a step from $s$ to $t$,
\begin{tcolorbox}[
  colback=parchment,
  colframe=sandborder,
  boxrule=0.1pt,
  left=8pt,
  right=8pt,
  top=0.3pt,
  bottom=0.3pt
]
\centering
\begin{equation}
x(t)=\underbrace{tm_1+F(t,s)(x(s)-sm_1)}_{\text{exact affine transport}}
     \;+\; \underbrace{\int_s^t F(t,\tau)\,r(\tau,x(\tau))\,\mathrm{d}\tau}_{\text{residual correction}} .
\label{eq:voc}
\end{equation}
\end{tcolorbox}
The first term is the exact affine solution derived above. The only quantity that remains to be approximated numerically is the integral of the residual. This decomposition is the main difference between AREX and scalar exponential samplers. Both approaches solve part of the dynamics analytically and approximate the remainder by quadrature. Scalar methods use a linear component that acts equally in every direction. AREX instead removes the direction-dependent affine dynamics determined by the target covariance, so that the numerical solver is applied to the remaining residual.

\subsection{AREX-AB: quadrature of the residual}

Eq. \eqref{eq:voc} reduces the sampling problem to approximating the residual integral. AREX uses one  network evaluation per step for this purpose and reuses residual evaluations from previous steps through an Adams--Bashforth (AB) scheme. 
This increases the polynomial order of the residual quadrature without additional network evaluations. 

Let $0=t_0<t_1<\cdots<t_N<1$ denote the sampling grid. On the step $[t_n,t_{n+1}]$, we evaluate the network at $\tau_n^\star=t_n+\rho(t_{n+1}-t_n)$, where $\rho\in[0,1)$. Starting from $x_n$, we first move to this evaluation point using the exact affine flow,
\begin{equation}
\widetilde{x}_n
=
\tau_n^\star m_1
+
F(\tau_n^\star,t_n)
\bigl(x_n-t_nm_1\bigr).
\label{eq:affine_node}
\end{equation}
We then evaluate the learned velocity once and compute the residual
\begin{equation}
r_n
=
v_\theta(\tau_n^\star,\widetilde{x}_n)
-
v_G(\tau_n^\star,\widetilde{x}_n).
\label{eq:residual_eval}
\end{equation}
The one-node version of AREX uses only $r_n$ to approximate the residual over the current step. AREX-AB2 additionally uses the residual from the previous step, while AREX-AB3 uses the previous two. Let $P_n(\tau)$ denote the corresponding AB interpolation polynomial formed from the available residuals. The update is
\begin{equation}
x_{n+1}
=
t_{n+1}m_1
+
F(t_{n+1},t_n)
\bigl(x_n-t_nm_1\bigr)
+
\int_{t_n}^{t_{n+1}}
F(t_{n+1},\tau)
P_n(\tau)
\,\mathrm{d}\tau.
\label{eq:arex_update}
\end{equation}
The first two terms propagate the affine dynamics exactly, while the last term approximates the residual contribution. Because AB2 and AB3 use residuals that have already been computed at earlier steps, all variants require only one new network evaluation per step. For the linear interpolation considered here, the resulting quadrature weights can be precomputed from the eigenvalues of $\Sigma_1$ and the sampling grid. 

\paragraph{Residual magnitude profile}
On a coarse time grid, the residual can change substantially in magnitude between consecutive evaluations. Directly interpolating the raw residual then asks the AB polynomial to capture both this systematic change in scale and the remaining variation of the residual. We account for the first effect with a residual magnitude profile measured once from model trajectories. In the eigenbasis of $\Sigma_1$, let $r_j=(Q^\top r)_j$ and define
\begin{equation}
\phi_j(t)
=
\left(
\mathbb{E}\,
r_j(t,x(t))^2
\right)^{1/2}.
\label{eq:phi}
\end{equation}
%
The function $\phi_j(t)$ measures the typical magnitude of the residual along direction $j$ at time $t$. We incorporate this  time-dependent variation into the quadrature weights: on each step, the usual AB interpolation is applied to the residual evaluations, while the integration kernel is modulated by the relative profile $\phi_j(\tau)/\phi_j(\tau_n^\star)$. The profile therefore accounts for systematic changes in residual scale without altering the residual evaluations used by the multistep rule. It is estimated once per checkpoint and does not require additional network evaluations during sampling. The resulting weights depend on the target covariance, sampling grid, and residual profile, and are precomputed before generation.

Algorithm~\ref{alg:AREX} summarizes AREX-AB$p$; AREX-C extends it to conditional sampling (see App. \ref{app_arexc}). 

\begin{algorithm}[t]
\caption{AREX-AB$p$. One network evaluation per step.}
\label{alg:AREX}
\begin{algorithmic}[1]
\REQUIRE velocity field $v_\theta$; target moments $m_1,\Sigma_1$; residual profile $\phi$; grid $t_0<\dots<t_N$
\STATE compute the eigendecomposition $\Sigma_1=Q\Lambda Q^\top$
\STATE precompute the affine propagators and residual quadrature weights
\STATE sample $x_0\sim\mathcal N(0,I_d)$
\FOR{$n=0,\ldots,N-1$}
    \STATE move $x_n$ to $\tau_n^\star$ using the affine flow in \eqref{eq:affine_node}
    \STATE evaluate $v_\theta(\tau_n^\star,\widetilde{x}_n)$ and compute $r_n$
    \STATE combine $r_n$ with the available stored residuals using AB$p$
    \STATE update $x_{n+1}$ using \eqref{eq:arex_update}
\ENDFOR
\RETURN $x_N$
\end{algorithmic}
\end{algorithm}

\section{Theoretical Results}
\label{sec:theory}

In this section, we provide  theoretical results to justify and gain insights into AREX. We  organize the main results under three principles here; full details and proofs are provided in App. \ref{app_maintheory}, and additional discussions are provided in App. ~\ref{app_adddiscussion}.
Throughout, we denote
$r^\star:=v^\star-v_G$,  $r_\theta:=v_\theta-v_G.$

\subsection{Principle I: Let data geometry define the solvable backbone}

Let $p_t$ denote the marginal distribution of $X_t$.
The affine component in \eqref{eq:vG} is not an arbitrary choice: it is optimally selected by the data statistics in the following sense.

\begin{proposition}[Optimality of affine backbone]
\label{prop:main-backbone} 
For $0 \leq t<1$, $v_G(t,\cdot)$ is the orthogonal projection of
$v^\star(t,\cdot)$ onto the space of affine vector fields in $L^2(p_t)$. Hence, for every affine field $\widetilde v$,
\begin{equation}
\|v^\star-\widetilde v\|_{L^2(p_t)}^2
=
\|v^\star-v_G\|_{L^2(p_t)}^2
+
\|v_G-\widetilde v\|_{L^2(p_t)}^2.
\label{eq:main-pythagoras}
\end{equation}
\end{proposition}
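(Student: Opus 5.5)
The plan is to characterize the $L^2(p_t)$-orthogonal projection onto affine fields $\widetilde v(x)=a+Bx$ by its normal equations, and then check that $v_G$ satisfies them. Since the affine fields form a finite-dimensional, hence closed, subspace of $L^2(p_t;\mathbb{R}^d)$, the projection exists. A field $u$ in this subspace is the projection of $v^\star(t,\cdot)$ if and only if the residual $v^\star-u$ is orthogonal to every affine field. Equivalently,
\begin{equation*}
\E\bigl[v^\star(t,X_t)-u(X_t)\bigr]=0,
\qquad
\E\bigl[(v^\star(t,X_t)-u(X_t))X_t^\top\bigr]=0 .
\end{equation*}
In words, the projection matches the first moment of $v^\star(t,X_t)$ and its cross-covariance with $X_t$. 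Once these two conditions hold for $u=v_G$, the identity \eqref{eq:main-pythagoras} is the usual Pythagorean expansion. We write $v^\star-\widetilde v=(v^\star-v_G)+(v_G-\widetilde v)$, note that the second summand is affine, and observe that the cross term vanishes by orthogonality.

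\textbf{Moments of $v^\star$.} By the tower property, $\E[v^\star(t,X_t)]=\E[X_1-X_0]=m_1$. Similarly, $\E[v^\star(t,X_t)X_t^\top]=\E[(X_1-X_0)X_t^\top]$. Using independence of $X_0$ and $X_1$, and $X_t=(1-t)X_0+tX_1$, this equals $t\,\E[X_1X_1^\top]-t\,m_1m_1^\top\cdot 0-(1-t)I$ plus mean terms. Concretely, after centering one gets $\Cov(v^\star(t,X_t),X_t)=t\Sigma_1-(1-t)I$. For the marginal itself we have $\E[X_t]=tm_1$ and $\Cov(X_t)=(1-t)^2I+t^2\Sigma_1=:C_t$. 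For $t<1$ the matrix $C_t$ is positive definite, so it is invertible.

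\textbf{Moments of $v_G$.} From \eqref{eq:vG}, $\E[v_G(t,X_t)]=m_1+K_t(tm_1-tm_1)=m_1$, which matches the mean condition. For the cross-covariance, $\Cov(v_G(t,X_t),X_t)=K_tC_t$. By \eqref{eq:Kt} this equals $t\Sigma_1-(1-t)I$, which matches as well. Since an affine $u$ with matching mean and matching covariance with $X_t$ satisfies both normal equations, $v_G$ is the projection. Uniqueness in $L^2(p_t)$ also follows, because the regression coefficient is uniquely $\Cov(v^\star,X_t)C_t^{-1}$.

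\textbf{Main obstacle.} The only genuine care is technical rather than conceptual. First, we need $v^\star(t,\cdot)\in L^2(p_t)$, which follows from $\E|X_1|^2<\infty$ via Jensen's inequality for the conditional expectation. Second, we need $C_t$ to be invertible for $t<1$, which holds because $(1-t)^2I\succ0$. We assume a finite second moment of $p_1$ as a standing hypothesis. Beyond that, the remaining work is the covariance bookkeeping in the moments of $v^\star$, where we center the variables to avoid sign errors from the $m_1$ terms.
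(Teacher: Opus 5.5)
Your proposal is correct and follows essentially the same route as the paper: show that $v^\star-v_G$ has zero mean and zero cross-covariance with $X_t$ (tower property plus $K_tC_t=t\Sigma_1-(1-t)I$), deduce orthogonality to every affine field in the closed finite-dimensional subspace, and read off the Pythagorean identity. The one blemish is the garbled intermediate expression for $\E[(X_1-X_0)X_t^\top]$, which before centering should read simply $t\,\E[X_1X_1^\top]-(1-t)I$; the centered value $t\Sigma_1-(1-t)I$ that you actually use is right.
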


The following result shows that the residual $r^\star$ is  a scaled non-Gaussian score correction.

\begin{proposition}[Score-based interpretation of the residual] \label{prop:main_score}
For $0<t<1$, let $\Sigma_t:=(1-t)^2I+t^2S$ and $g_t:=\mathcal N(tm_1,\Sigma_t)$, which is the moment-matched Gaussian of $p_t$. Then,
\begin{equation}
r^\star(t,x)
=
\frac{1-t}{t}
\left(
\nabla\log p_t(x)-\nabla \log g_t(x)
\right),
\label{eq:main-score-residual}
\end{equation}
and 
$\|r^\star(t,\cdot)\|_{L^2(p_t)}^2
=
\left(\frac{1-t}{t}\right)^2
\mathcal I(p_t |  g_t),$
where $\mathcal I(p_t | g_t) = \mathbb{E}_{X \sim p_t} \| \nabla \log p_t(X) - \nabla \log g_t(X) \|^2$ is the Fisher divergence between $p_t$ and $g_t$.
Moreover, for any fixed $t\in(0,1)$, $r^\star(t,\cdot)=0$ $p_t$-a.e. if and only if $p_1$ is Gaussian.
\end{proposition}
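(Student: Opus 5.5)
We prove Proposition~\ref{prop:main_score} (here $S$ denotes $\Sigma_1$) in three steps: express $v^\star$ through the score (Tweedie), express $v_G$ through the Gaussian score, and subtract. The $L^2$ identity then follows at once, and the Gaussian characterization reduces to an identification of Gaussian convolutions.

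\textbf{Step 1 (Tweedie form of $v^\star$).} Since $X_1 = (X_t-(1-t)X_0)/t$, we have
\[
X_1-X_0=\frac{X_t-X_0}{t},
\qquad\text{so}\qquad
v^\star(t,x)=\frac{x-\mathbb E[X_0\mid X_t=x]}{t}.
\]
Conditionally on $X_1$, $X_t$ is Gaussian with covariance $(1-t)^2I$. Hence $p_t=\mathrm{Law}(tX_1)*\mathcal N(0,(1-t)^2 I)$, and differentiating under the integral gives Tweedie's formula
\[
\mathbb E[(1-t)X_0\mid X_t=x]=-(1-t)^2\nabla\log p_t(x).
\]
Therefore
\[
v^\star(t,x)=\frac{x+(1-t)\nabla\log p_t(x)}{t}.
\]
The integrability needed for differentiating under the integral holds because $p_t$ is a Gaussian convolution with $t<1$.

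\textbf{Step 2 (the same for $g_t$).} Applying Step~1 to the Gaussian target $\mathcal N(m_1,S)$, whose path marginal is exactly $g_t$, gives
\[
v_G(t,x)=\frac{x+(1-t)\nabla\log g_t(x)}{t}.
\]
To tie this to \eqref{eq:vG}, we check it directly. With $\nabla\log g_t(x)=-\Sigma_t^{-1}(x-tm_1)$, the difference between the right-hand side and $m_1+K_t(x-tm_1)$ is a linear map applied to $(x-tm_1)$, namely $t^{-1}\bigl(I-(1-t)\Sigma_t^{-1}\bigr)-K_t$. Everything commutes with $S$, so we can verify it vanishes eigenvalue by eigenvalue:
\[
\frac{(1-t)^2+t^2\lambda-(1-t)}{t}=t\lambda-(1-t).
\]
This is exactly the numerator of $K_t$, so the map vanishes. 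Subtracting the formulas from Steps~1 and~2 yields \eqref{eq:main-score-residual}. Squaring and integrating against $p_t$ gives the Fisher-divergence identity by definition; its finiteness holds whenever $\mathcal I(p_t\mid g_t)<\infty$, which is the implicit standing assumption (e.g.\ finite second moments of $p_1$).

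\textbf{Step 3 (Gaussian characterization), the main step.} If $p_1$ is Gaussian, then $p_t=g_t$ and $r^\star=0$. Conversely, suppose $r^\star(t,\cdot)=0$ $p_t$-a.e. Since $p_t>0$ everywhere and smooth, $\nabla\log p_t=\nabla\log g_t$ everywhere, so $p_t=c\,g_t$ with $c=1$ by normalization. Then
\[
\mathrm{Law}(tX_1)*\mathcal N(0,(1-t)^2I)=\mathcal N\bigl(tm_1,(1-t)^2I+t^2S\bigr).
\]
Passing to characteristic functions, the Gaussian factor $e^{-(1-t)^2|\xi|^2/2}$ never vanishes and can be divided out. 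This gives
\[
\varphi_{tX_1}(\xi)=\exp\bigl(i t m_1^\top\xi-\tfrac{t^2}{2}\xi^\top S\xi\bigr),
\]
so $X_1\sim\mathcal N(m_1,S)$.

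The only delicate points are justifying Tweedie and ensuring that the characteristic-function division is legitimate. The latter is immediate because the Gaussian transform is nowhere zero, which is why I expect no real obstacle beyond careful regularity bookkeeping. This requires $t>0$, which is why $t=0$ is excluded.
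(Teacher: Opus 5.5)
Your proof is correct and follows essentially the same route as the paper. You use Tweedie's formula for the Gaussian convolution $p_t$ to get $v^\star(t,x)=x/t+\frac{1-t}{t}\nabla\log p_t(x)$; you phrase it through $\mathbb{E}[X_0\mid X_t=x]$ where the paper uses $\mathbb{E}[X_1\mid X_t=x]$, which is equivalent. Like the paper, you verify directly that $v_G$ has the same form with $g_t$ in place of $p_t$, and for the converse you use strict positivity and continuity to get $p_t=g_t$, then divide out the nowhere-vanishing Gaussian characteristic function.
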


Thus, since
$r_\theta = r^\star+(v_\theta-v^\star),$
the residual term evaluated by the sampler consists of both  the non-Gaussian correction and neural approximation error. 

\subsection{Principle II: Propagate the solvable backbone exactly}
It turns out that the affine dynamics are analytically tractable and admits a closed-form propagator. The key ingredient that make this happen is that the matrices $K_t$ share the same eigenbasis and commute across time, and each modal coefficient is available in closed-form.

\begin{proposition}[Closed-form formula for the nonautonomous propagator]
\label{prop:app-transport}
Let $0\le s\le t<1$. The solution of the ODE
$\dot{\tilde{x}} =v_G(t,\tilde{x})$ with value $\tilde{x}(s)$ at time $s$ is given by \eqref{eq:affine_transport} with $x$ set to $\tilde{x}$.
Moreover, $F(t,s)F(s,r)=F(t,r)$, 
$0\le r\le s\le t<1$, and the solution $x(t)$ of the generative ODE admits the representation in \eqref{eq:voc}.
\end{proposition}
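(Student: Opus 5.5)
The plan is to diagonalize everything in the eigenbasis of $\Sigma_1$ and then argue mode by mode. Write $\Sigma_1=Q\Lambda Q^\top$ with $Q$ orthogonal. Both factors in \eqref{eq:Kt} are polynomials in $\Sigma_1$, so
\[
K_t=Q\,\operatorname{diag}(k_j(t))\,Q^\top,
\qquad
k_j(t)=\frac{t\lambda_j-(1-t)}{(1-t)^2+t^2\lambda_j}.
\]
The second factor is invertible for $t<1$ because $(1-t)^2>0$ and $\lambda_j\ge 0$. Hence the matrices $K_t$ commute for all $t$.

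Next, the key scalar identity. Differentiating $\sigma_j(t)^2=(1-t)^2+t^2\lambda_j$ gives $2\sigma_j\dot\sigma_j=-2(1-t)+2t\lambda_j$. Therefore
\[
\frac{\dot\sigma_j(t)}{\sigma_j(t)}=k_j(t),
\]
and $\sigma_j>0$ on $[0,1)$.

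Now take $y(t)=\tilde x(t)-tm_1$. Since $v_G(t,\tilde x)=m_1+K_t y$, we get $\dot y=\dot{\tilde x}-m_1=K_t y$. In coordinates $z=Q^\top y$ this decouples into $\dot z_j=k_j(t)z_j$. Its unique solution with data at $s$ is
\[
z_j(t)=z_j(s)\exp\!\Bigl(\int_s^t k_j\Bigr)=z_j(s)\,\frac{\sigma_j(t)}{\sigma_j(s)}.
\]
Transforming back gives $y(t)=F(t,s)y(s)$, which is exactly \eqref{eq:affine_transport}. Uniqueness holds because $K_t$ is continuous on $[0,1)$, so Picard–Lindelöf applies.

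The cocycle property $F(t,s)F(s,r)=F(t,r)$ follows from the telescoping product of scalar ratios $\frac{\sigma_j(t)}{\sigma_j(s)}\cdot\frac{\sigma_j(s)}{\sigma_j(r)}=\frac{\sigma_j(t)}{\sigma_j(r)}$ under the common $Q$. The same computation shows $\partial_t F(t,\tau)=K_tF(t,\tau)$ and $F(t,t)=I$.

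For \eqref{eq:voc}, write $v_\theta=v_G+r$ and set $y(t)=x(t)-tm_1$. Then
\[
\dot y=K_t\,y+r(t,x(t)).
\]
The main point is to get variation of constants right for a time-dependent matrix. Commutation is not strictly needed once we have the fundamental-matrix identities above, but it makes $F$ explicit.

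Define
\[
w(t)=F(t,s)y(s)+\int_s^t F(t,\tau)\,r(\tau,x(\tau))\,d\tau.
\]
Differentiate using Leibniz' rule, $F(t,t)=I$ and $\partial_tF=K_tF$:
\[
\dot w=K_t w+r(t,x(t)),\qquad w(s)=y(s).
\]
So $w$ and $y$ solve the same linear inhomogeneous ODE with the same initial data, where the forcing $r(t,x(t))$ is treated as a given continuous function of $t$ along the trajectory. By uniqueness $w=y$, which is \eqref{eq:voc}.

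The only technical caveat is regularity. We need $r(\cdot,x(\cdot))$ to be continuous or integrable so that the integral is defined and Leibniz' rule applies. This holds whenever $x(\cdot)$ is a classical solution on $[s,t]\subset[0,1)$ and $v_\theta$ is continuous, and I would state that assumption explicitly.
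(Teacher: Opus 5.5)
Your proposal is correct and follows essentially the same route as the paper's proof. Both diagonalize $K_t$ in the eigenbasis $Q$ of $\Sigma_1$ and use $\frac{d}{dt}\log\sigma_j(t)=k_j(t)$ to solve each decoupled mode. Both obtain $F(t,s)F(s,r)=F(t,r)$ from the telescoping ratios, and get \eqref{eq:voc} by variation of constants for $\dot y=K_ty+R_\theta(t,y)$ with $y=x-tm_1$. The only difference is that you verify the variation-of-constants formula explicitly, via $\partial_tF(t,\tau)=K_tF(t,\tau)$, $F(t,t)=I$, and uniqueness with the forcing taken along the trajectory, and you state the regularity assumptions, whereas the paper simply invokes it.
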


Therefore, the backbone  dynamics can be propagated without discretization error, and numerical approximation is needed only for the residual term. In view this,  AREX can be an improved scheme compared to standard numerical schemes. Standard Euler discretizes the full velocity field, while frozen exponential
Euler (FEE) integrates the locally frozen affine component exactly but still
approximates its temporal variation. This distinction is already notable in
the leading local errors. To see this, consider the left-node AREX ($\rho = 0$) for simplicity; extension to general $\rho$ is straightforward. Let  $y(t):=x(t)-tm_1,$
$R_\theta(t,y):=r_\theta(t,tm_1+y),$  \begin{equation}
f(t,y):=K_ty+R_\theta(t,y),
\qquad
\mathcal D_fR_\theta
:=
\partial_tR_\theta+D_yR_\theta\,f,
\end{equation}
For a step of size $h$, their local  errors  are (see App. \ref{app_maintheory} for details), respectively:
\begin{align}
d_{\text{Euler}}
&=
\frac{h^2}{2}
\left[
(K_t'+K_t^2)y
+
K_tR_\theta
+
\mathcal D_fR_\theta
\right]
+\mathcal O(h^3),
\\
d_{\mathrm{FEE}}
&=
\frac{h^2}{2}
\left[
K_t'y
+
\mathcal D_fR_\theta
\right]
+\mathcal O(h^3), \ \qquad  
d_{\mathrm{AREX}} =
\frac{h^2}{2}
\mathcal D_fR_\theta
+\mathcal O(h^3).
\end{align}

Thus Euler discretizes both the affine backbone and the residual; FEE removes the contributions of the frozen affine operator but
retains the effect of its temporal variation; AREX eliminates the explicit pure-backbone terms from the leading local error, leaving only the material
derivative of the residual.
This discussion corresponds to AREX--1, which uses one left-node residual
evaluation per step. AREX--AB$p$ replaces this constant-in-time residual
approximation with a $p$-point AB extrapolant, applying
higher-order reconstruction only to the residual term rather than to the
full velocity field.

\subsection{Principle III: Spend numerical budget only on the residual}
Once the backbone is propagated exactly,  the solver error relative to the exact learned ODE is controlled  by the residual dynamics not resolved numerically due to finite NFEs.

Consider one step from $s$ to $t=s+h$ and an evaluation time
$\tau^\star=s+\rho h$, $\rho\in[0,1).$
Define the affine-only stage
$\widetilde y(\tau)
:=
F(\tau,s)y(s),$
and the one-node AREX approximation
\begin{equation}
y^+
=
F(t,s)y(s)
+
\Omega^{(0)}R_\theta\!\left(\tau^\star,\widetilde y(\tau^\star)\right),
\qquad
\Omega^{(0)}
:=
\int_s^tF(t,\tau)\,d\tau.
\label{eq:app-one-node}
\end{equation}

\begin{proposition}[Error bound]
\label{prop:app-local}
Let
$\Phi
:=
\sup_{s\le u\le v\le t}\|F(v,u)\|,$
$R_{\max}
:=
\sup_{s\le\tau\le t}
\|R_\theta(\tau,y(\tau))\|,$
and
$L_g
:=
\sup_{s\le\tau\le t}
\left\|
\frac{d}{d\tau}R_\theta(\tau,y(\tau))
\right\|$ (which bounds the material derivative of the residual discussed in Principle II). 
Assume that $R_\theta(\tau^\star,\cdot)$ is $L_x$-Lipschitz on the segment between $y(\tau^\star)$ and $\widetilde y(\tau^\star)$. Then
\begin{equation}
\|y(t)-y^+\|
\le
\Phi h^2
\left[
\frac{\rho^2+(1-\rho)^2}{2}L_g
+
\rho\Phi L_xR_{\max}
\right].
\label{eq:app-local-bound}
\end{equation}
\end{proposition}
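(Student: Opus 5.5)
Starting from the variation-of-constants representation in Proposition~\ref{prop:app-transport}, I rewrite everything in the shifted variable $y(t)=x(t)-tm_1$. Since $F(t,\tau)$ is the propagator of $\dot y=K_\tau y$, the exact solution satisfies
\begin{equation*}
y(t)=F(t,s)y(s)+\int_s^t F(t,\tau)\,g(\tau)\,d\tau,\qquad g(\tau):=R_\theta(\tau,y(\tau)).
\end{equation*}
Subtracting the one-node update \eqref{eq:app-one-node}, the affine parts cancel exactly. The error is then
\begin{equation*}
y(t)-y^+=\int_s^t F(t,\tau)\bigl[g(\tau)-g(\tau^\star)\bigr]d\tau+\Omega^{(0)}\bigl[R_\theta(\tau^\star,y(\tau^\star))-R_\theta(\tau^\star,\widetilde y(\tau^\star))\bigr].
\end{equation*}
I bound the two terms separately: the first is a quadrature error, and the second is a stage (node-position) error.

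For the quadrature term, I use $\|g(\tau)-g(\tau^\star)\|\le L_g|\tau-\tau^\star|$ and $\|F(t,\tau)\|\le\Phi$. This gives the bound $\Phi L_g\int_s^t|\tau-\tau^\star|\,d\tau$. The integral equals $\frac{h^2}{2}\bigl(\rho^2+(1-\rho)^2\bigr)$ by splitting at $\tau^\star$. This produces the first bracketed term of \eqref{eq:app-local-bound}.

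For the stage term, $\|\Omega^{(0)}\|\le\Phi h$, and the Lipschitz hypothesis gives a factor $L_x\|y(\tau^\star)-\widetilde y(\tau^\star)\|$. The remaining task is to bound the deviation of the true trajectory from the affine-only stage over $[s,\tau^\star]$. Applying variation of constants once more on $[s,\tau^\star]$ gives
\begin{equation*}
y(\tau^\star)-\widetilde y(\tau^\star)=\int_s^{\tau^\star}F(\tau^\star,\tau)\,g(\tau)\,d\tau.
\end{equation*}
Its norm is at most $\Phi R_{\max}\rho h$. Multiplying the factors gives $\rho\Phi^2L_xR_{\max}h^2$, which is the second term.

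The only delicate step is the stage term. There one must check that the Lipschitz assumption is applied exactly on the segment between $y(\tau^\star)$ and $\widetilde y(\tau^\star)$, as stated. One must also check that the sup defining $\Phi$ covers both $F(t,\tau)$ and $F(\tau^\star,\tau)$; it does, because $s\le\tau\le\tau^\star\le t$. Everything else is routine. When $\rho=0$ the stage term vanishes, consistent with the left-node local error discussed in Principle~II.
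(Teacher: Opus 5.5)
Your proposal is correct and follows the paper's proof step by step. Both use the variation-of-constants error representation, the add-and-subtract of $g(\tau^\star)$ to split the error into a temporal quadrature term and a stage term, and a second variation of constants on $[s,\tau^\star]$ to bound $\|y(\tau^\star)-\widetilde y(\tau^\star)\|\le\rho h\Phi R_{\max}$; factoring the stage term through $\Omega^{(0)}$ with $\|\Omega^{(0)}\|\le\Phi h$ is the paper's bound $\int_s^t\|F(t,\tau)\|\,d\tau\le\Phi h$ written more compactly.
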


The two terms in \eqref{eq:app-local-bound} correspond to temporal quadrature error and stage-state error respectively. For $\rho>0$, the residual is evaluated at the affine predictor $\widetilde y(\tau^\star)$ rather than  $y(\tau^\star)$; their $\mathcal O(h)$ mismatch contributes an additional $\mathcal O(h^2)$ error through the state dependence of the residual. 
The first term is minimized by $\rho = 1/2$ whereas the second one by $\rho=0$. 
We tune $\rho$ via a grid search (see App. \ref{app:exp}) in our implementation to balance the two competing error terms.

For the one-node rule, moving the residual evaluation away from the left endpoint can improve its temporal placement, but introduces an error because the residual is evaluated at an affine-only predictor rather than the true intermediate state. The following theorem summarizes this tradeoff.

\begin{theorem}[Informal: Distributional error bound for unprofiled AREX--AB$p$] \label{thm:arex-abp-informal} 

Let $G(t):=R_\theta(t,y(t))$ 
denote the residual term, and suppose  $M_p := \sup_{0\le t\le T} \|G^{(p)}(t)\|_{L^2} <\infty.$
Under certain stability and Lipschitz assumptions, AREX--AB$p$ with stage $u_n=t_n+\rho h$ satisfies, up to constants independent of $h$: 
\begin{equation} 
W_2 \left( \widehat q_T^{p,\rho}, q_T^\theta \right) \lesssim \mathcal A_{\mathrm{start}} + c_{p,\rho}TM_ph^p + \rho h M_0, \label{eq:arex-abp-informal} 
\end{equation} 
where 
\begin{equation} 
M_0 := \sup_{0\le t\le T}\|G(t)\|_{L^2}, \qquad c_{p,\rho} := \frac{1}{p!} \int_0^1 \prod_{j=0}^{p-1} |\xi-\rho+j|\,d\xi, \end{equation} 
and $\mathcal A_{\mathrm{start}}$ is the residual reconstruction error introduced by the startup steps. 
\end{theorem}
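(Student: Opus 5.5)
We will prove the bound by coupling: run the exact learned ODE and the AREX--AB$p$ recursion from the same initial draw $x_0\sim\mathcal N(0,I_d)$. Since $W_2$ is at most the $L^2$ distance under any coupling, it suffices to bound $\|y(T)-y_N\|_{L^2}$, where $y_n$ is the AREX iterate in the shifted variable $y=x-tm_1$. Working in $y$ makes the backbone linear, $\dot y=K_ty+R_\theta(t,y)$. By Proposition~\ref{prop:app-transport}, both trajectories obey a variation-of-constants formula with the same exact propagator $F$. Writing $e_n:=y(t_n)-y_n$, we get
\begin{equation*}
e_{n+1}=F(t_{n+1},t_n)e_n+\int_{t_n}^{t_{n+1}}F(t_{n+1},\tau)\bigl[G(\tau)-P_n(\tau)\bigr]\,d\tau ,
\end{equation*}
where $P_n$ is the AB$p$ extrapolant built from the computed residuals. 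Here the backbone contributes no error at all: the only source is the mismatch $G-P_n$.

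Next we split $G-P_n$ into three pieces:
\begin{equation*}
G-P_n=(G-\Pi_n G)+(\Pi_n G-\Pi_n\widehat G)+(\Pi_n\widehat G-P_n).
\end{equation*}
Here $\Pi_n$ denotes the AB$p$ interpolant through the exact values $G(u_{n-j})$, $j=0,\dots,p-1$, taken at the stage nodes $u_k=t_k+\rho h$. The quantity $\widehat G$ collects the residuals evaluated at the true states $y(u_k)$ but paired with the computed trajectory. Concretely, $\Pi_n G-P_n$ is the AB combination of the differences $R_\theta(u_k,y(u_k))-R_\theta(u_k,\widetilde y_k)$.

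The three pieces are bounded as follows.

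\emph{Interpolation term.} The standard Newton remainder gives $G(\tau)-\Pi_nG(\tau)=\frac{1}{p!}G^{(p)}(\eta)\prod_{j}(\tau-u_{n-j})$. Substituting $\tau=t_n+\xi h$ gives $\tau-u_{n-j}=h(\xi-\rho+j)$. Integrating over the step with $\|F\|\le\Phi$ yields the local contribution $\Phi\, c_{p,\rho}M_ph^{p+1}$. Minkowski's inequality in $L^2$ lets the sup-norm bound $M_p$ pass through the integral.

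\emph{Stage-state term.} By assumed Lipschitz continuity, this term is bounded by $L_x$ times $\|y(u_k)-\widetilde y_k\|$. That gap splits into the propagated error $\Phi\|e_k\|$ and the affine-predictor defect $\|\int_{t_k}^{u_k}F(u_k,\tau)G(\tau)d\tau\|\le \Phi\rho hM_0$, exactly as in Proposition~\ref{prop:app-local}. Multiplied by the step length $h$ and the bounded AB weights, this gives local terms $C(\|e_k\|h+\rho h^2M_0)$.

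\emph{Startup.} The first $p-1$ steps use lower-order reconstructions, and their accumulated error is collected into $\mathcal A_{\mathrm{start}}$.

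To close the argument we use a discrete Gr\"onwall inequality. The assumed stability condition is the uniform bound $\sup\|F(v,u)\|\le\Phi$ (available from the closed form, whose modal ratios $\sigma_j(t)/\sigma_j(s)$ are bounded on $[0,T]$ with $T<1$). Together with the Lipschitz bound on the AB combination, it gives
\begin{equation*}
\|e_{n+1}\|\le(1+Ch)\max_{k\le n}\|e_k\|+\Phi c_{p,\rho}M_ph^{p+1}+C\rho h^2M_0 .
\end{equation*}
Summing over $N=T/h$ steps gives $\|e_N\|\lesssim \mathcal A_{\mathrm{start}}+c_{p,\rho}TM_ph^p+\rho hM_0$, with constants of the form $e^{CT}$ that do not depend on $h$.

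I expect the main obstacle to be the stage-state term inside the multistep recursion. Each stored residual was evaluated at an affine predictor built from a past, already perturbed iterate, so the errors $e_{n-j}$ at several earlier steps feed back through the AB weights. Those weights can be large and have mixed signs. The Gr\"onwall step therefore needs a uniform bound on the AB coefficients, which here are integrals of Lagrange basis functions weighted by $F$, and I would make this part of the stated stability assumption. A second, smaller point is that the $\rho hM_0$ term only comes out at order $h$ globally if the predictor defect is genuinely $O(h^2)$ per step; this requires the sup of $\|G\|$ in $L^2$, and not merely a pathwise bound, to enter linearly.
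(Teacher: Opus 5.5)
Your plan is the paper's argument in all the main steps. The shared pieces are:
\begin{itemize}
\item coupling through the common initial draw;
\item the error recursion from variation of constants with the exact propagator;
\item splitting $G-P_n$ into the oracle interpolation error at the stage nodes $u_k=t_k+\rho h$ plus an AB combination of the stage differences $G(u_k)-R_\theta(u_k,\widetilde y_k)$;
\item the bound $\|y(u_k)-\widetilde y_k\|_{L^2}\le\|F(u_k,t_k)\|\,\|e_k\|_{L^2}+\rho h\,\kappa_T M_0$;
\item a Lebesgue-constant assumption on the Lagrange basis (the paper's $\bar\beta_{p,\rho}$) in place of your bound on the AB weights.
\end{itemize}
Your middle term $\Pi_nG-\Pi_n\widehat G$ is not needed, and its $\widehat G$ is not well defined. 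By linearity of $\Pi_n$, the two-term split you then actually use is exactly the paper's.

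The step that does not go through as you justify it is the closing recursion. A uniform bound $\sup\|F(v,u)\|\le\Phi$ only gives $\|F(t_{n+1},t_n)e_n\|\le\Phi\|e_n\|$, and $\Phi$ is typically well above $1$ (it approaches $\sqrt{1+\lambda_{\max}}$ as $T\uparrow1$). Iterating a one-step inequality with factor $\Phi$ gives $\Phi^N$, which blows up as $h\to0$.

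The factor $1+Ch$ you wrote is true, but it needs a separate one-step estimate. For example, with $k_j=\sigma_j'/\sigma_j$, one has $\|F(t+h,t)\|=\max_j\exp\bigl(\int_t^{t+h}k_j\bigr)\le e^{Ch}$ where $C=\max_j\sup_{\tau\le T}k_j(\tau)^+<\infty$.

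The paper avoids this differently. It unrolls the recursion from $e_0=0$ using $F(t_i,t_{n+1})F(t_{n+1},t)=F(t_i,t)$, so each local defect reaches $t_i$ through a single $F(t_i,t)$ of norm at most $\kappa_T$. Gr\"onwall is then applied only to the Lipschitz feedback term $\kappa_T\kappa_\rho\bar\beta_{p,\rho}L_Rh\sum_{n<i}E_n$. This keeps the linear part's contribution at $\kappa_T$ instead of $e^{CT}$. The difference matters when $\lambda_{\max}$ is large, since $\sup_\tau k_j(\tau)$ grows like $\sqrt{\lambda_j}/2$ while $\kappa_T\le\sqrt{1+\lambda_{\max}}$. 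Both constants are independent of $h$, so either route proves the stated bound.

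Two smaller repairs are needed.
\begin{itemize}
\item The mean-value remainder $\frac{1}{p!}G^{(p)}(\eta)\prod_j(\cdot)$ does not exist for an $L^2$-valued $G$. A pathwise $\eta(\omega)$ would require control of $\mathbb{E}\sup_t|G^{(p)}(t)|^2$ rather than $M_p$. Use the Hermite--Genocchi integral form of the divided difference together with Minkowski, or do as the paper does: apply the scalar remainder to $\Lambda\circ G$ for each unit functional $\Lambda$ and take the supremum.
\item The Lipschitz constant must be global in the state (the paper's $L_R$). Since $\widetilde y_k$ is random, the $L^2$ stage-state estimate needs the bound to hold for every realization.
\end{itemize}
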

The detailed theorem,  proof, the $\rho=0$ specialization, and explicit bounds  are given in App.~\ref{app_maintheory}.

The error bound depends on the {\it temporal regularity of the residual}
$G(t)$ through
$M_p$,
together with an additional off-grid stage error when $\rho>0$. 
The $\mathcal O(M_ph^p)$ term measures temporal reconstruction of the residual, while the $\mathcal O(\rho hM_0)$ term is the accumulated error from evaluating the residual at an affine-only off-grid stage.
Thus $\rho>0$ trades temporal placement against stage-state accuracy. 
At $\rho=0$, the stage-state term vanishes, and with an order-$p$ startup,  AREX-AB$p$  has global order $p$ relative to the learned ODE. For fixed $\rho>0$, the bound  guarantees only first-order asymptotic accuracy in general, although an interior stage may still be beneficial at small NFEs. 


\section{Empirical Results}
\label{sec:exp}
In this section, we provide empirical results to evaluate the effectiveness of AREX(-C)  on CIFAR-10, ImageNet-256, and MJHQ-30K. All experiments are run on an NVIDIA GH200 GPU (120GB). We further provide the details, wall clock analysis, ablation study and additional results in App. ~\ref{app:exp}.

\subsection{CIFAR-10: Unconditional Generation on Pixel Space}
For  CIFAR-10, we use the I-CFM model from \cite{tong2024improvinggeneralizingflowbasedgenerative} as backbone for the unconditional model in pixel space, and evaluate FID using 50K generated samples \citep{krizhevsky2009learning}.

\begin{table}[t]
\small 
\caption{FID results for CIFAR-10. AREX is best from NFE 2 to 16; DPM-Solver-v3 is best at NFE 32.}
\label{tab:cifar}
\vspace{0.1cm}
\centering
\begin{tabular}{lrrrrr}
\toprule
Sampler & NFE 2 & NFE 4 & NFE 8 & NFE 16 & NFE 32 \\
\midrule
Euler            & 166.24 & 51.88 & 17.46 & 9.15 & 6.28 \\
DPM-Solver++     & 166.24 & 43.18 & 11.48 & 6.74 & 4.99 \\
UniPC            & 166.24 & 36.53 & 10.49 & 7.14 & 5.29 \\
A-FloPS          & 145.63 & 29.27 & 10.29 & 7.14 & 6.02 \\
CAB              & 145.05 & 23.77 & 8.91 & 5.85 & 4.23 \\
DPM-Solver-v3    & 112.04 & 24.69 & 6.95 & 3.92 & \textbf{3.55} \\
STORK-4          & 141.40 & 23.21 & 5.73 & 3.72 & 3.59 \\
\midrule
AREX (one node)   & 54.63 & 26.13 & 11.45 & 5.34 & 3.66 \\
AREX-AB2          & \textbf{39.24} & \textbf{12.20} & \textbf{5.39} & \textbf{3.50} & 3.64 \\
AREX-AB3          & \textbf{39.24} & 12.35 & 5.58 & 3.60 & 3.69 \\
\bottomrule
\end{tabular}
\end{table}

Table ~\ref{tab:cifar} presents the FID evaluation results. For NFE 2, the AREX-AB2 is $4.2\times$ better than Euler, DPM-Solver++ and UniPC, $3.6$-$3.7\times$ better than STORK-4, CAB and A-FloPS, and $2.9\times$ better than DPM-Solver-v3. At NFE 4 it beats DPM-Solver++ by $3.5\times$, UniPC by $3\times$, A-FloPS by $2.4\times$, DPM-Solver and CAB by $2\times$ and STORK-4 by $1.9\times$. At NFE 8 the closest gap is STORK-4 at $0.38$, followed by DPM-Solver-v3 at $1.56$. At NFE 16 the gap of STORK-4 is $0.2$, and $0.4$ for DPM-Solver-v3. At NFE 32 STORK-4 is ahead by $0.05$ and DPM-Solver-v3 is ahead by $0.09$. AB2 and AB3 are within $0.2$ of each other at every NFE. 
Thus, advantage of AREX is largest at low NFEs and diminishes as the discretization is refined, consistent with Proposition~\ref{prop:eulererror}.

\subsection{ImageNet-256: Class-Conditional Generation on Latent Space}

\begin{table}[t]
\small 
\caption{FID results for ImageNet-256, with SiT-XL/2, class-conditional with CFG at SiT's published scale $w=1.5$. AREX-C-AB3 is best at NFE $2$, $4$ and $8$. At NFE $16$, UniPC, DPM-Solver-v3, CAB and AREX-C-AB3 lie within $0.13$ FID of each other, and at NFE $32$ every method but Euler and the one-node rule lies within $0.37$ FID.
}
\vspace{0.1cm}
\label{tab:sitcfg}
\centering
\begin{tabular}{lrrrrr}
\toprule
Sampler & NFE 2 & NFE 4 & NFE 8 & NFE 16 & NFE 32 \\
\midrule
Euler            & 154.29 & 53.98 & 11.02 & 4.17 & 2.65 \\
DPM-Solver++     & 154.28 & 31.37 & 4.20 & 2.56 & 2.12 \\
UniPC            & 154.28 & 22.08 & 3.57 & 2.22 & 2.05 \\
A-FloPS          & 129.73 & 20.74 & 3.34 & 2.39 & 2.39 \\
CAB              & 132.94 & 12.37 & 4.04 & 2.26 & \textbf{2.02} \\
STORK-4          & 132.95 & 12.59 & 3.98 & 2.48 & 2.19 \\
DPM-Solver-v3    & 92.20 & 9.77 & 3.07 & \textbf{2.13} & 2.12 \\
\midrule
AREX-C (one node) & 134.79 & 34.14 & 9.70 & 4.14 & 2.72 \\
AREX-C-AB2        & 84.15 & 9.58 & 3.23 & 2.38 & 2.17 \\
AREX-C-AB3        & \textbf{84.15} & \textbf{8.18} & \textbf{3.02} & 2.24 & 2.12 \\
\bottomrule
\end{tabular}
\end{table}
We use SiT-XL/2 \citep{ma2024sitexploringflowdiffusionbased} as backbone for the class-conditional task in latent space with classifier free guidance (CFG) at SiT's CFG with $w=1.5$. The model is conditional, so we use the AREX-C sampler. We evaluate FID using 50k samples \citep{russakovsky2015imagenetlargescalevisual}, as well as sFID, IS precision and recall (see App. \ref{app:exp}). 

The results are  in Table ~\ref{tab:sitcfg}. AREX-C-AB3 is the best performing sampler at NFE 2, 4, 8, improving by $1.6$ FID over DPM-Solver-V3 at NFE 4 and by $4.2$ over CAB. From NFE 16 the gaps between different method is small. At NFE 16, gaps between the leading four methods lie within $0.13$ FID. At NFE 32, gaps between all the methods besides Euler and the one-node rule are within $0.37$. Unlike on CIFAR-10, the one-node method is no better than the baselines here. Thus the gain on this backbone requires the multistep integration of the residual together with the exact affine transport.

\subsection{MJHQ-30K: Text-Conditional Generation with Guidance on Latent Space}
\begin{figure}[t]
\centering
\includegraphics[width=0.95\textwidth]{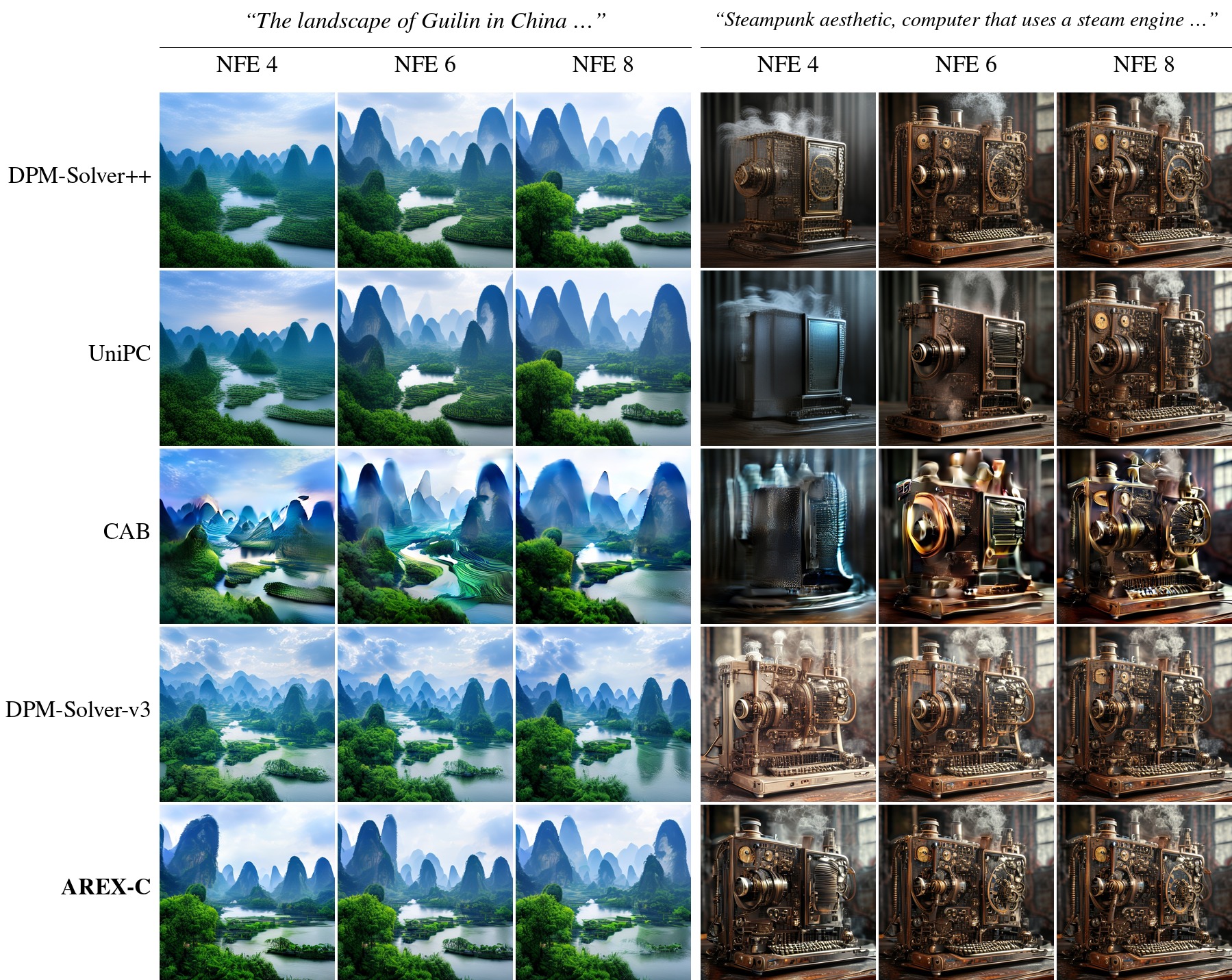}
\caption{Samples generated by different sampling schemes with SANA-0.6B, MJHQ-30K prompts, $w=4.5$, 
at NFE 4, 6 and 8, using the configurations of Table~\ref{tab:mjhq}.}
\label{fig:qualsana}
\end{figure}

\begin{table}[!h]
\small 
\caption{FID results for MJHQ-30K, with SANA-0.6B 512px, CFG $w=4.5$. STORK-4 is run in its published configuration at every NFE; NFE 4–6 lie below the range reported in its paper. Among the samplers in the table, AREX-C-AB2 is best at every NFE, including NFE 10.}
\label{tab:mjhq}
\vspace{0.1cm}
\centering
\begin{tabular}{lrrrrrrr}
\toprule
Sampler & NFE 4 & NFE 5 & NFE 6 & NFE 7 & NFE 8 & NFE 9 & NFE 10 \\
\midrule
DPM-Solver++(2M) & 23.39 & 14.96 & 11.59 & 9.84 & 8.85 & 8.24 & 7.99 \\
UniPC            & 25.25 & 15.47 & 12.06 & 10.31 & 9.36 & 8.69 & 8.38 \\
A-FloPS          & 62.48 & 33.88 & 21.72 & 15.68 & 12.68 & 10.98 & 9.91 \\
CAB              & 60.00 & 48.88 & 41.68 & 32.20 & 23.99 & 18.05 & 13.45 \\
STORK-4          & 73.34 & 37.30 & 16.33 & 9.68 & 7.70 & 7.07 & 6.70 \\
DPM-Solver-v3    & 10.56 & 8.80 & 7.95 & 7.58 & 7.11 & 6.98 & 6.87 \\
\midrule
AREX-C (one node) & 11.02 & 8.60 & 7.70 & 7.23 & 6.92 & 6.81 & 6.73 \\
AREX-C-AB2        & \textbf{8.15} & \textbf{6.86} & \textbf{6.53} & \textbf{6.39} & \textbf{6.33} & \textbf{6.38} & \textbf{6.35} \\
AREX-C-AB3        & 9.91 & 7.51 & 6.85 & 6.66 & 6.52 & 6.51 & 6.44 \\
\bottomrule
\end{tabular}
\vspace{-0.1cm}
\end{table}

For the MJHQ-30K  task, 
we use the SANA-0.6B 512px \citep{xie2024sanaefficienthighresolutionimage}  backbone and CFG with $w=4.5$. We evaluate FID using 30k samples \citep{li2024playgroundv25insightsenhancing}, following \citep{tan2025storkfasterdiffusionflow}. 

The results are in Table ~\ref{tab:mjhq}. and Fig ~\ref{fig:qualsana} shows samples at NFE 4, 6 and 8. Among the baselines in Table ~\ref{tab:mjhq}, AREX-C-AB2 is the best at every NFE, and AREX-C-AB3 is also ahead of every baseline at every NFE. The closest baseline is DPM-Solver-v3 up to NFE 9, and the gap is 2.4 FID at NFE 4. At NFE 10 the closest is STORK-4 with the FID gap of 0.35. 
On this backbone, the guided residual is both large and varies rapidly along the trajectory, precisely the regime targeted by the residual magnitude profile in Section~\ref{sec:method}. The stronger performance of AB2 over AB3 at every NFE is also consistent with this rapid variation: its shorter history is less sensitive to poorly resolved temporal changes. In contrast, AB3 performs better on SiT-XL/2, where the residual  appears sufficiently smooth to benefit from the higher-order multistep approximation.

\section{Conclusion}
\label{sec:conclusion}

We introduced AREX and AREX-C, training-free samplers for flow matching and justify the design through three principles: (i) use the data statistics to define the affine backbone, (ii) integrate the backbone dynamics exactly, and (iii) use the NFE budget only for numerically resolving the residual. 
This gives a complementary perspective on few-step generation. Instead of 
requiring the  flow to be straight or simple, AREX exploits
structured dynamics that can be solved analytically and spend the NFE
budget only on the residual. Across CIFAR-10, ImageNet-256, and
MJHQ-30K, this training-free strategy consistently improves generation fidelity in the low-NFE regime. More broadly, our results suggest that effective
few-step sampling can be achieved by principled separation of the tractable
structure from genuinely difficult residual dynamics at inference time.

\section*{Acknowledgments}
The computations were enabled by resources provided by the National Academic Infrastructure for Supercomputing in Sweden (NAISS), funded by the Swedish Research Council (NAISS 2026/3-562).
SHL would like to acknowledge support from the Wallenberg Initiative on Networks and Quantum Information (WINQ) and the Swedish Research Council (VR/2021-03648).
NBE would like to acknowledge support from the U.S. Department of Energy, Office of Science, Office of Advanced Scientific Computing Research, through the EXPRESS: 2025 Exploratory Research for Extreme-Scale Science program and the RAPIDS Institute under the Scientific Discovery through Advanced Computing (SciDAC) program, under Contract No. DE-AC02-05CH11231 at Lawrence Berkeley National Laboratory. The views, opinions, and/or findings expressed are those of the authors and should not be interpreted as representing the official views or policies of the U.S. Government.

\bibliography{reference}
\bibliographystyle{plainnat}

\appendix
\section*{Appendix}
\section{Related Work}
\label{sec:related}


Efficient few-step sampling has been studied extensively for diffusion and flow-based generative models, through both improved numerical integration at inference time and training procedures that simplify the learned dynamics.


\paragraph{Schedule-aware solvers and residual quadrature}
DDIM \citep{song2022denoisingdiffusionimplicitmodels}, DPM-Solver, DPM-Solver++ \citep{Lu_2025}, DEIS \citep{zhang2023deis}, and UniPC \citep{zhao2023unipcunifiedpredictorcorrectorframework} exploit analytical structure induced by the diffusion schedule and improve the approximation of the remaining network-dependent dynamics. CAB \citep{roy2026cabacceleratingflowdiffusion} similarly rectifies the dynamics and improves their multistep quadrature without additional NFEs. These methods primarily exploit scalar, schedule-induced structure. AREX instead introduces a data-dependent anisotropic affine backbone determined by the target covariance and applies Adams--Bashforth quadrature only to its residual. Under the linear flow matching interpolant, the covariance-free first-order specialization reduces to the Euler/DDIM-type update.

\paragraph{Adaptive linear decompositions}
DPM-Solver-v3 \citep{zheng2023dpmsolverv3improveddiffusionode} and A-FloPS \citep{jin2026aflopsacceleratingdiffusionmodels} also seek more favorable linear--nonlinear decompositions. DPM-Solver-v3 adapts coordinatewise linear coefficients using empirical statistics of the pretrained network, while A-FloPS fits a scalar linear component locally from the evolving trajectory. AREX differs in both source and structure: its backbone is the $L^2$-optimal affine approximation to the population velocity field and is generally matrix-valued, with eigendirections determined by the target covariance. Thus these methods extract exploitable structure from the network or numerical trajectory, whereas AREX derives it from the second-order geometry of the target distribution.

\paragraph{Gaussian analytical structure and complementary solvers}
The closest conceptual connection is Gaussian teleportation \citep{wang2024unreasonableeffectivenessgaussianscore}, which exploits the observation that diffusion scores at sufficiently high noise are well approximated by a moment-matched Gaussian score and uses its analytical dynamics to skip an initial portion of sampling. AREX instead retains the moment-matched affine dynamics throughout the entire trajectory and numerically integrates only the non-Gaussian residual; AREX-C extends this construction to conditional generation. Orthogonally, STORK \citep{tan2025storkfasterdiffusionflow} improves coarse-step stability through stabilized Runge--Kutta schemes and virtual stages. This targets numerical stability rather than the identification of an analytically solvable anisotropic component, and is therefore complementary to AREX.

Finally, AREX provides a complementary perspective to training-side approaches that promote straighter flows for few-step generation. Existing methods often seek straighter learned trajectories, motivated by the reduced discretization error of first-order solvers \citep{tong2024improvinggeneralizingflowbasedgenerative, liu2022flowstraightfastlearning}. AREX instead takes a training-free view: not all trajectory curvature needs to be removed. Its affine backbone can generate substantial acceleration while remaining exactly solvable. Accordingly, the leading local error of AREX is governed by the material derivative of the residual rather than by the acceleration of the full trajectory. AREX therefore replaces global flow straightness with the weaker requirement that the unresolved residual dynamics be temporally simple.

\section{Conditional Sampling with AREX-C}
\label{app_arexc}

In this section, we present and derive AREX-C (see Algorithm \ref{alg:AREXc}), a conditional version of AREX for efficient conditional sampling. 

Conditional generation changes both the location and the spread of the terminal distribution. In particular, different classes or prompts can have different means, while the variation around those means can also change. A covariance pooled across conditions  mixes these two effects. 

To be more precise,
let $C$ denote the random condition (with distribution $\pi$) and let
$c$ denote a particular realization. For a fixed condition $c$, consider
the classifier-free-guided velocity field
\begin{equation}
v_w(t,x,c)
=
v_\emptyset(t,x)
+
w\bigl(
v_c(t,x,c)-v_\emptyset(t,x)
\bigr),
\label{eq:cfg}
\end{equation}
where $v_c$ and $v_\emptyset$ denote the conditional and unconditional network evaluations, respectively. Let $q_1^w(\cdot\mid c)$ denote the terminal distribution obtained by integrating \eqref{eq:cfg} from $X_0\sim\mathcal{N}(0,I_d)$, and define
\begin{equation}
\mu_w(c)
:=
\mathbb{E}_{q_1^w(\cdot\mid  c)}[X],
\qquad
S_w(c)
:=
\operatorname{Cov}_{q_1^w(\cdot\mid  c)}(X).
\label{eq:guided-moments}
\end{equation}
If $\Sigma_1^w$ denotes the covariance obtained after pooling these terminal distributions over conditions, then the law of total covariance gives
\begin{equation}
\Sigma_1^w
=
\underbrace{
\mathbb{E}_C[S_w(C)]
}_{\text{within-condition covariance}}
+
\underbrace{
\operatorname{Cov}_C(\mu_w(C))
}_{\text{between-condition covariance}}.
\label{eq:totalcov}
\end{equation}
The first term describes variation around the mean of a fixed condition, whereas the second describes how the terminal means themselves vary across conditions. For sampling at a particular condition, these two sources of variation play different roles: the between-condition term reflects differences between possible conditional targets rather than variation within the target being sampled.

AREX-C   uses the average within-condition covariance
\begin{equation}
\bar S_w
:=
\mathbb{E}_C[S_w(C)]
\label{eq:sbar}
\end{equation}
as the covariance of its affine component. In practice, $\bar S_w$ is estimated once, offline, from accurate self-samples of the guided model on a calibration set of conditions. If $\widetilde{X}_{1,i}^{w,c}$ denotes such a self-sample, let
\begin{equation}
\widetilde{\mu}_w(c)
:=
\frac{1}{K}
\sum_{i=1}^K
\widetilde{X}_{1,i}^{w,c}
\end{equation}
and
\begin{equation}
\widetilde{S}_w(c)
:=
\frac{1}{K-1}
\sum_{i=1}^K
\bigl(
\widetilde{X}_{1,i}^{w,c}
-
\widetilde{\mu}_w(c)
\bigr)
\bigl(
\widetilde{X}_{1,i}^{w,c}
-
\widetilde{\mu}_w(c)
\bigr)^\top.
\end{equation}
For equally weighted calibration conditions, we then use
\begin{equation}
\widehat{\bar S}_w
:=
\frac{1}{|\mathcal{C}_{\mathrm{cal}}|}
\sum_{c\in\mathcal{C}_{\mathrm{cal}}}
\widetilde{S}_w(c).
\label{eq:sbar-est}
\end{equation}
Since $\widehat{\bar S}_w$ is shared across conditions, its eigendecomposition and the corresponding affine propagators can be computed once. The same holds for the residual magnitude profile $\phi$ of Sec.~\ref{sec:method}, which we use in all reported AREX-C results. We measure a single profile per checkpoint, shared across conditions. It is the root-mean-square residual in the eigenbasis of $\widehat{\bar S}_w$ along guided trajectories whose conditions cover the calibration set. The affine center of each such trajectory is set by its own first network evaluation, in the same way as at sampling time (see below). Thus, the residual quadrature weights depend only on the time grid, the eigenvalues of $\widehat{\bar S}_w$, and $\phi$, and they are also precomputed once. The condition enters the sampler only through the affine center $\widehat{\mu}_w(c;x_0)$, which is held fixed along the trajectory and leaves the weights unchanged.

The remaining quantity is the location of the affine component for the current condition. Rather than storing a separate mean for every possible class or prompt, AREX-C constructs this quantity from the first network evaluation.

To motivate the construction, first consider the CFM velocity field without additional guidance. For the linear interpolation
\begin{equation}
X_t
=
(1-t)X_0+tX_1,
\qquad
U=X_1-X_0,
\label{eq:linear-interp-cond}
\end{equation}
where $X_0$ is independent of $(X_1,C)$, the population conditional velocity is
\begin{equation}
v_c^\star(t,x,c)
=
\mathbb{E}
\bigl[
U
\mid
X_t=x, C = c
\bigr].
\end{equation}
Since
$X_1
=
X_t+(1-t)U,$
we obtain
\begin{equation}
x+(1-t)v_c^\star(t,x,c)
=
\mathbb{E}
\bigl[
X_1
\mid
X_t=x,C = c
\bigr].
\label{eq:endpoint-exact}
\end{equation}
At $t=0$, the source is independent of the target conditional on $c$, and hence
\begin{equation}
x+v_c^\star(0,x,c)
=
\mathbb{E}[X_1\mid C = c]
=:
\mu_{\mathrm{tr}}(c),
\label{eq:endpoint-at-zero}
\end{equation}
where $\mu_{\mathrm{tr}}(c)$ denotes the conditional mean of the training target. Similarly, if
$m_1
:=
\mathbb{E}[X_1]$
denotes the pooled data mean, then the exact unconditional field satisfies
\begin{equation}
v_\emptyset^\star(0,x)
=
m_1-x.
\end{equation}
Consequently, the ideal CFG field at $t=0$ satisfies
\begin{align}
v_w^\star(0,x,c)
&=
v_\emptyset^\star(0,x)
+
w\bigl(
v_c^\star(0,x,c)
-
v_\emptyset^\star(0,x)
\bigr)
\\
&=
m_1
+
w\bigl(
\mu_{\mathrm{tr}}(c)-m_1
\bigr)
-x,
\end{align}
and therefore
\begin{equation}
x+v_w^\star(0,x,c)
=
m_1
+
w\bigl(
\mu_{\mathrm{tr}}(c)-m_1
\bigr).
\label{eq:cfg-initial-direction}
\end{equation}
Thus the first guided network evaluation contains a natural estimate of the direction of the condition-dependent displacement from the pooled mean.

For the learned guided field, we accordingly define the endpoint-form prediction
\begin{equation}
\widehat{x}_{1,w}(t,x,c)
=
x+(1-t)v_w(t,x,c).
\label{eq:endpoint-pred}
\end{equation}
When $w\neq1$, however, $v_w$ is generally not the FM marginal velocity associated with the original linear interpolation. Consequently, $\widehat{x}_{1,w}(t,x,c)$ should not in general be interpreted as
$\mathbb{E}[X_1\mid X_t=x, C = c]$, nor does its value near the beginning of the trajectory generally equal the terminal mean $\mu_w(c)$ of the guided flow.

Indeed, let $Z_t^{w,c}$ solve
\begin{equation}
\frac{\mathrm{d}}{\mathrm{d}t}Z_t^{w,c}
=
v_w(t,Z_t^{w,c},c),
\qquad
Z_0^{w,c}
\sim
\mathcal{N}(0,I_d).
\end{equation}
Then, under the usual integrability assumptions,
\begin{align}
\mu_w(c)
=
\mathbb{E}[Z_1^{w,c}]
&=
\mathbb{E}[Z_0^{w,c}]
+
\int_0^1
\mathbb{E}
\bigl[
v_w(t,Z_t^{w,c},c)
\bigr]
\,\mathrm{d}t
=
\int_0^1
\mathbb{E}
\bigl[
v_w(t,Z_t^{w,c},c)
\bigr]
\,\mathrm{d}t.
\label{eq:guided-mean}
\end{align}
Thus the terminal mean depends on the guided velocity along the entire trajectory. The first endpoint prediction may therefore systematically over- or underestimate the eventual displacement of $\mu_w(c)$ from $m_1$.

We account for this mismatch using a single scalar calibration coefficient. Let $t_0$ denote the first, near-zero network evaluation time. Given calibration source samples
$x_{0,i}\sim\mathcal{N}(0,I_d)$ and calibration conditions
$c\in\mathcal{C}_{\mathrm{cal}}$, define
\begin{equation}
a_{c,i}
:=
\widehat{x}_{1,w}(t_0,x_{0,i},c)-m_1,
\qquad
b_c
:=
\widetilde{\mu}_w(c)-m_1.
\end{equation}
We fit
\begin{equation}
\alpha
=
\operatorname*{arg\,min}_{\alpha'\in\mathbb{R}}
\sum_{c\in\mathcal{C}_{\mathrm{cal}}}
\sum_i
\left\|
\alpha' a_{c,i}-b_c
\right\|^2.
\label{eq:alpha-ls}
\end{equation}
Since the regression contains a single scalar parameter, its solution is
\begin{equation}
\alpha
=
\frac{
\displaystyle
\sum_{c\in\mathcal{C}_{\mathrm{cal}}}
\sum_i
\left\langle
a_{c,i},b_c
\right\rangle
}{
\displaystyle
\sum_{c\in\mathcal{C}_{\mathrm{cal}}}
\sum_i
\|a_{c,i}\|^2
}.
\label{eq:alpha}
\end{equation}
This scalar is fitted once for a given model and guidance configuration; it is not fitted to the downstream evaluation metric.

At sampling time, the first network evaluation is used to construct the trajectory-wise affine mean
\begin{equation}
\widehat{\mu}_w(c;x_0)
=
m_1
+
\alpha
\bigl(
\widehat{x}_{1,w}(t_0,x_0,c)-m_1
\bigr).
\label{eq:alpha_mean}
\end{equation}
We write $\widehat{\mu}_w(c;x_0)$ rather than $\mu_w(c)$ because, for a learned model and a nonzero first evaluation time, the endpoint prediction can retain a dependence on the particular source sample $x_0$. Once computed, $\widehat{\mu}_w(c;x_0)$ is held fixed for the remainder of that sampling trajectory.

In the ideal population setting with $w=1$ and $t_0=0$, \eqref{eq:endpoint-at-zero} gives
\begin{equation}
\widehat{x}_{1,1}(0,x_0,c)
=
\mu_{\mathrm{tr}}(c),
\end{equation}
independently of $x_0$. Moreover, the exact conditional flow transports the source distribution to the conditional target distribution, so
\begin{equation}
\mu_1(c)
=
\mu_{\mathrm{tr}}(c).
\end{equation}
Hence the ideal calibration coefficient is $\alpha=1$. For a learned finite-capacity model and a numerically estimated terminal law, the fitted value need not equal one exactly.

AREX-C then applies AREX using
\begin{equation}
\bigl(
\widehat{\mu}_w(c;x_0),
\widehat{\bar S}_w
\bigr)
\label{eq:arexc-moments}
\end{equation}
in place of the unconditional moments $(m_1,\Sigma_1)$. In particular, its affine component is
\begin{equation}
v_G^C(t,x)
=
\widehat{\mu}_w(c;x_0)
+
K_t^C
\bigl(
x-t\widehat{\mu}_w(c;x_0)
\bigr),
\label{eq:arexc-affine}
\end{equation}
where
\begin{equation}
K_t^C
=
\bigl[
t\widehat{\bar S}_w-(1-t)I_d
\bigr]
\bigl[
(1-t)^2I_d+t^2\widehat{\bar S}_w
\bigr]^{-1}.
\label{eq:arexc-K}
\end{equation}
The corresponding residual is
\begin{equation}
r_w^C(t,x;c,x_0)
=
v_w(t,x,c)
-
v_G^C(t,x).
\label{eq:arexc-residual}
\end{equation}
Because $\widehat{\bar S}_w$ is shared across conditions, the eigensystem of $K_t^C$ and the corresponding exact propagator are also shared. Only the affine center
$\widehat{\mu}_w(c;x_0)$ changes with the condition and source sample.

The first network evaluation serves both purposes required by AREX-C. We first compute
\begin{equation}
v_0
=
v_w(t_0,x_0,c),
\end{equation}
use it in \eqref{eq:endpoint-pred} and \eqref{eq:alpha_mean} to obtain
$\widehat{\mu}_w(c;x_0)$, and then form
\begin{equation}
r_0
=
v_0
-
v_G^C(t_0,x_0).
\end{equation}
The same network evaluation therefore gives both the conditional mean proxy and the first residual. Accordingly, the first AREX residual node is placed at $t_0$, while subsequent residual nodes use the usual AREX placement. AREX-C therefore introduces no additional network function evaluation during sampling relative to AREX. Estimating $\widehat{\bar S}_w$, $\widetilde{\mu}_w(c)$, and $\alpha$ incurs an offline calibration cost.

\paragraph{Connection with the affine $L^2$ projection}
Conditional generation itself does not invalidate the affine optimality result. When $w=1$ and the exact conditional I-CFM field is used, the unconditional argument applies condition by condition: for a fixed condition $c$, the optimal affine field is determined by
\begin{equation}
\bigl(
\mu_{\mathrm{tr}}(c),
S_{\mathrm{tr}}(c)
\bigr),
\end{equation}
where
$S_{\mathrm{tr}}(c)
=
\operatorname{Cov}(X_1\mid C =  c).$
There is also a corresponding interpretation when the linear part of the affine field is constrained to be shared across conditions. Define
$\bar S_{\mathrm{tr}}
=
\mathbb{E}_C[S_{\mathrm{tr}}(C)],$
and consider affine fields of the form
\begin{equation}
a_t(c,x)
=
\mu_{\mathrm{tr}}(c)
+
A_t
\bigl(
x-t\mu_{\mathrm{tr}}(c)
\bigr),
\end{equation}
where $A_t$ is shared across conditions. Let
\begin{equation}
Y
=
U-\mu_{\mathrm{tr}}(C),
\qquad
Z_t
=
X_t-t\mu_{\mathrm{tr}}(C).
\end{equation}
Because $X_0$ is independent of $(X_1,C)$ and has covariance $I_d$,
\begin{equation}
\mathbb{E}
\bigl[
YZ_t^\top
\bigr]
=
t\bar S_{\mathrm{tr}}
-
(1-t)I_d,
\end{equation}
while
\begin{equation}
\mathbb{E}
\bigl[
Z_tZ_t^\top
\bigr]
=
(1-t)^2I_d
+
t^2\bar S_{\mathrm{tr}}.
\end{equation}
Therefore, the minimizer of
$\mathbb{E}
\left[
\|Y-A_tZ_t\|^2
\right]$
is
\begin{equation}
A_t^\star
=
\bigl[
t\bar S_{\mathrm{tr}}-(1-t)I_d
\bigr]
\bigl[
(1-t)^2I_d+t^2\bar S_{\mathrm{tr}}
\bigr]^{-1}.
\label{eq:conditional-shared-L2}
\end{equation}
This is exactly the affine operator obtained by replacing the target covariance in the unconditional Gaussian affine field with the average within-condition covariance $\bar S_{\mathrm{tr}}$. Thus the shared within-condition construction has an exact restricted $L^2$ interpretation for population conditional I-CFM when $w=1$.

For $w\neq1$, however, the CFG field is generally not the conditional-expectation velocity associated with the original interpolant, so neither the condition-wise projection result nor \eqref{eq:conditional-shared-L2} applies directly. The pair
$\bigl(
\widehat{\mu}_w(c;x_0),
\widehat{\bar S}_w
\bigr)$
should therefore be viewed as a calibrated affine decomposition of the guided dynamics, whose effectiveness we evaluate empirically in Section~\ref{sec:exp}.

\begin{algorithm}[!t]
\caption{AREX-C. The first network evaluation provides both the conditional mean proxy and the first residual.}
\label{alg:AREXc}
\begin{algorithmic}[1]
\REQUIRE guided velocity $v_w(\cdot,\cdot,c)$; pooled data mean $m_1$; calibrated within-condition covariance $\widehat{\bar S}_w$; calibration coefficient $\alpha$; residual profile $\phi$, shared across conditions; grid $t_0<\dots<t_N$; AREX-AB$p$
\STATE Compute $\widehat{\bar S}_w=Q\Lambda Q^\top$ and precompute the affine propagators.
\STATE Precompute the residual quadrature weights for the chosen grid and residual profile.
\STATE Sample $x_0\sim\mathcal{N}(0,I_d)$.
\STATE Evaluate $v_0\gets v_w(t_0,x_0,c)$.
\STATE Set $\widehat{x}_{1,w}\gets x_0+(1-t_0)v_0$.
\STATE Set $\widehat{\mu}_w\gets m_1+\alpha(\widehat{x}_{1,w}-m_1)$ and hold it fixed for the trajectory.
\STATE Set the first residual node $\tau_0=t_0$ and compute
$r_0\gets v_0-v_G^C(t_0,x_0)$.
\STATE Apply the first AREX step from $t_0$ to $t_1$ using $r_0$.
\STATE Initialize the residual history with $(\tau_0,r_0)$.
\FOR{$n=1,\ldots,N-1$}
    \STATE Apply the AREX-AB$p$ update using $v_w$, 
    $(\widehat{\mu}_w,\widehat{\bar S}_w)$, and the available residual history.
\ENDFOR
\RETURN $x_N$.
\end{algorithmic}
\end{algorithm}

\section{Proofs and Details of the Main Theoretical Results} \label{app_maintheory}

In this section, we provide proofs and details of theoretical results from the main paper. 
We first recall the setting and notation. 

\paragraph{Setting} Let  $X_0\sim\mathcal N(0,I_d)$ and $X_1\sim p_1$ be independent, with
$\mathbb E\lVert X_1\rVert^2<\infty$. Define
$m_1:=\mathbb E[X_1]$
and
$S:=\operatorname{Cov}(X_1)\succeq 0$,
and consider the  linear interpolant
$X_t=(1-t)X_0+tX_1$.
Let $p_t$ denote the law of $X_t$ and
$v^\star(t,x):=\mathbb E[X_1-X_0\mid X_t=x]$
its marginal  velocity.

For $t<1$, set
$\Sigma_t:=(1-t)^2I+t^2S$
and define the  Gaussian
$g_t:=\mathcal N(tm_1,\Sigma_t)$.
Its associated affine field is
\begin{equation}
v_G(t,x)
:=
m_1+K_t(x-tm_1),
\qquad
K_t
:=
\bigl(tS-(1-t)I\bigr)\Sigma_t^{-1}.
\label{eq:vG-def}
\end{equation}

\paragraph{Special case: Gaussian targets} In particular, if $X_1$ is Gaussian, then $(X_1 - X_0,X_t)$ is jointly Gaussian.
Hence Gaussian conditioning gives
\begin{align}
\mathbb{E}[X_1 - X_0\mid X_t=x]
&=
\mathbb{E}[X_1 - X_0]
+
\operatorname{Cov}(X_1 - X_0,X_t)
\operatorname{Cov}(X_t)^{-1}
\bigl(x-\mathbb{E}[X_t]\bigr)
\\
&=
m_1
+
\bigl[tS-(1-t)I\bigr]
\Sigma_t^{-1}
(x-t m_1)
\\
&=
v_G(t,x).
\end{align}
Thus, for a Gaussian target,
$v_G(t,x)$ coincides  with the marginal velocity
$v^\star(t,x)=\mathbb{E}[X_1 - X_0 \mid X_t=x]$.

\paragraph{Endpoint convention}
We state the affine field for $t<1$ since
$\Sigma_t=(1-t)^2I+t^2S$ 
is then positive definite for every $S\succeq0$, so that $K_t$ is
well defined. If $S\succ0$, all expressions extend directly to $t=1$.
If $S$ is singular, $K_t$ need not be defined at $t=1$, but the
propagator has the continuous terminal extension
\[
F(1,s)
:=
\lim_{t\uparrow1}F(t,s)
=
Q\operatorname{diag}
\left(
\frac{\sqrt{\lambda_j}}{\sigma_j(s)}
\right)Q^\top.
\]
Thus a sampling step may terminate at $t=1$ without evaluating $K_1$:
the affine transport and residual quadrature are understood through this
continuous endpoint extension.

\subsection{Proof of Proposition~\ref{prop:main-backbone}}

\begin{proof}[Proof of Proposition~\ref{prop:main-backbone}]
Fix $t<1$ and set
\begin{equation}
U:=X_1-X_0,
\qquad
X:=X_t,
\qquad
\mu_t:=tm_1.
\end{equation}
By independence of $X_0$ and $X_1$,
\begin{align}
\operatorname{Cov}(U,X)
&=
\operatorname{Cov}
\bigl(
X_1-X_0,
(1-t)X_0+tX_1
\bigr)
=
tS-(1-t)I.
\label{eq:app-crosscov}
\end{align}
Since $\operatorname{Cov}(X)=\Sigma_t$, \eqref{eq:vG} is precisely the
affine least-squares predictor of $U$ from $X$.

We next verify the corresponding orthogonality. Since
\begin{equation}
v^\star(t,X)=\mathbb E[U\mid X],
\end{equation}
the tower property gives
\begin{equation}
\mathbb E[v^\star(t,X)]
=
\mathbb E[U]
=
m_1.
\end{equation}
Moreover,
\begin{align}
\mathbb E
\left[
\bigl(v^\star(t,X)-m_1\bigr)(X-\mu_t)^\top
\right]
&=
\mathbb E
\left[
(U-m_1)(X-\mu_t)^\top
\right]
=
tS-(1-t)I.
\label{eq:app-vstar-cross}
\end{align}
On the other hand,
\begin{equation}
v_G(t,X)-m_1=K_t(X-\mu_t),
\end{equation}
and hence
\begin{align}
\mathbb E
\left[
\bigl(v_G(t,X)-m_1\bigr)(X-\mu_t)^\top
\right]
&=
K_t\Sigma_t =
tS-(1-t)I.
\label{eq:app-vG-cross}
\end{align}

Let
$H(X):=v^\star(t,X)-v_G(t,X).$
Eq. \eqref{eq:app-vstar-cross}--\eqref{eq:app-vG-cross} imply
\begin{equation}
\mathbb E[H(X)]=0,
\qquad
\mathbb E
\left[
H(X)(X-\mu_t)^\top
\right]
=0.
\label{eq:app-affine-normal}
\end{equation}

Every affine vector field can be written as
\begin{equation}
a(x)=b+B(x-\mu_t)
\end{equation}
for some $b\in\mathbb R^d$ and $B\in\mathbb R^{d\times d}$. Therefore,
using \eqref{eq:app-affine-normal},
\begin{align}
\langle H,a\rangle_{L^2(p_t)}
&=
\mathbb E\!\left[H(X)^\top a(X)\right]
\\
&=
b^\top\mathbb E[H(X)]
+
\operatorname{tr}
\left(
B\,
\mathbb E[(X-\mu_t)H(X)^\top]
\right)
\\
&=
0.
\end{align}
Thus $v^\star-v_G$ is orthogonal to every affine vector field. The space of
affine vector fields is finite-dimensional and therefore closed in
$L^2(p_t)$, so $v_G$ is the orthogonal projection of $v^\star$ onto this
space. The Pythagorean identity then follows immediately.
\end{proof}

\subsection{Proof of Proposition \ref{prop:main_score}}
\begin{proof}[Proof of Proposition \ref{prop:main_score}] We prove the score-based representation and Gaussian characterization result as follows.

{\bf Score-based representation.}
Fix $0<t<1$. Since
\begin{equation}
X_t=tX_1+(1-t)X_0
\end{equation}
is the convolution of the law of $tX_1$ with a nondegenerate Gaussian,
$p_t$ has a smooth, strictly positive density. Denoting
$\varphi_{(1-t)^2I}$ for the density of
$\mathcal N(0,(1-t)^2I)$,
\begin{equation}
p_t(x)
=
\mathbb E
\left[
\varphi_{(1-t)^2I}(x-tX_1)
\right].
\end{equation}
Differentiating under the expectation,
\begin{align}
\nabla_x p_t(x)
&=
-\frac{1}{(1-t)^2}
\mathbb E
\left[
(x-tX_1)
\varphi_{(1-t)^2I}(x-tX_1)
\right].
\end{align}
Dividing by $p_t(x)>0$ gives
\begin{equation}
\nabla_x\log p_t(x)
=
-\frac{1}{(1-t)^2}
\left(
x-t\,\mathbb E[X_1\mid X_t=x]
\right).
\label{eq:app-tweedie}
\end{equation}
Hence
\begin{equation}
\mathbb E[X_1\mid X_t=x]
=
\frac{x+(1-t)^2\nabla_x\log p_t(x)}{t}.
\label{eq:app-cond-X1}
\end{equation}

Since
\begin{equation}
X_1-X_0=\frac{X_1-X_t}{1-t},
\end{equation}
we obtain
\begin{align}
v^\star(t,x)
&=
\frac{\mathbb E[X_1\mid X_t=x]-x}{1-t}
=
\frac{x}{t}
+
\frac{1-t}{t}\nabla_x\log p_t(x).
\label{eq:app-vstar-score}
\end{align}

Let
\begin{equation}
g_t=\mathcal N(tm_1,\Sigma_t).
\end{equation}
Its score is
\begin{equation}
\nabla_x\log g_t(x)
=
-\Sigma_t^{-1}(x-tm_1).
\end{equation}
Consequently,
\begin{align}
\frac{x}{t}
+
\frac{1-t}{t}\nabla_x\log g_t(x)
&=
m_1
+
\frac{1}{t}
\left[
I-(1-t)\Sigma_t^{-1}
\right]
(x-tm_1)
\\
&=
m_1
+
\frac{1}{t}
\left[
\Sigma_t-(1-t)I
\right]
\Sigma_t^{-1}(x-tm_1).
\end{align}
Using
\begin{align}
\Sigma_t-(1-t)I
&=
(1-t)^2I+t^2S-(1-t)I
=
t\bigl(tS-(1-t)I\bigr),
\end{align}
we obtain
\begin{equation}
\frac{x}{t}
+
\frac{1-t}{t}\nabla_x\log g_t(x)
=
v_G(t,x).
\end{equation}
Subtracting this identity from \eqref{eq:app-vstar-score} yields
\begin{equation}
r^\star(t,x)
=
\frac{1-t}{t}
\left(
\nabla_x\log p_t(x)-\nabla_x\log g_t(x)
\right).
\end{equation}
Therefore,
\begin{equation}
\|r^\star(t,\cdot)\|_{L^2(p_t)}^2
=
\left(\frac{1-t}{t}\right)^2
\mathcal I(p_t\mid g_t).
\end{equation}

{\bf Gaussian characterization.}
If $X_1$ is Gaussian with mean $m_1$ and covariance $S$, then $X_t$ is
Gaussian with mean $tm_1$ and covariance $\Sigma_t$. Thus $p_t=g_t$ and
$r^\star(t,\cdot)=0$.

Conversely, suppose for some fixed $t\in(0,1)$ that
\begin{equation}
r^\star(t,\cdot)=0
\qquad
p_t\text{-a.e.}
\end{equation}
Then
\begin{equation}
\nabla\log p_t
=
\nabla\log g_t
\qquad
p_t\text{-a.e.}
\end{equation}
Since both sides are continuous and $p_t$ is strictly positive, the equality
holds everywhere. Hence
\begin{equation}
\nabla\log\frac{p_t}{g_t}=0
\qquad\text{on }\mathbb R^d.
\end{equation}
Because $\mathbb R^d$ is connected,
$p_t/g_t$ is constant. Normalization therefore gives
$p_t=g_t.$
Thus $X_t$ is Gaussian.

By independence,
\begin{equation}
\varphi_{X_t}(\xi)
=
\exp
\left(
-\frac12(1-t)^2\|\xi\|^2
\right)
\varphi_{X_1}(t\xi).
\end{equation}
Since $X_t\sim\mathcal N(tm_1,\Sigma_t)$,
\begin{equation}
\varphi_{X_t}(\xi)
=
\exp
\left(
it\,m_1^\top\xi
-
\frac12
\xi^\top
\bigl((1-t)^2I+t^2S\bigr)
\xi
\right).
\end{equation}
The Gaussian factor is nonzero, so
\begin{equation}
\varphi_{X_1}(t\xi)
=
\exp
\left(
it\,m_1^\top\xi
-
\frac12t^2\xi^\top S\xi
\right).
\end{equation}
Setting $\eta=t\xi$ gives
\begin{equation}
\varphi_{X_1}(\eta)
=
\exp
\left(
im_1^\top\eta
-
\frac12\eta^\top S\eta
\right).
\end{equation}
This is the characteristic function of a Gaussian law with mean $m_1$ and
covariance $S$ (note that $S$ may be singular). Hence $X_1$ is Gaussian.

\end{proof}

\subsection{Proof of Proposition~\ref{prop:app-transport}}

\begin{proof}[Proof of Proposition~\ref{prop:app-transport}]
Since $S$ and $\Sigma_t$ share the eigenbasis $Q$,
\begin{equation}
K_t
=
Q\operatorname{diag}
\bigl(
k_1(t),\ldots,k_d(t)
\bigr)
Q^\top,
\end{equation}
where
\begin{equation}
k_j(t)
=
\frac{t\lambda_j-(1-t)}
{(1-t)^2+t^2\lambda_j}.
\label{eq:app-kj-proof}
\end{equation}
Thus all $K_t$ are simultaneously diagonalizable and commute across time.

For the affine ODE
\begin{equation}
\dot{\widetilde x}
=
m_1+K_t(\widetilde x-tm_1),
\end{equation}
set
\begin{equation}
\widetilde y(t)
:=
\widetilde x(t)-tm_1.
\end{equation}
Then
\begin{equation}
\dot{\widetilde y}(t)
=
K_t\widetilde y(t).
\end{equation}
Denoting
$z(t):=Q^\top\widetilde y(t),$
each mode satisfies
\begin{equation}
\dot z_j(t)=k_j(t)z_j(t).
\end{equation}
Moreover,
\begin{align}
\frac{d}{dt}\log\sigma_j(t)
&=
\frac{t\lambda_j-(1-t)}
{(1-t)^2+t^2\lambda_j}
=
k_j(t).
\end{align}
Hence
\begin{align}
z_j(t)
&=
\exp
\left(
\int_s^t k_j(\tau)\,d\tau
\right)
z_j(s)
=
\frac{\sigma_j(t)}{\sigma_j(s)}z_j(s).
\end{align}
Therefore,
\begin{equation}
\widetilde y(t)
=
F(t,s)\widetilde y(s),
\end{equation}
where
\begin{equation}
F(t,s)
=
Q
\operatorname{diag}
\left(
\frac{\sigma_1(t)}{\sigma_1(s)},
\ldots,
\frac{\sigma_d(t)}{\sigma_d(s)}
\right)
Q^\top.
\label{eq:app-F-proof}
\end{equation}
Returning to the original coordinates,
\begin{equation}
\widetilde x(t)
=
tm_1
+
F(t,s)
\bigl(
\widetilde x(s)-sm_1
\bigr),
\end{equation}
which proves \eqref{eq:affine_transport}.

The composition property follows immediately:
\begin{align}
F(t,s)F(s,r)
&=
Q
\operatorname{diag}
\left(
\frac{\sigma_j(t)}{\sigma_j(s)}
\frac{\sigma_j(s)}{\sigma_j(r)}
\right)
Q^\top
=
F(t,r).
\end{align}

For the full learned dynamics, set
\begin{equation}
y(t):=x(t)-tm_1,
\qquad
R_\theta(t,y)
:=
r_\theta(t,tm_1+y).
\end{equation}
Then
\begin{equation}
\dot y(t)
=
K_ty(t)+R_\theta(t,y(t)).
\end{equation}
Variation of constants gives
\begin{equation}
y(t)
=
F(t,s)y(s)
+
\int_s^t
F(t,\tau)
R_\theta(\tau,y(\tau))
\,d\tau,
\end{equation}
which is the  representation in \eqref{eq:voc}.
\end{proof}

\subsection{Local Error Comparison}

We next derive the local error comparison discussed in Principle~II. Define the
one-step local error using the  convention:
\begin{equation}
d_h
:=
y(t+h)-\Psi_h(y(t)).
\end{equation}
Assume that $K_t$ is $C^2$ in time and that $R_\theta$ is $C^2$ jointly in
time and state in a neighborhood of the exact trajectory, with the required
derivatives bounded there.

Fix the exact state $y=y(t)$ and abbreviate
\begin{equation}
R:=R_\theta(t,y),
\qquad
f:=K_ty+R.
\end{equation}
The material derivative of $R_\theta$ along the learned flow is
\begin{equation}
\mathcal D_fR_\theta
:=
\partial_tR_\theta(t,y)
+
D_yR_\theta(t,y)f.
\end{equation}

The exact trajectory has the expansion
\begin{equation}
y(t+h)
=
y+h\dot y(t)
+
\frac{h^2}{2}\ddot y(t)
+
\mathcal O(h^3).
\end{equation}
Since
$\dot y=f(t,y)=K_ty+R_\theta(t,y),$
we obtain
\begin{align}
\ddot y
&=
\partial_tf+D_yf\,f
\\
&=
K_t'y
+
\partial_tR_\theta
+
(K_t+D_yR_\theta)
(K_ty+R_\theta)
\\
&=
(K_t'+K_t^2)y
+
K_tR_\theta
+
\mathcal D_fR_\theta.
\label{eq:app-exact-acceleration}
\end{align}
Therefore,
\begin{equation}
y(t+h)
=
y+hf
+
\frac{h^2}{2}
\left[
(K_t'+K_t^2)y
+
K_tR_\theta
+
\mathcal D_fR_\theta
\right]
+
\mathcal O(h^3).
\label{eq:app-exact-taylor}
\end{equation}

\paragraph{Euler}
Euler gives
\begin{equation}
\Psi_E(y)
=
y+h(K_ty+R_\theta).
\end{equation}
Subtracting this from \eqref{eq:app-exact-taylor} yields
\begin{equation}
d_E
=
\frac{h^2}{2}
\left[
(K_t'+K_t^2)y
+
K_tR_\theta
+
\mathcal D_fR_\theta
\right]
+
\mathcal O(h^3).
\end{equation}

\paragraph{Frozen exponential Euler}
Freezing both $K_t$ and $R_\theta(t,y)$ at the left endpoint gives
\begin{equation}
\Psi_{\mathrm{FEE}}(y)
=
e^{hK_t}y
+
\left(
\int_0^h
e^{(h-\tau)K_t}\,d\tau
\right)R.
\end{equation}
Since
\begin{equation}
e^{hK_t}
=
I+hK_t+\frac{h^2}{2}K_t^2+\mathcal O(h^3),
\end{equation}
we have
\begin{equation}
e^{hK_t}y
=
y+hK_ty+\frac{h^2}{2}K_t^2y+\mathcal O(h^3).
\end{equation}
Similarly,
\begin{align}
\int_0^h
e^{(h-\tau)K_t}\,d\tau
&=
\int_0^h
\left[
I+(h-\tau)K_t+\mathcal O(h^2)
\right]d\tau
\\
&=
hI+\frac{h^2}{2}K_t+\mathcal O(h^3).
\end{align}
Thus
\begin{equation}
\Psi_{\mathrm{FEE}}(y)
=
y+hf
+
\frac{h^2}{2}
\left(
K_t^2y+K_tR
\right)
+
\mathcal O(h^3).
\end{equation}
Comparison with \eqref{eq:app-exact-taylor} gives
\begin{equation}
d_{\mathrm{FEE}}
=
\frac{h^2}{2}
\left[
K_t'y+\mathcal D_fR_\theta
\right]
+
\mathcal O(h^3).
\end{equation}

\paragraph{AREX-1}
The left-node AREX map is
\begin{equation}
\Psi_{\mathrm{AREX}}(y)
=
F(t+h,t)y
+
\left(
\int_t^{t+h}
F(t+h,\tau)\,d\tau
\right)R.
\end{equation}
The evolution family satisfies
\begin{equation}
\partial_1F(t,s)
=
K_tF(t,s),
\qquad
F(s,s)=I,
\end{equation}
where $\partial_1$ denotes differentiation with respect to the first
argument. Hence
\begin{equation}
\partial_1^2F(t,s)
=
K_t'F(t,s)+K_t^2F(t,s),
\end{equation}
and therefore
\begin{equation}
F(t+h,t)
=
I+hK_t
+
\frac{h^2}{2}
(K_t'+K_t^2)
+
\mathcal O(h^3).
\label{eq:app-F-expand}
\end{equation}

For $\tau\in[t,t+h]$, the integral equation for the propagator gives
\begin{equation}
F(t+h,\tau)
=
I+
\int_\tau^{t+h}
K_\xi F(\xi,\tau)\,d\xi.
\end{equation}
Smoothness of $K$ therefore implies, uniformly in $\tau$,
\begin{equation}
F(t+h,\tau)
=
I+(t+h-\tau)K_t+\mathcal O(h^2).
\end{equation}
Consequently,
\begin{equation}
\int_t^{t+h}
F(t+h,\tau)\,d\tau
=
hI+\frac{h^2}{2}K_t+\mathcal O(h^3).
\label{eq:app-weight-expand}
\end{equation}
Using \eqref{eq:app-F-expand}--\eqref{eq:app-weight-expand},
\begin{equation}
\Psi_{\mathrm{AREX}}(y)
=
y+hf
+
\frac{h^2}{2}
\left[
(K_t'+K_t^2)y+K_tR
\right]
+
\mathcal O(h^3).
\end{equation}
Subtracting from \eqref{eq:app-exact-taylor} gives
\begin{equation}
d_{\mathrm{AREX}}
=
\frac{h^2}{2}
\mathcal D_fR_\theta
+
\mathcal O(h^3).
\end{equation}

Thus Euler, frozen exponential Euler, and AREX successively remove affine
contributions from the leading local error. If $R_\theta\equiv0$, AREX
reduces to
\begin{equation}
y_{n+1}
=
F(t_{n+1},t_n)y_n,
\end{equation}
which is the exact affine flow on any grid.

\subsection{Proof of Proposition~\ref{prop:app-local}}

\begin{proof}[Proof of Proposition~\ref{prop:app-local}]
Define
$g(\tau)
:=
R_\theta(\tau,y(\tau)).$
The  variation-of-constants formula gives
\begin{equation}
y(t)
=
F(t,s)y(s)
+
\int_s^t
F(t,\tau)g(\tau)\,d\tau.
\end{equation}
Subtracting \eqref{eq:app-one-node} yields
\begin{align}
y(t)-y^+
=
\int_s^t
F(t,\tau)
\left[
g(\tau)
-
R_\theta
\left(
\tau^\star,\widetilde y(\tau^\star)
\right)
\right]
d\tau.
\end{align}
Add and subtract $g(\tau^\star)$:
\begin{align}
y(t)-y^+
={}&
\int_s^t
F(t,\tau)
\bigl[
g(\tau)-g(\tau^\star)
\bigr]
\,d\tau
\label{eq:app-local-split-a}
\\
&+
\int_s^t
F(t,\tau)
\left[
R_\theta(\tau^\star,y(\tau^\star))
-
R_\theta
\left(
\tau^\star,\widetilde y(\tau^\star)
\right)
\right]
\,d\tau.
\label{eq:app-local-split-b}
\end{align}

By the definition of $L_g$,
\begin{equation}
\|g(\tau)-g(\tau^\star)\|
\le
L_g|\tau-\tau^\star|.
\end{equation}
Hence \eqref{eq:app-local-split-a} is bounded by
\begin{align}
\Phi L_g
\int_s^t
|\tau-\tau^\star|\,d\tau
&=
\frac{\Phi h^2}{2}
\left[
\rho^2+(1-\rho)^2
\right]L_g.
\label{eq:app-temporal-bound}
\end{align}

Next, variation of constants up to $\tau^\star$ gives
\begin{equation}
y(\tau^\star)-\widetilde y(\tau^\star)
=
\int_s^{\tau^\star}
F(\tau^\star,\tau)
R_\theta(\tau,y(\tau))
\,d\tau.
\end{equation}
Therefore,
\begin{equation}
\left\|
y(\tau^\star)-\widetilde y(\tau^\star)
\right\|
\le
\rho h\Phi R_{\max}.
\label{eq:app-stage-state}
\end{equation}
By the $L_x$-Lipschitz assumption,
\begin{align}
&
\left\|
R_\theta(\tau^\star,y(\tau^\star))
-
R_\theta
\left(
\tau^\star,\widetilde y(\tau^\star)
\right)
\right\|
\le
\rho h\Phi L_xR_{\max}.
\end{align}
Since $\|F(t,\tau)\|\le\Phi$, the norm of
\eqref{eq:app-local-split-b} is at most
$\rho h^2\Phi^2L_xR_{\max}.$
Combining  this and \eqref{eq:app-temporal-bound}  gives
\begin{equation}
\|y(t)-y^+\|
\le
\Phi h^2
\left[
\frac{\rho^2+(1-\rho)^2}{2}L_g
+
\rho\Phi L_xR_{\max}
\right].
\end{equation}
\end{proof}

\subsection{Proof of Theorem \ref{thm:arex-abp-informal}}

Let $0=t_0<t_1<\cdots<t_N=T$, $t_n=nh,  Nh=T$, and fix \begin{equation} u_n:=t_n+\rho h, \qquad \rho\in[0,1). \end{equation} 

Given $\widehat y_n$, define the affine-only stage and its residual by \begin{equation} \widetilde y_n := F(u_n,t_n)\widehat y_n, \qquad \widehat G_n := R_\theta(u_n,\widetilde y_n). \end{equation} On step $n$, let \begin{equation} \widehat P_n(t) = \sum_{k\in J_n} L_{n,k}(t)\widehat G_k, \qquad J_n\subseteq\{0,\ldots,n\}, \label{eq:app-rho-reconstruction} \end{equation} and define \begin{equation} \widehat y_{n+1} = F(t_{n+1},t_n)\widehat y_n + \int_{t_n}^{t_{n+1}} F(t_{n+1},t)\widehat P_n(t)\,dt. \label{eq:app-rho-update} 
\end{equation} 

For a full AREX-AB$p$ step, $J_n=\{n,n-1,\ldots,n-p+1\},$
and
\begin{equation} \widehat P_n(t_n+\xi h) = \sum_{j=0}^{p-1} \ell_{j,\rho}(\xi)\widehat G_{n-j}, \qquad 0\le\xi\le1, \label{eq:app-rho-lagrange} \end{equation} where $\ell_{j,\rho}$ is the Lagrange basis associated with the nodes $\rho,\rho-1,\ldots,\rho-(p-1)$. 
This is equivalent to the Newton divided-difference representation. 

Let  $G(t):=R_\theta(t,y(t))$ and define the oracle reconstruction 
\begin{equation} 
P_n^\star(t) := \sum_{k\in J_n} L_{n,k}(t)G(u_k). \label{eq:app-rho-oracle} \end{equation} 
Set \begin{equation} \mathcal A_\rho := \sum_{n=0}^{N-1} \int_{t_n}^{t_{n+1}} \|G(t)-P_n^\star(t)\|_{L^2}\,dt, \label{eq:app-Arho} \end{equation} and 
\begin{equation} \kappa_T := \sup_{0\le s\le t\le T}\|F(t,s)\|, \qquad \kappa_\rho := \max_{0\le n<N}\|F(u_n,t_n)\|. \label{eq:app-kappa-rho} 
\end{equation} 

Assume that, for every startup and full step, \begin{equation} \frac1h \int_{t_n}^{t_{n+1}} \sum_{k\in J_n} |L_{n,k}(t)|\,dt \le \bar\beta_{p,\rho}, \label{eq:app-beta-rho} \end{equation} 
where $\bar\beta_{p,\rho}$ is independent of $h$. 

The following theorem is a detailed version of Theorem \ref{thm:arex-abp-informal}. 

\begin{theorem}[Distributional error bound for unprofiled AREX-AB$p$] \label{thm:arex-abp-precise} 
Consider 
\begin{equation} \dot y(t) = K_ty(t)+R_\theta(t,y(t)), \qquad 0\le t\le T. \end{equation} 

Assume that $\kappa_T<\infty$ and that $R_\theta(t,\cdot)$ is globally $L_R$-Lipschitz uniformly in $t$, and all affine stage points. Define \begin{equation} M_0 := \sup_{0\le t\le T} \|G(t)\|_{L^2}, \end{equation} and suppose $M_0<\infty$. Then the following statements hold. 

\medskip \noindent {\rm (i)}  The terminal solver error satisfies 
\begin{align} 
W_2 \left( \widehat q_T^{p,\rho}, q_T^\theta \right) \le{}& \kappa_T \exp\!\left( \kappa_T\kappa_\rho \bar\beta_{p,\rho}L_RT \right)  \left[ \mathcal A_\rho + \rho h\, \kappa_T \bar\beta_{p,\rho}L_RTM_0 \right]. \label{eq:app-rho-finite} 
\end{align} 

\medskip \noindent {\rm (ii)} If, in addition, \begin{equation} G\in C^p([0,T];L^2), \qquad M_p := \sup_{0\le t\le T} \|G^{(p)}(t)\|_{L^2} <\infty, \end{equation} then, on every full AREX-AB$p$ step, \begin{equation} \|G(t_n+\xi h)-P_n^\star(t_n+\xi h)\|_{L^2} \le \frac{M_ph^p}{p!} \prod_{j=0}^{p-1} |\xi-\rho+j|. \end{equation} 

Consequently, \begin{equation} \mathcal A_\rho \le \mathcal A_{\mathrm{start}} + c_{p,\rho}TM_ph^p, \label{eq:app-Arho-smooth} \end{equation} where \begin{equation} c_{p,\rho} := \frac{1}{p!} \int_0^1 \prod_{j=0}^{p-1} |\xi-\rho+j|\,d\xi, \end{equation} and $\mathcal A_{\mathrm{start}}$ denotes the oracle residual reconstruction error of the startup steps. 
Hence, \begin{align} W_2 \left( \widehat q_T^{p,\rho}, q_T^\theta \right) \le{}& \kappa_T e^{\kappa_T\kappa_\rho \bar\beta_{p,\rho}L_RT}  \left[ \mathcal A_{\mathrm{start}} + c_{p,\rho}TM_ph^p + \rho h\, \kappa_T \bar\beta_{p,\rho}L_RTM_0 \right]. \label{eq:app-rho-smooth} \end{align} 

\end{theorem}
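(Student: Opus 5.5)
We will prove the bound by a synchronous coupling of the exact and numerical solutions, followed by a discrete Gronwall argument in $L^2$.

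\textbf{Coupling.} Draw a single initial state $\widehat y_0=y(0)$ (equal to $x_0\sim\mathcal N(0,I_d)$ after the shift $y=x-tm_1$). Run both the exact learned flow and the AREX-AB$p$ recursion \eqref{eq:app-rho-update} from this same initial state. Then $W_2(\widehat q_T^{p,\rho},q_T^\theta)\le \|\widehat y_N-y(T)\|_{L^2}$, since the shift by $Tm_1$ is common to both and does not affect the distance. Write $e_n:=\widehat y_n-y(t_n)$.

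\textbf{Error recursion.} Subtract the variation-of-constants representation of Proposition~\ref{prop:app-transport} from \eqref{eq:app-rho-update}. This gives
\[
e_{n+1}=F(t_{n+1},t_n)e_n+\int_{t_n}^{t_{n+1}}F(t_{n+1},t)\bigl[\widehat P_n(t)-P_n^\star(t)\bigr]dt+\int_{t_n}^{t_{n+1}}F(t_{n+1},t)\bigl[P_n^\star(t)-G(t)\bigr]dt .
\]
Unroll this using the composition property $F(t,s)F(s,r)=F(t,r)$. The error $e_N$ becomes a sum of propagated local terms, each multiplied by a factor of norm at most $\kappa_T$. This removes any need for a per-step stability constant of the form $(1+Ch)$.

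\textbf{The three contributions.} The oracle term sums to at most $\kappa_T\mathcal A_\rho$.

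For the reconstruction term, write $\widehat G_k-G(u_k)=[R_\theta(u_k,\widetilde y_k)-R_\theta(u_k,\widetilde y^\star_k)]+[R_\theta(u_k,\widetilde y^\star_k)-R_\theta(u_k,y(u_k))]$. Here $\widetilde y^\star_k:=F(u_k,t_k)y(t_k)$ is the affine stage started from the exact state.

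The first bracket is bounded by $L_R\kappa_\rho\|e_k\|$. The second is a stage-state error. Exactly as in \eqref{eq:app-stage-state} of Proposition~\ref{prop:app-local}, variation of constants gives $\|y(u_k)-\widetilde y^\star_k\|_{L^2}\le \rho h\kappa_T M_0$, so this bracket is bounded by $L_R\rho h\kappa_TM_0$.

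Integrating the Lagrange weights and using \eqref{eq:app-beta-rho} then gives
\[
\|e_N\|\le \kappa_T\mathcal A_\rho+\kappa_T h\bar\beta_{p,\rho}L_R\sum_{n}\Bigl(\kappa_\rho\max_{k\in J_n}\|e_k\|+\rho h\kappa_TM_0\Bigr).
\]
The same bound holds with $N$ replaced by any $m\le N$. Let $E_m:=\max_{k\le m}\|e_k\|$. Then $E_m\le a+bh\sum_{n<m}E_n$, where $a=\kappa_T[\mathcal A_\rho+\rho h\kappa_T\bar\beta_{p,\rho}L_RTM_0]$ (using $Nh=T$) and $b=\kappa_T\kappa_\rho\bar\beta_{p,\rho}L_R$. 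The discrete Gronwall inequality gives $E_N\le a\,e^{bT}$, which is \eqref{eq:app-rho-finite}.

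\textbf{Part (ii).} On a full step, $P_n^\star$ interpolates $G$ at the nodes $u_{n-j}=t_n+(\rho-j)h$. The standard Lagrange remainder applies: its vector-valued form follows by pairing with test vectors in $L^2$, or equivalently from the Peano kernel representation. This gives
\[
\|G(t_n+\xi h)-P_n^\star(t_n+\xi h)\|_{L^2}\le \frac{M_p}{p!}\prod_{j}|(\xi-\rho+j)h|.
\]
Integrating over $\xi$ gives $c_{p,\rho}M_ph^{p+1}$ per step. Summing over at most $N$ full steps gives $c_{p,\rho}TM_ph^p$, and the startup steps are collected separately into $\mathcal A_{\mathrm{start}}$. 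Substituting into (i) yields \eqref{eq:app-rho-smooth}.

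\textbf{Main obstacle.} The delicate part is the bookkeeping in the Gronwall step. The multistep reconstruction couples $e_{n+1}$ to several earlier errors $e_{n-p+1},\dots,e_n$, which is why the argument runs on the running maximum $E_m$ rather than on $\|e_m\|$ alone. In addition, the stage mismatch must be charged at order $\rho h$ per step, not order one, and this relies on the $L^2$ bound $M_0$ for $G$ along the exact trajectory.
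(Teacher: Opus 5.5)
Your proposal is correct and follows the paper's proof essentially step for step. The shared steps are the synchronous coupling, the same error recursion unrolled via the composition law, the running-maximum discrete Gronwall argument with the same constants $a$ and $b$, and the vector-valued Lagrange remainder obtained by duality for part (ii). The only cosmetic difference is in bounding $\widehat G_k-G(u_k)$: you pass through the intermediate stage $F(u_k,t_k)y(t_k)$ and apply the Lipschitz bound twice, whereas the paper bounds $\widetilde y_n-y(u_n)$ by the triangle inequality and applies Lipschitz once, and both give the identical estimate.
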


\begin{proof}[Proof of Theorem \ref{thm:arex-abp-precise}]

In the sequel, all $L^2$ norms below are taken with respect to the randomness of the common
initial condition. We couple the exact and numerical trajectories through
the same initial condition, so that
$\widehat y_0=y(0).$

Define
\begin{equation}
e_n
:=
\widehat y_n-y(t_n),
\qquad
E_n
:=
\max_{0\le k\le n}
\|e_k\|_{L^2}.
\label{eq:app-error-def}
\end{equation}

\paragraph{Step 1: control of the off-grid affine-stage error}

Variation of constants on $[t_n,u_n]$ gives
\begin{equation}
y(u_n)
=
F(u_n,t_n)y(t_n)
+
\int_{t_n}^{u_n}
F(u_n,s)G(s)\,ds.
\label{eq:app-exact-stage}
\end{equation}
By definition,
$\widetilde y_n
=
F(u_n,t_n)\widehat y_n.$
Subtracting \eqref{eq:app-exact-stage} therefore gives
\begin{equation}
\widetilde y_n-y(u_n)
=
F(u_n,t_n)e_n
-
\int_{t_n}^{u_n}
F(u_n,s)G(s)\,ds.
\label{eq:app-stage-difference}
\end{equation}
Next, taking $L^2$ norms and applying Minkowski's inequality, we obtain:
\begin{align}
\|\widetilde y_n-y(u_n)\|_{L^2}
&\le
\|F(u_n,t_n)\|
\|e_n\|_{L^2}
+
\int_{t_n}^{u_n}
\|F(u_n,s)\|
\|G(s)\|_{L^2}\,ds
\\
&\le
\kappa_\rho E_n
+
\rho h\,\kappa_TM_0,
\label{eq:app-stage-bound}
\end{align}
where we used
$u_n-t_n=\rho h$
and
$\|F(u_n,s)\|\le\kappa_T$ for
$t_n\le s\le u_n$. 

Define 
$\delta_n
:=
\widehat G_n-G(u_n).$
Since
\begin{equation}
\widehat G_n
=
R_\theta(u_n,\widetilde y_n),
\qquad
G(u_n)
=
R_\theta(u_n,y(u_n)),
\end{equation}
the global $L_R$-Lipschitz property of $R_\theta(t,\cdot)$, uniformly in
$t\in[0,T]$, implies that
\begin{align}
\|\delta_n\|_{L^2}
&\le
L_R
\|\widetilde y_n-y(u_n)\|_{L^2}
\\
&\le
L_R\kappa_\rho E_n
+
\rho h\,L_R\kappa_TM_0.
\label{eq:app-delta-bound}
\end{align}

\paragraph{Step 2: control of the residual reconstruction error}

By \eqref{eq:app-rho-reconstruction} and
\eqref{eq:app-rho-oracle},
\begin{equation}
\widehat P_n(t)-P_n^\star(t)
=
\sum_{k\in J_n}
L_{n,k}(t)\delta_k.
\label{eq:app-P-difference}
\end{equation}

The coefficients $L_{n,k}(t)$ are deterministic scalars. Since
$J_n\subseteq\{0,\ldots,n\},$
we have
\begin{equation}
E_k\le E_n,
\qquad
k\in J_n.
\end{equation}
Applying \eqref{eq:app-delta-bound} with index $k$ therefore gives
\begin{equation}
\|\delta_k\|_{L^2}
\le
L_R\kappa_\rho E_n
+
\rho hL_R\kappa_TM_0,
\qquad
k\in J_n.
\end{equation}
Hence
\begin{align}
\int_{t_n}^{t_{n+1}}
\|
\widehat P_n(t)-P_n^\star(t)
\|_{L^2}\,dt
&\le
\int_{t_n}^{t_{n+1}}
\sum_{k\in J_n}
|L_{n,k}(t)|
\|\delta_k\|_{L^2}\,dt
\\
&\le
\int_{t_n}^{t_{n+1}}
\sum_{k\in J_n}
|L_{n,k}(t)|
\left(
L_R\kappa_\rho E_n
+
\rho hL_R\kappa_TM_0
\right)dt
\\
&\le
h\bar\beta_{p,\rho}
\left(
L_R\kappa_\rho E_n
+
\rho hL_R\kappa_TM_0
\right).
\label{eq:app-P-bound}
\end{align}

\paragraph{Step 3: propagation to the grid points}

The numerical update is given by:
\begin{equation}
\widehat y_{n+1}
=
F(t_{n+1},t_n)\widehat y_n
+
\int_{t_n}^{t_{n+1}}
F(t_{n+1},t)\widehat P_n(t)\,dt,
\end{equation}
whereas variation of constants for the exact trajectory gives
\begin{equation}
y(t_{n+1})
=
F(t_{n+1},t_n)y(t_n)
+
\int_{t_n}^{t_{n+1}}
F(t_{n+1},t)G(t)\,dt.
\end{equation}
Subtracting and inserting $P_n^\star$ give:
\begin{align}
e_{n+1}
={}&
F(t_{n+1},t_n)e_n
+
\int_{t_n}^{t_{n+1}}
F(t_{n+1},t)
\left[
P_n^\star(t)-G(t)
\right]dt
\\
&+
\int_{t_n}^{t_{n+1}}
F(t_{n+1},t)
\left[
\widehat P_n(t)-P_n^\star(t)
\right]dt.
\label{eq:app-error-recursion-rho}
\end{align}

Iterating \eqref{eq:app-error-recursion-rho} from $e_0=0$ and using the
composition law
\begin{equation}
F(t_i,t_{n+1})F(t_{n+1},t)
=
F(t_i,t),
\end{equation}
we obtain, for every $i\ge1$,
\begin{align}
e_i
={}&
\sum_{n=0}^{i-1}
\int_{t_n}^{t_{n+1}}
F(t_i,t)
\left[
P_n^\star(t)-G(t)
\right]dt
+
\sum_{n=0}^{i-1}
\int_{t_n}^{t_{n+1}}
F(t_i,t)
\left[
\widehat P_n(t)-P_n^\star(t)
\right]dt.
\label{eq:app-error-unrolled-rho}
\end{align}

Taking $L^2$ norms and applying Minkowski's inequality gives
\begin{align}
\|e_i\|_{L^2}
\le{}&
\kappa_T
\sum_{n=0}^{i-1}
\int_{t_n}^{t_{n+1}}
\|P_n^\star(t)-G(t)\|_{L^2}\,dt
+
\kappa_T
\sum_{n=0}^{i-1}
\int_{t_n}^{t_{n+1}}
\|
\widehat P_n(t)-P_n^\star(t)
\|_{L^2}\,dt.
\end{align}

The first sum is bounded by $\kappa_T\mathcal A_\rho$. Applying
\eqref{eq:app-P-bound} to the second gives
\begin{align}
\|e_i\|_{L^2}
\le{}&
\kappa_T\mathcal A_\rho
+
\kappa_T\kappa_\rho
\bar\beta_{p,\rho}L_Rh
\sum_{n=0}^{i-1}E_n
+
\rho\kappa_T^2
\bar\beta_{p,\rho}L_RM_0\,
i h^2.
\label{eq:app-error-pre-gronwall}
\end{align}
Since $ih\le T$,
$ih^2\le Th,$
and therefore
\begin{align}
\|e_i\|_{L^2}
\le{}&
\kappa_T
\left[
\mathcal A_\rho
+
\rho h\,
\kappa_T
\bar\beta_{p,\rho}L_RTM_0
\right]
+
\kappa_T\kappa_\rho
\bar\beta_{p,\rho}L_Rh
\sum_{n=0}^{i-1}E_n.
\label{eq:app-error-i}
\end{align}

To pass from \eqref{eq:app-error-i} to $E_i$, fix
$m\in\{0,\ldots,i\}$. Since
\begin{equation}
\sum_{n=0}^{m-1}E_n
\le
\sum_{n=0}^{i-1}E_n
\end{equation}
and
$mh^2\le ih^2\le Th,$
the same right-hand side bounds $\|e_m\|_{L^2}$. Taking the maximum over
$0\le m\le i$ therefore yields
\begin{align}
E_i
\le{}&
\kappa_T
\left[
\mathcal A_\rho
+
\rho h\,
\kappa_T
\bar\beta_{p,\rho}L_RTM_0
\right]
+
\kappa_T\kappa_\rho
\bar\beta_{p,\rho}L_Rh
\sum_{n=0}^{i-1}E_n.
\label{eq:app-discrete-gronwall-rho}
\end{align}

Set
$a
:=
\kappa_T
\left[
\mathcal A_\rho
+
\rho h\,
\kappa_T
\bar\beta_{p,\rho}L_RTM_0
\right]$ and 
$b
:=
\kappa_T\kappa_\rho
\bar\beta_{p,\rho}L_R.$
Then
\begin{equation}
E_i
\le
a+bh\sum_{n=0}^{i-1}E_n.
\end{equation}

By the discrete Gronwall inequality,
\begin{equation}
E_N
\le
a(1+bh)^N
\le
a e^{bNh}.
\end{equation}

Since $Nh=T$,
\begin{align}
E_N
\le{}&
\kappa_T
\exp\!\left(
\kappa_T\kappa_\rho
\bar\beta_{p,\rho}L_RT
\right)
\left[
\mathcal A_\rho
+
\rho h\,
\kappa_T
\bar\beta_{p,\rho}L_RTM_0
\right].
\label{eq:app-EN-rho}
\end{align}

\paragraph{Step 4: passage to Wasserstein distance}

The exact and numerical trajectories are constructed from the same random
initial condition, so their terminal values define an admissible coupling.
Hence, in centered coordinates, 
\begin{align}
W_2^2
\left(
\operatorname{Law}(\widehat y_N),
\operatorname{Law}(y(T))
\right)
&\le
\mathbb E
\left[
\|
\widehat y_N-y(T)
\|^2
\right]
=
\|e_N\|_{L^2}^2.
\end{align}

Now, since
$\widehat x_N=Tm_1+\widehat y_N,$
$x(T)=Tm_1+y(T),$ using the
translation invariance of $W_2$ we have:
\begin{equation}
W_2
\left(
\widehat q_T^{p,\rho},
q_T^\theta
\right)
=
W_2
\left(
\operatorname{Law}(\widehat y_N),
\operatorname{Law}(y(T))
\right).
\end{equation}

Taking square roots and using \eqref{eq:app-EN-rho} allow us to arrive at
\eqref{eq:app-rho-finite}.

\paragraph{Step 5: smooth residual specialization}

Assume now that
$G \in C^p
\left(
[0,T];
L^2(\Omega;\mathbb R^d)
\right).$
Consider a full AREX--AB$p$ step and write
\begin{equation}
t=t_n+\xi h,
\qquad
0\le\xi\le1.
\end{equation}
The interpolation nodes are
\begin{equation}
u_{n-j}
=
t_n+(\rho-j)h,
\qquad
j=0,\ldots,p-1.
\end{equation}
Thus $P_n^\star$ is the degree-$(p-1)$ interpolating polynomial of the
$L^2$-valued function $G$ at these nodes.

The standard scalar Lagrange interpolation estimate extends straightforwardly to
$L^2$-valued functions. Indeed, let $\Lambda$ be any bounded
linear functional on $L^2(\Omega;\mathbb R^d)$ with
$\|\Lambda\|\le1$. Applying the scalar interpolation remainder to
$\Lambda\circ G$ gives
\begin{equation}
\left|
\Lambda
\left(
G(t_n+\xi h)
-
P_n^\star(t_n+\xi h)
\right)
\right|
\le
\frac{M_ph^p}{p!}
\prod_{j=0}^{p-1}
|\xi-\rho+j|.
\end{equation}
Taking the supremum over all such $\Lambda$ yields
\begin{equation}
\left\|
G(t_n+\xi h)
-
P_n^\star(t_n+\xi h)
\right\|_{L^2}
\le
\frac{M_ph^p}{p!}
\prod_{j=0}^{p-1}
|\xi-\rho+j|.
\end{equation}

Integrating over the step with $dt=h\,d\xi$ gives
\begin{align}
\int_{t_n}^{t_{n+1}}
\|G(t)-P_n^\star(t)\|_{L^2}\,dt
&\le
\frac{M_ph^{p+1}}{p!}
\int_0^1
\prod_{j=0}^{p-1}
|\xi-\rho+j|\,d\xi
\\
&=
c_{p,\rho}M_ph^{p+1}.
\end{align}
There are at most $N=T/h$ full steps. Adding the error terms therefore
gives
\begin{align}
\mathcal A_\rho
&\le
\mathcal A_{\mathrm{start}}
+
Nc_{p,\rho}M_ph^{p+1}
\\
&\le
\mathcal A_{\mathrm{start}}
+
c_{p,\rho}TM_ph^p.
\end{align}
This proves \eqref{eq:app-Arho-smooth}. Finally, substituting this estimate into
\eqref{eq:app-rho-finite} yields \eqref{eq:app-rho-smooth}.

\end{proof}

\subsection{Specialization to the Left-Node $\rho = 0$ Case}

We next specialize Theorem \ref{thm:arex-abp-precise} to the left-node case ($\rho = 0)$.

\begin{corollary}[Grid-node AREX--AB$p$]
\label{cor:arex-rho-zero}
Under the assumptions of Theorem~\ref{thm:arex-abp-precise}, set
$\rho=0$. Then
$u_n=t_n, \ 
\widetilde y_n=\widehat y_n, \ 
\kappa_0=1,$
and the stage-state contribution in \eqref{eq:app-rho-smooth} vanishes.
Moreover,
\begin{equation}
c_{p,0}
=
\frac{1}{p!}
\int_0^1
\prod_{j=0}^{p-1}(\xi+j)\,d\xi
=:c_p.
\end{equation}
If
$\mathcal A_{\mathrm{start}}
\le
C_{\mathrm{start}}h^p,$
then
\begin{equation}
W_2
\left(
\widehat q_T^{p,0},
q_T^\theta
\right)
\le
\kappa_T
e^{\kappa_T\bar\beta_{p,0}L_RT}
\left(
C_{\mathrm{start}}
+
c_pTM_p
\right)h^p.
\end{equation}
Hence grid-node AREX--AB$p$ has global order $p$ relative to the exact
learned ODE.
\end{corollary}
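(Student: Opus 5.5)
The plan is to read the corollary off the smooth bound \eqref{eq:app-rho-smooth} of Theorem~\ref{thm:arex-abp-precise} by substituting $\rho=0$ and checking how each constant in that bound simplifies.

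\textbf{Step 1: the stage collapses onto the grid.} With $\rho=0$ we have $u_n=t_n+\rho h=t_n$. The stage is then
\[
\widetilde y_n=F(t_n,t_n)\widehat y_n=\widehat y_n,
\]
since the closed-form propagator of Proposition~\ref{prop:app-transport} satisfies $F(s,s)=I$, as each modal factor $\sigma_j(s)/\sigma_j(s)$ equals $1$. It follows that $\kappa_0=\max_n\|F(t_n,t_n)\|=1$. The stage-state term $\rho h\,\kappa_T\bar\beta_{p,\rho}L_RTM_0$ carries the factor $\rho$, so it vanishes identically; no smallness assumption on $M_0$ is needed.

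\textbf{Step 2: the interpolation constant.} The interpolation nodes become $t_n-jh$ for $j=0,\ldots,p-1$. The product $\prod_j|\xi-\rho+j|$ becomes $\prod_j(\xi+j)$, and the absolute values can be dropped because $\xi\ge0$. This gives $c_{p,0}=c_p$ as stated. Part (ii) of Theorem~\ref{thm:arex-abp-precise} then yields
\[
\mathcal A_0\le\mathcal A_{\mathrm{start}}+c_pTM_ph^p\le\bigl(C_{\mathrm{start}}+c_pTM_p\bigr)h^p,
\]
using the startup hypothesis.

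\textbf{Step 3: assemble the bound.} Substituting Steps 1 and 2 into \eqref{eq:app-rho-smooth} gives the prefactor
\[
\kappa_T e^{\kappa_T\cdot1\cdot\bar\beta_{p,0}L_RT},
\]
multiplied by $(C_{\mathrm{start}}+c_pTM_p)h^p$. This is exactly the claimed estimate. The prefactor does not depend on $h$, because $\kappa_T$ is a supremum over the fixed interval $[0,T]$ and $\bar\beta_{p,0}$ is assumed $h$-independent in \eqref{eq:app-beta-rho}. Hence $W_2=\mathcal O(h^p)$, which is global order $p$ relative to the learned ODE.

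\textbf{Main obstacle.} The only point needing care is that $\bar\beta_{p,0}$ is genuinely independent of $h$, so that the order is not contaminated. For uniform steps, $\frac1h\int_{t_n}^{t_{n+1}}\sum_j|L_{n,j}(t)|\,dt=\int_0^1\sum_j|\ell_{j,0}(\xi)|\,d\xi$, which is a fixed number depending only on $p$. The same holds for each startup step of lower order, since these use a fixed finite set of node patterns. Everything else is direct substitution.
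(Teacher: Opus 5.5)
Your proposal is correct and follows the paper's own route: the corollary is a direct specialization of \eqref{eq:app-rho-smooth} at $\rho=0$, using $F(t_n,t_n)=I$ to get $\widetilde y_n=\widehat y_n$ and $\kappa_0=1$, the explicit factor $\rho$ to remove the stage-state term, $\xi+j\ge0$ to drop the absolute values in $c_{p,0}$, and the startup hypothesis to absorb $\mathcal A_{\mathrm{start}}$. Your closing remark that $\bar\beta_{p,0}$ is a fixed, $p$-dependent number on a uniform grid is a valid sanity check but not strictly needed, since the theorem already assumes $\bar\beta_{p,\rho}$ is $h$-independent.
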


Finally, we specialize Corollary \ref{cor:arex-rho-zero} to AREX-AB2 and AREX-AB3 and provide more explicit error bounds.

\paragraph{Explicit bounds for AREX-AB2 and AREX-AB3}
\label{rem:arex-ab-explicit}
For $q=1,2,3$, define
\begin{equation}
M_q
:=
\sup_{0\le t\le T}
\|G^{(q)}(t)\|_{L^2}.
\end{equation}
For the standard Adams--Bashforth reconstructions,
\begin{equation}
c_1=\frac12,
\qquad
c_2=\frac5{12},
\qquad
c_3=\frac38,
\end{equation}
while the corresponding reconstruction-stability constants are
\begin{equation}
\beta_1=1,
\qquad
\beta_2=2,
\qquad
\beta_3=\frac{11}{3}.
\end{equation}

For AREX-AB2 with one AREX-1 startup step, the startup contribution is
at most $\frac12M_1h^2$, while the remaining AB2 steps contribute at most
$\frac5{12}TM_2h^2$. Hence
\begin{equation}
\mathcal A
\le
\left(
\frac12M_1
+
\frac5{12}TM_2
\right)h^2,
\end{equation}
and therefore
\begin{equation}
W_2
\left(
\widehat q_T^{\mathrm{AREX\text{-}AB2}},
q_T^\theta
\right)
\le
\kappa_T
e^{2\kappa_TL_RT}
\left(
\frac12M_1
+
\frac5{12}TM_2
\right)h^2.
\label{eq:arex-ab2-explicit}
\end{equation}

For the standard AREX-1--AB2--AB3 ramp, the first startup step contributes
at most $\frac12M_1h^2$, the second at most
$\frac5{12}M_2h^3$, and the remaining AB3 steps contribute at most
$\frac38TM_3h^3$. Thus
\begin{equation}
\mathcal A
\le
\frac12M_1h^2
+
\left(
\frac5{12}M_2
+
\frac38TM_3
\right)h^3.
\label{eq:arex-ab3-resolution-generic}
\end{equation}
Applying the same stability argument as in
Theorem~\ref{thm:arex-abp-precise} gives
\begin{equation}
W_2
\left(
\widehat q_T^{\mathrm{AREX\text{-}AB3}},
q_T^\theta
\right)
\le
\kappa_T
e^{(11/3)\kappa_TL_RT}
\left[
\frac12M_1h^2
+
\left(
\frac5{12}M_2
+
\frac38TM_3
\right)h^3
\right].
\label{eq:arex-ab3-generic}
\end{equation}
Thus the naive ramp is generically only second-order because of its first
AREX-1 startup step.

If, however,
$G'(0)=0,$
then
\begin{equation}
G(t)-G(0)
=
\int_0^t (t-s)G''(s)\,ds,
\end{equation}
and therefore
\begin{equation}
\int_0^h
\|G(t)-G(0)\|_{L^2}\,dt
\le
\frac16M_2h^3.
\end{equation}
The startup contribution is then bounded by
\begin{equation}
\left(
\frac16+\frac5{12}
\right)M_2h^3
=
\frac7{12}M_2h^3.
\end{equation}
Consequently,
\begin{equation}
\mathcal A
\le
\left(
\frac7{12}M_2
+
\frac38TM_3
\right)h^3,
\end{equation}
and AREX-AB3 satisfies
\begin{equation}
W_2
\left(
\widehat q_T^{\mathrm{AREX\text{-}AB3}},
q_T^\theta
\right)
\le
\kappa_T
e^{(11/3)\kappa_TL_RT}
\left(
\frac7{12}M_2
+
\frac38TM_3
\right)h^3.
\label{eq:arex-ab3-explicit}
\end{equation}

\section{Additional Discussions}
\label{app_adddiscussion}
In this section, we provide additional discussions on AREX. 

\subsection{Why Not Other Couplings?} 
While the affine-residual construction  extends to general endpoint coupling with finite second moments through the endpoint cross-covariance, the independent-coupling setting used here is particularly simple: the cross-covariance vanishes, and with an isotropic source covariance the time-dependent affine coefficients are functions of the target covariance alone, so they share a fixed eigenbasis and admit a closed-form propagator. For a general coupling, the endpoint cross-covariance must also be estimated, and the resulting time-dependent affine operators need not commute, so such a simultaneous diagonalization is generally not possible. As a  special case, if the moment-matched Gaussian endpoints are coupled by OT with quadratic cost, the resulting path is the Bures-Wasserstein geodesic \citep{peyre2025optimal}. Note also that the cross-covariance of the OT coupling is generally unavailable without first solving or approximating the OT problem.

\subsection{Relation to DDIM and Other Samplers}
\label{app:members}

We next relate the propagator of the affine part to two existing samplers.

\begin{itemize}[leftmargin=20pt,nosep]
\item[(i)] 
Set $S=0$ and freeze the affine mean over a step at
\begin{equation}
\mu
=
\widehat x_1(s)
:=
x(s)+(1-s)v_\theta(s,x(s)).
\end{equation}
With no additional residual contribution, \eqref{eq:affine_transport} reduces to the
linear-interpolation analogue of the deterministic first-order
DDIM/DPM-Solver++ data-prediction update. Under the linear FM
coefficients $\alpha_t=t$ and $\sigma_t=1-t$, this update coincides
 with an Euler step.

\item[(ii)] 
If the moment-matched Gaussian backbone
$(\mu,S)=(m_1,\Sigma_1)$ is used only on an initial interval
$[0,t_1]$ and the residual is neglected there, then AREX propagates that
Gaussian approximation analytically across the interval. This is the
same basic mechanism as the analytical teleportation of
\citet{wang2024unreasonableeffectivenessgaussianscore}, up to the
parameterization and noise schedule used to describe the interpolation.

\end{itemize}

To see the first connection explicitly, when $S=0$,
\begin{equation}
\Sigma_t=(1-t)^2I,
\qquad
K_t=-\frac{1}{1-t}I,
\qquad
F(t,s)=\frac{1-t}{1-s}I.
\end{equation}
Hence \eqref{eq:affine_transport} gives
\begin{align}
x(t)
&=
t\widehat x_1(s)
+
\frac{1-t}{1-s}
\bigl(x(s)-s\widehat x_1(s)\bigr)
=
\frac{1-t}{1-s}x(s)
+
\frac{t-s}{1-s}\widehat x_1(s).
\label{eq:zero-cov-data-prediction}
\end{align}
Equivalently, defining
\begin{equation}
\widehat x_0(s)
=
\frac{x(s)-s\widehat x_1(s)}{1-s},
\end{equation}
the update takes the data-prediction form
\begin{equation}
x(t)
=
t\widehat x_1(s)
+
(1-t)\widehat x_0(s).
\end{equation}
Substituting
\begin{equation}
\widehat x_1(s)
=
x(s)+(1-s)v_\theta(s,x(s))
\end{equation}
into \eqref{eq:zero-cov-data-prediction} yields
\begin{equation}
x(t)
=
x(s)+(t-s)v_\theta(s,x(s)),
\end{equation}
which is exactly Euler.

This coincidence is specific to the linear interpolation
$\alpha_t=t$, $\sigma_t=1-t$ and the frozen zero-covariance
endpoint-prediction construction above. For a general nonlinear diffusion
schedule $(\alpha_t,\sigma_t)$, the deterministic DDIM/DPM-Solver++ update
does not in general coincide with Euler. Thus the
zero-covariance specialization provides a useful baseline: retaining the
nonzero target covariance introduces the covariance-informed affine
transport that distinguishes AREX from this first-order endpoint-prediction
step. In the full AREX method, the treatment of the residual integral
provides an additional distinction.




\subsection{The Residual Magnitude Profile}

Proposition \ref{prop:app-local} suggests reducing the temporal variation of the quantity estimated by the quadrature.
Freezing or extrapolating $r$ across a step requires its temporal variation
to be reconstructed numerically, while on a coarse grid this variation can
be substantial. Its sample-dependent component cannot be predicted in
advance, while part of its amplitude variation is a property of the time
slice that can be measured once and built into the weights.  

Our proposed  residual profile can achieve this by factoring out a deterministic, empirically measurable component of its time-dependent amplitude. More precisely,
we consider the root-mean-square (RMS) residual per direction along the model's own
trajectories,
\begin{equation}
\phi_j(u)
=
\bigl(
\mathbb E\,|r_j(u,y(u))|^2
\bigr)^{1/2}.
\end{equation}

We then replace the kernel $F_j(t,\tau)$ by
$F_j(t,\tau)\phi_j(\tau)/\phi_j(\tau^\star)$. For the one-node case, this
amounts to freezing $r_j/\phi_j$ rather than $r_j$. The systematic RMS
amplitude variation is therefore carried by the weights, leaving only the
normalized residual variation to the quadrature. The profile can be estimated
once from calibration trajectories and, once precomputed, introduces no
additional network evaluations during sampling.

\begin{proposition}[The profile as product integration]
\label{prop:profile}
In the eigenbasis of $S$, denote $r_j(t, y) = (Q^T R_\theta(t, y))_j =: \phi_j(t) g_j(t)$ for a fixed eigen-direction $j$. In the setting of Proposition~\ref{prop:app-local}, suppose
$\phi_j>0$, and let
\begin{equation}
\bar{\phi}_j
=
\max_{\tau\in[s,t]}
\frac{\phi_j(\tau)}{\phi_j(\tau^\star)}, \qquad L_{g_j}
=
\sup_{\tau\in[s,t]}|g_j'(\tau)|.
\end{equation}
Then, under the same assumptions as Proposition~\ref{prop:app-local}, the profiled one-node step with  weight
\begin{equation}
\widetilde{\omega}_j
=
\int_s^t
F_j(t,\tau)
\frac{\phi_j(\tau)}{\phi_j(\tau^\star)}
\,d\tau
\end{equation}
has local error in direction $j$ at most
\begin{equation}
\Phi h^2\bar{\phi}_j\phi_j(\tau^\star)
\left[
\frac{\rho^2+(1-\rho)^2}{2}L_{g_j}
+
\rho\Phi L_x
\frac{R_{\max}}{\phi_j(\tau^\star)}
\right].
\end{equation}
\end{proposition}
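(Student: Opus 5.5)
We mirror the proof of Proposition~\ref{prop:app-local}, working in a single eigendirection $j$, where everything is scalar. Write $F_j(t,\tau)=\sigma_j(t)/\sigma_j(\tau)$, $g(\tau):=r_j(\tau,y(\tau))=\phi_j(\tau)g_j(\tau)$ along the exact trajectory, and let $\widetilde y(\tau^\star)=F(\tau^\star,s)y(s)$ be the affine-only stage. Projecting the variation-of-constants formula of Proposition~\ref{prop:app-transport} onto direction $j$ gives
\[
y_j(t)=F_j(t,s)y_j(s)+\int_s^t F_j(t,\tau)\,\phi_j(\tau)g_j(\tau)\,d\tau .
\]
The profiled one-node step replaces $g_j(\tau)$ by the frozen normalized value $\widehat g_j:=r_j(\tau^\star,\widetilde y(\tau^\star))/\phi_j(\tau^\star)$, so the update adds $\widetilde\omega_j\, r_j(\tau^\star,\widetilde y(\tau^\star))$. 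The error is therefore
\[
\int_s^t F_j(t,\tau)\,\phi_j(\tau)\bigl[g_j(\tau)-\widehat g_j\bigr]\,d\tau .
\]

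Next we split this error as in the earlier proof by adding and subtracting $g_j(\tau^\star)$.

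\emph{Temporal term.} Since $|g_j(\tau)-g_j(\tau^\star)|\le L_{g_j}|\tau-\tau^\star|$, $|F_j|\le\Phi$, and $\phi_j(\tau)\le\bar\phi_j\,\phi_j(\tau^\star)$, the term is at most
\[
\Phi\,\bar\phi_j\,\phi_j(\tau^\star)\,L_{g_j}\int_s^t|\tau-\tau^\star|\,d\tau
=\Phi h^2\,\bar\phi_j\,\phi_j(\tau^\star)\,\frac{\rho^2+(1-\rho)^2}{2}\,L_{g_j}.
\]

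\emph{Stage term.} Here $g_j(\tau^\star)-\widehat g_j=\bigl[r_j(\tau^\star,y(\tau^\star))-r_j(\tau^\star,\widetilde y(\tau^\star))\bigr]/\phi_j(\tau^\star)$. The $j$-th component of a vector is bounded by its norm, so the $L_x$-Lipschitz assumption together with the stage-state estimate \eqref{eq:app-stage-state}, $\|y(\tau^\star)-\widetilde y(\tau^\star)\|\le\rho h\Phi R_{\max}$, gives
\[
\bigl|g_j(\tau^\star)-\widehat g_j\bigr|\le \frac{\rho h\,\Phi\, L_x R_{\max}}{\phi_j(\tau^\star)} .
\]
The remaining integral is bounded by $\int_s^t|F_j|\,\phi_j(\tau)\,d\tau\le\Phi h\,\bar\phi_j\,\phi_j(\tau^\star)$. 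Multiplying gives
\[
\Phi h^2\,\bar\phi_j\,\phi_j(\tau^\star)\,\rho\,\Phi L_x\,\frac{R_{\max}}{\phi_j(\tau^\star)} .
\]

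Summing the temporal and stage terms yields the claimed bound. The only delicate point is bookkeeping: the factor $\phi_j(\tau)$ must be bounded uniformly by $\bar\phi_j\phi_j(\tau^\star)$ inside both integrals, and the stage term must be normalized so that the $1/\phi_j(\tau^\star)$ factor appears as stated. Once that is done, the estimate follows directly.
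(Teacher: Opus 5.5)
Your proposal is correct and follows the paper's proof essentially step for step: the same error integral $\int_s^t F_j(t,\tau)\phi_j(\tau)[g_j(\tau)-\widehat g_j]\,d\tau$, the same add-and-subtract of $g_j(\tau^\star)$ into a temporal term and a stage-state term, and the same uniform bound $|F_j(t,\tau)|\phi_j(\tau)\le\Phi\bar\phi_j\phi_j(\tau^\star)$. Your remark that the $j$-th eigencomponent is bounded by the full norm (because $Q$ is orthogonal) makes explicit a step the paper leaves implicit when it cites ``the same argument as in Proposition~\ref{prop:app-local}''.
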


\begin{proof}[Proof of Proposition~\ref{prop:profile}]
In direction $j$, the profiled step has error
\begin{equation}
\int_s^t
F_j(t,\tau)\phi_j(\tau)
\bigl[
g_j(\tau)-\widehat g_j
\bigr]
\,d\tau,
\end{equation}
where
\begin{equation}
\widehat g_j
=
\frac{
r_j(\tau^\star,\widetilde y(\tau^\star))
}{
\phi_j(\tau^\star)
}.
\end{equation}
Split
\begin{align}
g_j(\tau)-\widehat g_j
={}&
g_j(\tau)-g_j(\tau^\star)
+
\frac{
r_j(\tau^\star,y(\tau^\star))
-
r_j(\tau^\star,\widetilde y(\tau^\star))
}{
\phi_j(\tau^\star)
}.
\end{align}
The first term is at most
$L_{g_j}|\tau-\tau^\star|$, while the second is at most
\begin{equation}
\frac{
L_x\Phi\rho hR_{\max}
}{
\phi_j(\tau^\star)
}
\end{equation}
using the same argument as in Proposition~\ref{prop:app-local}. Finally,
\begin{equation}
|F_j(t,\tau)|\phi_j(\tau)
\le
\Phi\bar{\phi}_j\phi_j(\tau^\star).
\end{equation}
Integrating and using
\begin{equation}
\int_s^t
|\tau-\tau^\star|\,d\tau
=
\frac{h^2}{2}
\bigl[
\rho^2+(1-\rho)^2
\bigr]
\end{equation}
gives the stated bound.
\end{proof}

\subsection{AREX versus Euler: A Spectral Analysis}

The propagator also reveals which covariance directions are most difficult for standard solvers to capture.
In particular, we quantify how much error  Euler makes  in different directions compared with AREX.

In eigen-direction $j$,
\begin{equation}
\dot z_j(t)=k_j(t)z_j(t).
\end{equation}
Since $k_j=\sigma_j'/\sigma_j$,
\begin{equation}
k_j'(t)+k_j(t)^2
=
\frac{\sigma_j''(t)}{\sigma_j(t)}
=
\frac{\lambda_j}{\sigma_j(t)^4}.
\label{eq:euler-coeff}
\end{equation}
Thus the leading  Euler error here is explicitly controlled by the target covariance spectrum.


The following proposition says that the error of Euler can be written in closed form under this direction. The gain over Euler on the affine part can therefore be seen as a function of the spectrum and be evaluated eigenvalue by eigenvalue.

\begin{proposition}[Per eigenvalue error of Euler on the affine part]
\label{prop:eulererror}

If $\lambda_j=0$, then
\begin{equation}
\sigma_j(t)=1-t,
\qquad
z_j(t)=\frac{1-t}{1-s}z_j(s),
\end{equation}
and a single Euler step from $s$ to $t$ reproduces the exact affine solution.

For every fixed $\lambda_j>0$, let $t_n=nh$, $Nh=1$, and let $A_{E,j}(h)$ denote
Euler's total multiplicative factor from $0$ to $1$. Then, as $h\to0$,
\begin{equation}
\log\frac{A_{E,j}(h)}{\sqrt{\lambda_j}}
=
-\frac h2
\int_0^1
\frac{\lambda_j}{\sigma_j(t)^4} \mathrm{d}t
+\mathcal O(h^2) =
-\frac h2
\left[
1+
\frac{\pi}{4}\frac{1+\lambda_j}{\sqrt{\lambda_j}}
\right]
+\mathcal O(h^2),
\label{eq:euler-global}
\end{equation}
whereas for AREX $A_{{\rm AREX},j}=\sqrt{\lambda_j}$.


\end{proposition}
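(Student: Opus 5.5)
For the case $\lambda_j=0$, I would substitute directly. Then $\sigma_j(t)=1-t$ and $k_j(t)=-1/(1-t)$, so Proposition~\ref{prop:app-transport} gives $z_j(t)=\frac{1-t}{1-s}z_j(s)$. A single Euler step from $s$ gives $z_j(s)\bigl(1+(t-s)k_j(s)\bigr)=z_j(s)\bigl(1-\frac{t-s}{1-s}\bigr)=\frac{1-t}{1-s}z_j(s)$. This is the exact solution, because the exact factor is affine in $t$ and $\sigma_j''=0$.

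For $\lambda_j>0$, I would work mode by mode on the pure affine dynamics $\dot z_j=k_jz_j$. Euler's total factor is
\[
A_{E,j}(h)=\prod_{n=0}^{N-1}\bigl(1+hk_j(t_n)\bigr).
\]
Proposition~\ref{prop:app-transport} gives the exact factor $\sigma_j(1)/\sigma_j(0)=\sqrt{\lambda_j}$. This is also $A_{\mathrm{AREX},j}$, since AREX with zero residual reproduces the exact affine flow on any grid. The key identity is
\[
\log\sqrt{\lambda_j}=\sum_n\int_{t_n}^{t_{n+1}}k_j=\sum_n\log\frac{\sigma_j(t_{n+1})}{\sigma_j(t_n)}.
\]
Writing $\sigma_j(t_n+h)=\sigma_j(t_n)+h\sigma_j'(t_n)+\frac{h^2}{2}\sigma_j''(t_n)+O(h^3)$ and dividing by $\sigma_j(t_n)$ gives
\[
\frac{\sigma_j(t_{n+1})}{\sigma_j(t_n)}=1+hk_j(t_n)+\frac{h^2}{2}\frac{\sigma_j''(t_n)}{\sigma_j(t_n)}+O(h^3).
\]
By \eqref{eq:euler-coeff}, $\sigma_j''/\sigma_j=\lambda_j/\sigma_j^4$. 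Taking logarithms, the per-step difference is
\[
\log\bigl(1+hk_j(t_n)\bigr)-\log\frac{\sigma_j(t_{n+1})}{\sigma_j(t_n)}=-\frac{h^2}{2}\frac{\lambda_j}{\sigma_j(t_n)^4}+O(h^3).
\]
This uses that $1+hk_j(t_n)$ stays bounded away from zero for small $h$. Summing over the $N=1/h$ steps turns the Riemann sum into $-\frac h2\int_0^1\lambda_j\sigma_j^{-4}\,dt+O(h^2)$.

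The main point needing care is that the $O(h^3)$ constants must be uniform in $n$. Since $\lambda_j>0$, $\sigma_j$ is bounded below on $[0,1]$ by $\min(\sqrt{\lambda_j/(1+\lambda_j)},\dots)>0$, so $k_j$, $\sigma_j$ and their derivatives are smooth and bounded on the closed interval. The left-Riemann-sum error for the smooth integrand is likewise $O(h)$, which contributes $O(h^2)$ after multiplying by $h/2$.

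Finally, I would evaluate the integral. Write $\sigma^2=(1+\lambda)t^2-2t+1=(1+\lambda)(t-t_*)^2+\frac{\lambda}{1+\lambda}$ with $t_*=1/(1+\lambda)$. Use the standard antiderivative of $(a u^2+c)^{-2}$, namely $\frac{u}{2c(au^2+c)}+\frac{1}{2c\sqrt{ac}}\arctan(u\sqrt{a/c})$, with $a=1+\lambda$ and $c=\lambda/(1+\lambda)$, so that $ac=\lambda$. Evaluate from $u=-t_*$ to $u=\lambda/(1+\lambda)$. The rational part gives $1/\lambda$ after using $\sigma(0)=1$ and $\sigma(1)^2=\lambda$. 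The arctangent arguments are $\sqrt\lambda$ and $-1/\sqrt\lambda$, which sum to $\pi/2$, giving $\frac{(1+\lambda)\pi}{4\lambda^{3/2}}$. Multiplying by $\lambda$ yields $1+\frac\pi4\frac{1+\lambda}{\sqrt\lambda}$, and I would double-check this arithmetic at the end.
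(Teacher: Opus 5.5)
Your proof is correct, and it differs from the paper's in how the error is accumulated. The paper expands $\log(1+hk_j(t_n))=hk_j(t_n)-\frac{h^2}{2}k_j(t_n)^2+\mathcal O(h^3)$ and applies Euler--Maclaurin to the left Riemann sum $h\sum_n k_j(t_n)$. The boundary correction $-\frac h2\bigl(k_j(1)-k_j(0)\bigr)=-\frac h2\int_0^1 k_j'\,dt$ then combines with $-\frac h2\int_0^1 k_j^2\,dt$ to give $-\frac h2\int_0^1(k_j'+k_j^2)\,dt$. Only at that point does the paper use $k_j'+k_j^2=\sigma_j''/\sigma_j$.

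You instead telescope $\log\sqrt{\lambda_j}$ into the exact per-step factors $\sigma_j(t_{n+1})/\sigma_j(t_n)$ and compare each one with the Euler factor. The $k_j'$ contribution therefore appears locally, through the Taylor expansion of $\sigma_j$, rather than through a global endpoint correction.

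Your route needs only the first-order Riemann-sum estimate, plus the uniform bounds you correctly note from $\min_{[0,1]}\sigma_j>0$. It also shows directly that the global log-error is the sum of the per-step relative Euler local errors $\frac{h^2}{2}(k_j'+k_j^2)$ from Principle~II. The paper's route is more compact but relies on the second-order Euler--Maclaurin term.

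You also evaluate $\int_0^1\lambda_j\sigma_j^{-4}\,dt$ explicitly, which the paper only states. Your antiderivative, limits and rational part $1/\lambda_j$ are right. One small wording fix: it is the difference of the two arctangent values, $\arctan\sqrt{\lambda_j}-\arctan(-1/\sqrt{\lambda_j})$, that equals $\pi/2$, not the sum of the arguments.
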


\begin{proof}[Proof of Proposition \ref{prop:eulererror}]
The case $\lambda_j=0$ follows from
$k_j(t)=-(1-t)^{-1}$:
\begin{equation}
z_j(s)+(t-s)k_j(s)z_j(s)
=
\frac{1-t}{1-s}z_j(s).
\end{equation}
For $\lambda_j>0$, Euler has per-step factor of $1+hk_j(t_n)$, so
\begin{equation}
\log(1+hk_j(t_n))
=
hk_j(t_n)-\frac{h^2}{2}k_j(t_n)^2+ \mathcal O(h^3).
\end{equation}
Now, Euler--Maclaurin for the left Riemann sum gives
\begin{equation}
h\sum_{n=0}^{N-1}k_j(t_n)
=
\int_0^1k_j(t)dt
-
\frac h2\bigl(k_j(1)-k_j(0)\bigr)
+\mathcal O(h^2).
\end{equation}
Combining the two expansions,
\begin{equation}
\log A_{E,j}(h)-\int_0^1k_j(t)dt
=
-\frac h2
\int_0^1\bigl(k_j'(t)+k_j(t)^2\bigr)dt
+\mathcal O(h^2).
\end{equation}
Since
\begin{equation}
\exp\!\left(\int_0^1k_j(t)dt\right)
=
\frac{\sigma_j(1)}{\sigma_j(0)}
=
\sqrt{\lambda_j},
\end{equation}
and
\begin{equation}
\int_0^1
\frac{\lambda_j}
{\bigl((1-t)^2+\lambda_jt^2\bigr)^2}dt
=
1+
\frac{\pi}{4}\frac{1+\lambda_j}{\sqrt{\lambda_j}},
\end{equation}
\eqref{eq:euler-global} follows.
\end{proof}

The coefficient in \eqref{eq:euler-global} is minimized at $\lambda_j=1$, where
the source and target variances agree in that direction, although the Euler error
does not vanish there. Since AREX propagates the  mode exactly,
$A_{{\rm AREX},j}=\sqrt{\lambda_j}$, the affine-part  discretization error is
removed by AREX. Thus, for $\lambda_j\gg1$, the  error removed by AREX
grows as $\frac{\pi}{8} h\sqrt{\lambda_j}$ in relative multiplicative error, while the
corresponding absolute error in the terminal standard deviation multiplier
scales as $\mathcal O(h\lambda_j)$. For small positive $\lambda_j$, the coefficient also
becomes large, but this limit is nonuniform: at exactly $\lambda_j=0$, both
Euler and AREX are accurate, whereas for $0<\lambda_j\ll1$ a narrow terminal
region must be resolved before the fixed-$\lambda_j$ asymptotic behavior holds.

\subsection{Relation to Curvature of the Flow}
The spectral analysis above characterizes the discretization error incurred by a numerical solver for the affine part. A related but distinct question is how curved the marginal flow itself is. Importantly, a flow need not be globally straight for AREX to be easy to integrate: curvature that is captured by the analytically solvable affine component is propagated exactly. The covariance spectrum provides a direct way to quantify this distinction.

We follow \cite{wang2025blockflowlearningstraight} and define the curvature as: 
\begin{equation}
    V:=\int_0^1\mathbb{E}\|U-\mathbb{E}[U\mid X_t]\|^2\mathrm{d}t, 
\end{equation}
which is the integrated conditional variance of the line velocity $U=X_1-X_0$ given the current state. In particular, $V=0$ if and only if
$U=\E[U\mid X_t]$ a.s. for a.e. $t$, so the straight interpolation trajectories
coincide almost surely with integral curves of the marginal velocity field.
In this case every ODE trajectory is a straight line, and $V$ equals the minimum of the flow matching loss. 

The following proposition shows that, for a Gaussian target under the linear interpolant, it has a closed form, and for any target it is bounded by the excess transport cost of the independent coupling.

\begin{proposition}[Curvature of the affine flow]
\label{prop:V}
Let $X_0\sim\mathcal N(0,I_d)$ with $X_0\perp X_1$, and $X_t=(1-t)X_0+tX_1$. If
$X_1\sim\mathcal N(m,\Sigma)$, then
\begin{equation}
V
=
\frac{\pi}{2}\mathrm{tr} \Sigma^{1/2}
=
\frac{\pi}{2}\sum_{j=1}^d\sqrt{\lambda_j}.
\label{eq:V-gaussian}
\end{equation}
Moreover, for any $X_1$ with finite second moment,
\begin{equation}
V
\le
\E\|X_1-X_0\|^2
-
W_2^2(p_0,p_1).
\label{eq:V-bound}
\end{equation}
\end{proposition}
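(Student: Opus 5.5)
The proof has two parts: an exact second-moment computation in the Gaussian case, and an energy argument for general targets. In both parts the starting point is the Pythagorean decomposition of the conditional expectation. For each $t$,
\[
\E\|U-\E[U\mid X_t]\|^2=\E\|U\|^2-\E\|v^\star(t,X_t)\|^2 .
\]

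\textbf{Gaussian case.} Here $(U,X_t)$ is jointly Gaussian, and by the special-case computation at the start of this appendix $\E[U\mid X_t]=v_G(t,X_t)$. The conditional-variance term is therefore the Schur complement trace
\[
\operatorname{tr}\bigl(\Cov(U)-\Cov(U,X_t)\Sigma_t^{-1}\Cov(X_t,U)\bigr).
\]
The three ingredients are known:
\begin{itemize}
\item $\Cov(U)=\Sigma+I$;
\item $\Cov(U,X_t)=t\Sigma-(1-t)I$, by \eqref{eq:app-crosscov};
\item $\Sigma_t=(1-t)^2I+t^2\Sigma$.
\end{itemize}
All three share the eigenbasis $Q$, so the trace splits into a sum over modes. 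Mode $j$ contributes
\[
(1+\lambda_j)-\frac{(t\lambda_j-(1-t))^2}{(1-t)^2+t^2\lambda_j}.
\]
Putting this over the common denominator and expanding, the numerator collapses to $\lambda_j\bigl[(1-t)^2+2t(1-t)+t^2\bigr]=\lambda_j$. Mode $j$ thus contributes $\lambda_j/\sigma_j(t)^2$.

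To integrate in time, substitute $u=t/(1-t)$. This gives
\[
\int_0^1\frac{\lambda_j\,dt}{(1-t)^2+\lambda_jt^2}=\int_0^\infty\frac{\lambda_j\,du}{1+\lambda_ju^2}=\frac{\pi}{2}\sqrt{\lambda_j}.
\]
The identity also holds trivially when $\lambda_j=0$. Summing over $j$ yields \eqref{eq:V-gaussian}.

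\textbf{General bound.} The quantity $\E\|U\|^2=\E\|X_1-X_0\|^2$ does not depend on $t$. Integrating the Pythagorean identity over $[0,1]$ therefore gives
\[
V=\E\|X_1-X_0\|^2-\int_0^1\E\|v^\star(t,X_t)\|^2\,dt ,
\]
and it suffices to show that the kinetic action of $v^\star$ is at least $W_2^2(p_0,p_1)$.

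First, $(p_t,v^\star)$ solves the continuity equation weakly. For $\varphi\in C_c^\infty$,
\[
\frac{d}{dt}\E\varphi(X_t)=\E[\nabla\varphi(X_t)\cdot U]=\E[\nabla\varphi(X_t)\cdot v^\star(t,X_t)],
\]
where the second equality uses the tower property. Second, the action is finite: by conditional Jensen, $\E\|v^\star\|^2\le\E\|U\|^2<\infty$. The Benamou--Brenier lower bound then applies, for example in the form of Ambrosio--Gigli--Savaré, Thm.~8.3.1: a distributional solution of the continuity equation with finite action joining $p_0$ to $p_1$ defines an absolutely continuous curve in $W_2$ with metric derivative at most $\|v^\star(t,\cdot)\|_{L^2(p_t)}$. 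Hence
\[
W_2^2(p_0,p_1)\le\Bigl(\int_0^1\|v^\star(t,\cdot)\|_{L^2(p_t)}\,dt\Bigr)^2\le\int_0^1\E\|v^\star(t,X_t)\|^2\,dt ,
\]
which gives \eqref{eq:V-bound}.

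The main obstacle is the rigour of this last step: verifying the weak continuity equation on the closed interval including the endpoints, and measurability of $t\mapsto v^\star(t,\cdot)$, so that the Benamou--Brenier inequality applies. The Gaussian computation is routine once the numerator cancellation is noticed.
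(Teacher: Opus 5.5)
Your proposal is correct and follows essentially the same route as the paper. The Gaussian case is the same mode-by-mode conditional-variance computation, which you write as a Schur complement and integrate with the substitution $u=t/(1-t)$ instead of the arctangent identity; the general bound is the same Pythagorean identity combined with Benamou--Brenier, and your checks of the weak continuity equation and finite action (via the tower property and conditional Jensen) supply details the paper leaves implicit.
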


\begin{proof}[Proof of Proposition ~\ref{prop:V}]
For the Gaussian case, rotate into the eigenbasis of $\Sigma$. It suffices to
consider one scalar mode
\begin{equation}
X_0\sim\mathcal N(0,1),
\qquad
X_1\sim\mathcal N(m,c),
\qquad
c\ge0.
\end{equation}
Let $a=1+c$. Then
\begin{equation}
\Var(U)=a,
\qquad
\Cov(U,X_t)=at-1,
\qquad
\Var(X_t)=(1-t)^2+ct^2=:D(t).
\end{equation}
Joint Gaussianity gives
\begin{equation}
\Var(U\mid X_t)
=
a-
\frac{(at-1)^2}{D(t)}.
\end{equation}
Since
\begin{equation}
aD(t)-(at-1)^2=c,
\end{equation}
we obtain
\begin{equation}
\Var(U\mid X_t)
=
\frac{c}{(1-t)^2+ct^2}.
\end{equation}
For $c>0$,
\begin{align}
\int_0^1
\frac{c}{(1-t)^2+ct^2}\mathrm{d} t
=
\sqrt c
\left[
\arctan(\sqrt c)+\arctan(1/\sqrt c)
\right]
= 
\frac\pi2\sqrt c.
\end{align}
For $c=0$, the conditional variance is identically zero, so the same formula
holds by continuity. Summing over eigen-directions proves \eqref{eq:V-gaussian}.

For the general case,
\begin{equation}
v^\star(t,X_t)=\E[U\mid X_t]
\end{equation}
is the $L^2$ projection of $U$ onto $\sigma(X_t)$-measurable random variables.
Therefore
\begin{equation}
V
=
\E\|U\|^2
-
\int_0^1\E\|v^\star(t,X_t)\|^2\mathrm{d} t.
\end{equation}
The pair $(p_t,v_t^\star)$ satisfies the continuity equation and is an admissible
dynamic transport from $p_0$ to $p_1$. The Benamou--Brenier formula \citep{benamou2000computational}  therefore
gives
\begin{equation}
\int_0^1\E\|v^\star(t,X_t)\|^2\mathrm{d} t
\ge
W_2^2(p_0,p_1),
\end{equation}
which proves \eqref{eq:V-bound}.
\end{proof}

\begin{remark}
For a Gaussian target,
\begin{equation}
\E\|X_1-X_0\|^2
=
\| m\|^2+\mathrm{tr} \Sigma+d,
\end{equation}
and
\begin{equation}
W_2^2\bigl(\mathcal N(0,I),\mathcal N(m,\Sigma)\bigr)
=
\| m\|^2+\mathrm{tr}\Sigma+d-2\mathrm{tr}\Sigma^{1/2}.
\end{equation}
Thus \eqref{eq:V-bound} becomes
\begin{equation}
V\le2\mathrm{tr}\Sigma^{1/2},
\end{equation}
whereas the exact value is exactly a factor $\pi/4$ of this upper bound.
Moreover,
\begin{equation}
2\mathrm{tr}\Sigma^{1/2}
\le
\mathrm{tr}\Sigma+d
\end{equation}
by $2\sqrt\lambda\le1+\lambda$ mode by mode (see Proposition 2 in \citep{wang2025blockflowlearningstraight}), so the transport-cost bound is
sharper than the corresponding independent-coupling variance bound in the
Gaussian case.    
\end{remark}

Therefore, the Gaussian case makes the distinction especially clear. If $X_1$ is Gaussian,
then $r^\star\equiv0$, so the ideal marginal flow is exactly the affine flow and
AREX propagates it without discretization error. Nevertheless,
\begin{equation}
V=\frac\pi2\mathrm{tr}\Sigma^{1/2}
\end{equation}
is generally nonzero (a sum over eigenvalues of $\frac{\pi}{2}\sqrt{\lambda_j}$, and  dominated by the directions with $\lambda_j\gg1$), and the affine trajectories can have nonzero acceleration.
Thus the full flow need not be straight for AREX to be easy to integrate. What
matters numerically is the curvature left after the solvable affine dynamics have
been removed, represented at leading order by $\mathcal D_f R_\theta$.

Proposition~\ref{prop:V} and Proposition~\ref{prop:eulererror} describe complementary aspects of the same spectral structure: one at the level of flow curvature,  the other at the level of numerical discretization error. AREX resolves this limitation by propagating the curved affine component exactly rather than approximating it.


\subsection{A Diagnostic From the Propagator Itself}

The covariance spectrum also determines the largest possible forward
amplification of the exact affine evolution. In view of earlier discussions, we define
\begin{equation}
\kappa
:=
\sqrt{1+\lambda_{\max}}.
\label{eq:kappa}
\end{equation}

\begin{lemma}[Maximum affine propagator gain]
\label{lem:gain}
For every $\lambda_j\ge0$,
\begin{equation}
\sup_{0\le s\le t<1}
\frac{\sigma_j(t)}{\sigma_j(s)}
=
\sqrt{1+\lambda_j}.
\label{eq:modal-gain}
\end{equation}
Consequently,
\begin{equation}
\sup_{0\le s\le t<1}\|F(t,s)\|_2
=
\sqrt{1+\lambda_{\max}}
=
\kappa.
\label{eq:max-gain}
\end{equation}
\end{lemma}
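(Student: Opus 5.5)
The plan is to reduce the modal supremum to a simple one-variable optimisation of $\sigma_j^2(t)=(1-t)^2+t^2\lambda_j$ on $[0,1)$. This function is a convex quadratic in $t$. Its derivative is $2t(1+\lambda_j)-2$, so its unique critical point is $t^\ast=1/(1+\lambda_j)\in(0,1]$.

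We first identify the minimiser and minimum value. At $t^\ast$ the value is $\sigma_j^2(t^\ast)=\lambda_j/(1+\lambda_j)$, so on the half-open interval $[0,1)$ we have $\inf\sigma_j^2=\lambda_j/(1+\lambda_j)$. This is attained when $\lambda_j>0$; when $\lambda_j=0$ it is only approached as $t\uparrow1$.

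The maximiser sits at an endpoint. By convexity, the largest value of $\sigma_j^2$ on $[0,t^\ast]$ is at $t=0$, where it equals $1$. On $[t^\ast,1)$ the function increases toward $\lambda_j$ as $t\uparrow 1$.

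Next we bound the ratio $\sigma_j(t)/\sigma_j(s)$ for $s\le t$, splitting by the position of $t$. If $t\le t^\ast$, then $\sigma_j$ is nonincreasing on $[s,t]$, so the ratio is at most $1\le\sqrt{1+\lambda_j}$. If $t>t^\ast$, we use $\sigma_j(s)\ge\sigma_j(t^\ast)$ together with $\sigma_j(t)<\sqrt{\max(1,\lambda_j)}$. This gives the upper bound $\sqrt{\max(1,\lambda_j)(1+\lambda_j)/\lambda_j}$, which is too crude when $\lambda_j<1$.

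This crude bound is where I expect the main obstacle: the bound must be sharp, and $s\le t$ couples the two arguments. The cleaner route is to fix $s$ and maximise over $t\ge s$. For $s\ge t^\ast$, the function $\sigma_j$ is increasing on $[s,1)$, so the supremum over $t$ is $\sqrt{\lambda_j}$, approached as $t\uparrow1$. The ratio then becomes $\sqrt{\lambda_j}/\sigma_j(s)$, which is largest at $s=t^\ast$, where it equals $\sqrt{1+\lambda_j}$. For $s<t^\ast$, the supremum over $t\ge s$ is $\max(\sigma_j(s),\sqrt{\lambda_j})$. The resulting ratio is $\max\bigl(1,\sqrt{\lambda_j}/\sigma_j(s)\bigr)$, and since $\sigma_j(s)\ge\sigma_j(t^\ast)$ this is at most $\sqrt{1+\lambda_j}$. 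Hence $\sqrt{1+\lambda_j}$ is an upper bound for every admissible pair $(s,t)$.

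For sharpness we take $s=t^\ast$ and $t\uparrow1$ when $\lambda_j>0$. Note that $t^\ast<1$ in this case, so $s$ is admissible. When $\lambda_j=0$, the function $\sigma_j(t)=1-t$ is decreasing, so the ratio is at most $1$, attained at $t=s$; this matches $\sqrt{1+0}=1$.

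Finally, the operator-norm statement follows because $F(t,s)$ is symmetric with eigenvalues $\sigma_j(t)/\sigma_j(s)>0$, by \eqref{eq:prop} and Proposition~\ref{prop:app-transport}. Therefore $\|F(t,s)\|_2=\max_j\sigma_j(t)/\sigma_j(s)$. Exchanging the two suprema gives $\sup_{s\le t}\|F(t,s)\|_2=\max_j\sqrt{1+\lambda_j}=\sqrt{1+\lambda_{\max}}$, by monotonicity in $\lambda_j$.
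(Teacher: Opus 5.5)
Your proof is correct and follows essentially the same route as the paper. Both use convexity of $\sigma_j^2$ with minimum $\lambda_j/(1+\lambda_j)$ at $t^\ast=1/(1+\lambda_j)$, the extremal pair $s=t^\ast$, $t\uparrow 1$, a separate treatment of $\lambda_j=0$, and $\|F(t,s)\|_2=\max_j\sigma_j(t)/\sigma_j(s)$ from the diagonal form in the $Q$ basis; your fix-$s$ argument makes the upper bound over all admissible pairs more explicit than the paper's brief assertion. The ``obstacle'' you flag is not a real one: for $t>t^\ast$ you had already noted that $\sigma_j^2$ increases to $\lambda_j$ on $[t^\ast,1)$, so $\sigma_j(t)<\sqrt{\lambda_j}$, and together with $\sigma_j(s)\ge\sigma_j(t^\ast)$ this gives the sharp bound $\sqrt{1+\lambda_j}$ directly.
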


\begin{proof}
For $\lambda_j>0$,
\begin{equation}
\sigma_j(\tau)^2
=
(1+\lambda_j)\tau^2-2\tau+1
\end{equation}
is convex and has its unique minimum at
\begin{equation}
\tau_j^\star
=
\frac1{1+\lambda_j},
\qquad
\sigma_j(\tau_j^\star)^2
=
\frac{\lambda_j}{1+\lambda_j}.
\end{equation}
The function decreases before $\tau_j^\star$ and increases afterward. Hence the
supremum of the forward ratio is approached from the minimum as $t\uparrow1$;
equivalently, using the continuous extension available for $\lambda_j>0$, it is attained at the terminal endpoint:
\begin{equation}
\frac{\sigma_j(1)}{\sigma_j(\tau_j^\star)}
=
\frac{\sqrt{\lambda_j}}
{\sqrt{\lambda_j/(1+\lambda_j)}}
=
\sqrt{1+\lambda_j}.
\end{equation}
For $\lambda_j=0$, $\sigma_j(t)=1-t$ is nonincreasing, so the supremum is
$1=\sqrt{1+\lambda_j}$. Since $F(t,s)$ is diagonal in the $Q$ basis with modal
factors $\sigma_j(t)/\sigma_j(s)$, taking the maximum over $j$ proves
\eqref{eq:max-gain}.
\end{proof}

The quantity $\kappa$ measures the largest forward amplification of the exact
affine propagator. It is therefore a compact diagnostic of the strength of the
covariance-induced affine dynamics; by itself it is not a theorem predicting a
specific improvement over another solver. 

If $\kappa=1$, then $\lambda_{\max}=0$ and hence $S=0$. In this degenerate case,
$\sigma_j(t)=1-t$, the affine trajectories are linear, and Euler already
propagates the affine component exactly. If instead $S=\lambda I$, then
\begin{equation}
F(t,s)
=
\frac{\sigma(t)}{\sigma(s)}I,
\end{equation}
so the affine evolution is isotropic and no matrix-valued anisotropic treatment
is needed. A complementary measure of directional heterogeneity is the spread of
modal gain bounds,
\begin{equation}
\kappa_{\rm aniso}
:=
\sqrt{\frac{1+\lambda_{\max}}{1+\lambda_{\min}}}.
\label{eq:kappa-aniso}
\end{equation}
When $\lambda_{\min}\ll1$, $\kappa_{\rm aniso}\approx\kappa$.

The spectral results above describe distinct aspects of the same covariance
geometry. Mode $j$ contributes $\frac\pi2\sqrt{\lambda_j}$ to the Gaussian
straightness measure $V$, while its maximal affine propagator gain is
$\sqrt{1+\lambda_j}$ and its leading Euler affine error is governed by
$\lambda_j/\sigma_j(t)^4$. These quantities should not be identified, but they
all show that the covariance spectrum determines which directions carry strong
structured dynamics. AREX exploits this structure by propagating the affine part
exactly and spending numerical effort only on the residual. 
We validate this for AREX and Euler empirically. Table ~\ref{tab:criterion} reports the computed $\kappa$ for the models and tasks we consider. Indeed, all of them have $\kappa\gg1$. 

More generally, the discussions above suggest that the relevant question is not whether the full
flow is straight, but whether the part that remains after removing the solvable
affine dynamics is sufficiently regular to be approximated accurately with a small NFE.

\begin{table}[t]
\caption{The propagator gain $\kappa=\sqrt{1+\lambda_{\max}}$, computed from the covariance used by the sampler, together with the FID at NFE 4.}
\vspace{0.1cm}
\label{tab:criterion}
\centering
\small
\begin{tabular}{llrrrrl}
\toprule
Model & $\lambda_{\max}$ & $\kappa$ & AREX & Euler & best baseline \\
\midrule
SANA-0.6B / MJHQ-30K   & $76$  & $8.8$  & $8.2$  & --     & $10.6$ (DPM-Solver-v3) \\
I-CFM / CIFAR-10                  & $221$ & $14.9$ & $12.2$ & $51.9$ & $23.2$ (STORK-4) \\
SiT-XL/2 / ImageNet     & $266$ & $16.3$ & $8.2$  & $54.0$ & $9.8$ (DPM-Solver-v3) \\
\bottomrule
\end{tabular}
\end{table}

\section{Experimental Details and Additional Results}
\label{app:exp}
\subsection{Experimental Settings}

In this section, we provide details on the settings of the experiments in Sec.~\ref{sec:exp}. All samplers are applied to the same pretrained checkpoints; no model is trained or fine-tuned. On each benchmark, every sampler uses the same noise seeds and the same FID code.

\paragraph{Benchmarks} We use three benchmarks. They differ in conditioning and in the space where the flow is integrated, so that the same sampler is tested in three cases: the unconditional task, the class-conditional task and the text conditional task. On CIFAR-10 \cite{krizhevsky2009learning}, we use the unconditional I-CFM checkpoints released with TorchCFM \cite{tong2024improvinggeneralizingflowbasedgenerative}. On ImageNet-256 \cite{russakovsky2015imagenetlargescalevisual}, we use SiT-XL/2 \cite{ma2024sitexploringflowdiffusionbased} in the SD-VAE space with classifier free guidance at its published scale $w=1.5$. On MJHQ-30K \cite{li2024playgroundv25insightsenhancing}, we use SANA-0.6B at 512px \cite{xie2024sanaefficienthighresolutionimage} with classifier free guidance $w=4.5$, following the protocol of \cite{tan2025storkfasterdiffusionflow}. The settings are shown in Table ~\ref{tab:settings}.

\begin{table}[h]
\caption{Settings of the three benchmarks.}
\label{tab:settings}
\vspace{0.1cm}
\centering
\small
\setlength{\tabcolsep}{3pt}
\begin{tabular}{llll}
\toprule
Parameter & CIFAR-10 & ImageNet-256 & MJHQ-30K \\
\midrule
Model & I-CFM (TorchCFM) & SiT-XL/2 & SANA-0.6B, $512$px \\
Space & pixel & latent (SD-VAE) & latent \\
Dimension $d$ & $3072$ & $4096$ & $8192$ \\
Precision & fp32 & bf16 & bf16 \\
Conditioning & none & class, CFG $w=1.5$ & text, CFG $w=4.5$ \\
Conditions & -- & $1000$ classes, cycled & MJHQ prompts \\
Samples for FID & $50$k & $50$k & $30$k \\
Reference statistics & training set (pytorch-fid) & ADM reference batch & SANA toolkit \\
Additional metrics & -- & sFID, IS, precision, recall & -- \\
Time grid & uniform in $t$, $[10^{-3},1]$ & uniform in $t$, $[10^{-3},1]$ & factory grid, \texttt{flow\_shift}$=3$ \\
Held-out split & $10{,}000$ samples, seed $777$ & $2{,}500$ samples, seed $777$ & first $1{,}000$ prompts \\
Candidate $\rho$ & $\{0.05,0.1,0.2,0.35,0.5\}$ & $\{0.05,0.1,0.2,0.35\}$ & $\{0.2,0.35,0.5\}$ \\
\bottomrule
\end{tabular}
\end{table}

\paragraph{Evaluation} FID is computed against the training set with \texttt{pytorch-fid} on CIFAR-10, with the ADM reference batch and evaluator in ImageNet-256, and with the reference statistics of the SANA toolkit in MJHQ-30K. We also report sFID, Inception score, precision, and recall in Table~\ref{tab:sitcfgfull}. One NFE is one evaluation of the velocity field that the sampler integrates. With classifier-free guidance, this is one evaluation of the guided field, computed as one batched forward pass that returns the conditional and the unconditional prediction; the two branches are not counted separately. The same convention applies to every sampler and to the cost tables of Appendix ~\ref{sec:cost}.

\paragraph{Baselines} We compare with Euler and six training-free samplers. Two levels of configuration have to be distinguished. The first level is the configuration of each baseline. We use the authors' released implementation, or the implementation in \texttt{diffusers}, with the hyperparameters that its paper reports for the backbone at hand, or the defaults of the code when the paper reports none. These settings are never searched. The second level is a small set of discrete options that each method exposes, such as the solver order. These options are selected per NFE on the held-out split described below, and they are the only quantities that are tuned. Table~\ref{tab:baselines} lists the implementations and the options, and Table~\ref{tab:baseline-selected} lists the selected values. For DPM-Solver-v3, the empirical model statistics are estimated on the same checkpoint with exact Jacobian-vector products, using the authors' settings: $N=1200$ timesteps and $K=1024$ samples on CIFAR-10, and $N=250$ and $K=1024$ on SiT-XL/2 and SANA. The time grids differ by method. On CIFAR-10 and SiT-XL/2, Euler, DPM-Solver++, UniPC, CAB, STORK and AREX run on the uniform grid in $t$ from $10^{-3}$ to $1$. A-FloPS applies its own time shift to this grid, which is the identity on CIFAR-10 (shift 1) and moves the interior nodes on SiT-XL/2 (shift 3). DPM-Solver-v3 constructs its own grid between $t=10^{-3}$ and $t=1-10^{-3}$, uniform in $t$ or in log-SNR according to the option selected per NFE. On SANA, DPM-Solver++, UniPC, CAB and AREX-C run on the factory grid of the released pipeline, STORK and A-FloPS use the grids constructed by their own schedulers, and DPM-Solver-v3 uses its own grid, uniform in $t$. Thus, equal NFE means an equal number of evaluations of the same field, not identical grids.

Two cases deserve a comment. On SANA, DPM-Solver++ and UniPC are the factory schedulers of the released pipeline and run at solver order $2$ at every NFE. On CIFAR-10 and SiT-XL/2, their order is selected per NFE, and the selected order is $1$ at NFE $2$, $2$ at NFE $4$, and $3$ from NFE $8$ on. Thus the label 2M applies to the SANA rows only. STORK-4 on SANA is run in the configuration of its paper for this backbone, namely $s=9$ with the first-order Taylor expansion on the scheduler's own grid, at every NFE. Its paper reports this configuration from NFE $7$ on. The entries at NFE $4$--$6$ in Table~\ref{tab:mjhq} are the same configuration applied below that range, with no adaptation. On CIFAR-10 and SiT-XL/2, $(s,\text{Taylor order})$ is selected per NFE from $\{5,9,14\}\times\{1,2\}$.

\begin{table}[h]
\caption{Baselines, their implementations, and the discrete options that are selected per NFE on the held-out split. Everything else is the configuration of the paper or the default of the code.}
\label{tab:baselines}
\vspace{0.1cm}
\centering
\footnotesize
\setlength{\tabcolsep}{3pt}
\begin{tabular}{p{3.6cm}p{5.5cm}p{4.2cm}}
\toprule
Method & Implementation & Options selected on the held-out split \\
\midrule
Euler & forward Euler on the learned field & -- \\
DPM-Solver++ \cite{Lu_2025} & \texttt{DPMSolverMultistepScheduler} (\texttt{dpmsolver++}, flow prediction) & solver order ($1$, $2$ or $3$); factory order $2$ on SANA \\
UniPC \cite{zhao2023unipcunifiedpredictorcorrectorframework} & \texttt{UniPCMultistepScheduler}, same configuration & solver order ($1$, $2$ or $3$); factory order $2$ on SANA \\
A-FloPS \cite{jin2026aflopsacceleratingdiffusionmodels} & authors' \texttt{AEuler}, constraint $\lambda\in[-1,1]$ & time grid (unshifted or shifted), once per pipeline \\
CAB \cite{roy2026cabacceleratingflowdiffusion} & released scheduler & solver order ($2$ or $3$) \\
STORK-4 \cite{tan2025storkfasterdiffusionflow} & authors' scheduler & $(s,\text{Taylor order})\in\{5,9,14\}\times\{1,2\}$ on CIFAR-10 and SiT-XL/2; published configuration ($s=9$, first order) on SANA \\
DPM-Solver-v3 \cite{zheng2023dpmsolverv3improveddiffusionode} & authors' code; model statistics estimated on the same checkpoint & order, pseudo-order, corrector, time grid \\
AREX, AREX-C & ours & node position $\rho$, multistep order \\
\bottomrule
\end{tabular}

\end{table}

\begin{table}[h]
\caption{Options selected per NFE on the held-out split. Entries are listed for NFE $2$, $4$, $8$, $16$, $32$ on CIFAR-10 and SiT-XL/2, and for NFE $4$ to $10$ on SANA, where one value serves every NFE. ``Factory'' and ``published'' mark configurations that were fixed rather than selected. The pseudo-order option of DPM-Solver-v3 was never selected.}
\label{tab:baseline-selected}
\vspace{0.1cm}
\centering
\footnotesize
\setlength{\tabcolsep}{3pt}
\begin{tabular}{llp{2.9cm}p{2.9cm}p{2.1cm}}
\toprule
Method & Option & CIFAR-10 & SiT-XL/2 & SANA-0.6B \\
\midrule
DPM-Solver++ & solver order & $1,\,2,\,3,\,3,\,3$ & $1,\,2,\,3,\,3,\,3$ & $2$ (factory) \\
UniPC & solver order & $1,\,2,\,3,\,3,\,3$ & $1,\,2,\,3,\,3,\,3$ & $2$ (factory) \\
CAB & solver order & $2,\,3,\,3,\,3,\,3$ & $2,\,2,\,2,\,2,\,2$ & $2$ \\
A-FloPS & grid shift (per pipeline) & $1$ & $3$ & $1$ \\
STORK-4 & $(s,\text{Taylor order})$ & $(14,1)$, $(9,2)$, $(9,2)$, $(9,1)$, $(9,1)$ & $(5,2)$, then $(14,2)$ & $(9,1)$ (published) \\
DPM-Solver-v3 & order & $2,\,2,\,3,\,3,\,3$ & $2,\,2,\,3,\,3,\,3$ & $2$ \\
 & corrector & yes, yes, no, no, yes & yes, yes, yes, yes, no & no \\
 & time grid & logSNR, uniform, logSNR, logSNR, logSNR & logSNR, then uniform & uniform \\
\bottomrule
\end{tabular}
\end{table}

\paragraph{Selection protocol}
All discrete options of every method are selected per NFE on a held-out split, and the reported curve of every method is one configuration per NFE rather than an envelope over configurations. The split differs by benchmark. On CIFAR-10, it consists of $10000$ samples drawn with seed $777$, whereas the reported $50$k samples use seed $1234$, so the noise is disjoint. On SiT-XL/2, it consists of $2500$ samples drawn with seed $777$ over the same $1000$ classes, cycled, whereas the reported samples use seed $1234$; the noise is disjoint and the classes are the same. On SANA, it consists of the first $1000$ prompts of MJHQ-30K with the same noise seed as the reported run. Thus, the held-out samples on SANA are the first $1{,}000$ of the $30000$ reported samples, and the split is not disjoint on this benchmark. In other words, on CIFAR-10 and SiT-XL/2 the split is held out in the noise only, and on SANA it is a subset of the evaluation prompts; in every case the selection uses a small sample, and the reported FIDs are computed afterwards on the full sample set. The selected options are listed in Table~\ref{tab:baseline-selected} and, for AREX, in Table~\ref{tab:rho-selected}; for DPM-Solver-v3 the options amount to sixteen combinations. Thus, the baselines are tuned at least as much as AREX.

\paragraph{Selection of the node position $\rho$}
AREX has one continuous parameter, the position $\tau^\star=s+\rho h$ of the network evaluation inside each step. Theorem~\ref{thm:arex-abp-informal} shows what $\rho$ trades. The interpolation constant $c_{p,\rho}$ of the multistep rule decreases as the node moves into the step, while the stage-state term $\rho hM_0$ grows with $\rho$. We therefore select $\rho$ by a simple grid search on the held-out split, separately for each backbone and each NFE, with the residual profile in place and the held-out FID as the criterion. The candidate sets are given in Table~\ref{tab:settings}. The multistep order is selected on the same split, and $\rho$ is never tuned on the reported samples. Table~\ref{tab:abl-rho} reports the held-out FIDs of the multistep rules at NFE $4$ and $8$, and Table~\ref{tab:rho-selected} lists the selected values. The optimal $\rho$ differs across backbones and is not determined by the residual share alone. However, the sensitivity is consistent on the guided models. At NFE $4$, $\rho=0.35$ increases the FID by $17$ on SiT-XL/2, and $\rho=0.05$ increases it by $23$ on SANA. Therefore, we select $\rho$ per backbone and per NFE rather than fixing a single value.

\begin{table}[h]
\caption{Selected node position $\rho$ for each backbone, rule, and NFE.}
\label{tab:rho-selected}
\vspace{0.1cm}
\centering
\small
\begin{tabular}{llccccc}
\toprule
Model & Rule & NFE 2 & NFE 4 & NFE 8 & NFE 16 & NFE 32 \\
\midrule
\multirow{3}{*}{CIFAR-10} & one-node & $0.5$ & $0.35$ & $0.35$ & $0.2$ & $0.1$ \\
 & AB2 & $0.2$ & $0.1$ & $0.2$ & $0.2$ & $0.35$ \\
 & AB3 & $0.2$ & $0.2$ & $0.2$ & $0.2$ & $0.35$ \\
\midrule
\multirow{3}{*}{SiT-XL/2} & one-node & $0.2$ & $0.2$ & $0.2$ & $0.2$ & $0.2$ \\
 & AB2 & $0.05$ & $0.05$ & $0.05$ & $0.05$ & $0.1$ \\
 & AB3 & $0.05$ & $0.05$ & $0.1$ & $0.05$ & $0.1$ \\
\midrule
\multirow{3}{*}{SANA-0.6B} & one-node & \multicolumn{5}{c}{$0.5$ at every NFE from $4$ to $10$} \\
 & AB2 & \multicolumn{5}{c}{$0.35$ at every NFE from $4$ to $10$} \\
 & AB3 & \multicolumn{5}{c}{$0.35$ at every NFE from $4$ to $10$} \\
\bottomrule
\end{tabular}

\end{table}

\begin{table}[h]
\caption{Selection of the node position $\rho$. FID on the held-out split (CIFAR-10 $n=10{,}000$, SiT-XL/2 $n=2{,}500$, SANA $n=1{,}000$) with the residual profile in place. The selected value in each row is in bold. These are small-sample FIDs, so only differences within a row are meaningful.}
\label{tab:abl-rho}
\vspace{0.1cm}
\centering
\small
\setlength{\tabcolsep}{4pt}
\begin{tabular}{lllrrrrr}
\toprule
Model & Rule & NFE & $\rho=0.05$ & $0.1$ & $0.2$ & $0.35$ & $0.5$ \\
\midrule
\multirow{4}{*}{CIFAR-10}
 & AB2 & 4 & 15.37 & \textbf{14.11} & 14.72 & 18.11 & 26.02 \\
 & AB2 & 8 &  8.52 &  7.92 & \textbf{7.39} &  8.69 & 13.13 \\
 & AB3 & 4 & 18.34 & 15.85 & \textbf{14.23} & 16.27 & 24.07 \\
 & AB3 & 8 & 11.22 &  9.54 & \textbf{7.58} &  7.96 & 12.33 \\
\midrule
\multirow{4}{*}{SiT-XL/2}
 & AB2 & 4 & \textbf{22.34} & 23.48 & 28.01 & 40.24 & -- \\
 & AB2 & 8 & \textbf{16.33} & 16.39 & 17.19 & 19.90 & -- \\
 & AB3 & 4 & \textbf{21.08} & 21.76 & 25.87 & 38.32 & -- \\
 & AB3 & 8 & 16.09 & \textbf{16.04} & 16.65 & 19.41 & -- \\
\midrule
\multirow{4}{*}{SANA-0.6B}
 & AB2 & 4 &  76.54 &  69.18 & 58.52 & \textbf{53.62} & 55.76 \\
 & AB2 & 8 &  50.67 &  49.77 & 48.76 & \textbf{48.24} & 48.25 \\
 & AB3 & 4 & 162.50 & 133.99 & 88.13 & \textbf{55.59} & 55.93 \\
 & AB3 & 8 &  58.22 &  53.52 & 49.29 & \textbf{48.29} & 48.30 \\
\bottomrule
\end{tabular}
\end{table}

\subsection{Moment and Profile Estimation Details}
\paragraph{Estimating the moments}
For AREX on CIFAR-10, moments $m_1$ and $\Sigma_1$ are estimated on pixel level from training set with flips, 100k images in total. For AREX-C, $\mu(c)$ is the sample mean of the condition $c$. And $\bar{S}$ is the average over the conditions of the unbiased per-condition covariance, so that $\Sigma_1=\bar{S}+\mathrm{Cov}(\mu(c))$. On SiT-XL/2, we estimated the moments of 2562334 latent elements encoded by VAE from all 1281167 ImageNet training images and their flips. On SANA we could not gain access to the training set, so we first do a self sampling for 2000 template prompts and 25 samples for each prompt. Self sampling uses the checkpoints own pipeline and precision with factory grid, 20 sampling steps and $w=4.5$.

\paragraph{Eigendecomposition}
The covariance is decomposed with a symmetric eigen solver from torch on GPU (\texttt{torch.linalg.eigh}). On one GH200, forming and decomposing it takes 0.1s at $d=3072$ for CIFAR-10, 0.13s at $d=4096$ for SiT-XL/2 and 0.7s at $d=8192$ for SANA, scales as $d^3$. Due to time and computation resource constraint we have not tested AREX in backbones with larger $d$. For $d\gtrsim10^5$, the dense decomposition is not eligible any more. In future work we plan to apply top-$k$ rank estimator by randomized PCA, which can reduce the computational cost to $O(dNk)$.

\paragraph{The residual profile estimation}
The residual profile $\phi$ is measured once per checkpoint. We integrate the model with Euler on the uniform grid of the sampler with $50$ steps and record the residual at every step in the eigenbasis of the covariance used by the sampler. The profile is the root-mean-square of these residuals over the trajectories. On CIFAR-10 we use $2048$ trajectories. On the guided models, the trajectories follow the guided field, and the affine center of each trajectory is determined from its first network evaluation, as in AREX-C. On SiT-XL/2 we use $2048$ trajectories whose classes are spread over all $1000$ classes, and on SANA we use $1024$ of the synthetic template prompts. In both cases the trajectories are pooled into a single profile that is shared across conditions. Thus, the residual quadrature weights are precomputed once and reused for every condition.

\subsection{Wall-Clock Cost}
\label{sec:cost}
\begin{table}[t]
\caption{Per-step cost, ms per sample, one GH200. "Network" is one
evaluation as the benchmark makes it (a CFG pair where the model is guided);
"AREX" is everything an AREX-AB3 step adds to it. For SiT-XL/2 and SANA-0.6B, we use bf16 and CFG pair.  }
\vspace{0.1cm}
\label{tab:cost}
\centering
\small
\begin{tabular}{lrrrrrr}
\toprule
Model & $d$ & batch & Network & AREX step & overhead & DPM++ step \\
\midrule
CIFAR-10, I-CFM (fp32)                    & 3,072   & 1000 & 0.140 & 0.0021 & 1.5\% & 0.0003 \\
SiT-XL/2, $w=1.5$        & 4,096   & 125  & 2.12  & 0.0058 & 0.3\% & 0.0027 \\
SANA-0.6B, $w=4.5$        & 8,192   & 25   & 3.72  & 0.041  & 1.1\% & 0.010 \\
\bottomrule
\end{tabular}
\end{table}

\begin{table}[t]
\caption{One-time cost per checkpoint: network evaluations (a CFG pair
counts as one) and wall-clock on one GH200. AREX's residual profile $\phi$
is the network cost the method requires. Its moments (with the conditional
means and $\alpha$) are statistics of the target: from the training set they
cost no network evaluation (CIFAR-10 on I-CFM and ImageNet on SiT). On SANA we could not access the training set, so we estimated them by self-sampling to substitute the training set, listed on its own
($^\dagger$ training set, $^\ddagger$ self-sampled; with the training set
the pass would be one VAE-encoder run.); the
covariance accumulation, eigendecomposition and weight tables add
$0.1$--$0.7$\,s. DPM-Solver-v3: the EMS pass at the authors' settings. Euler, DPM-Solver++, UniPC, STORK, A-FloPS and CAB have no such cost.}
\vspace{0.1cm}
\label{tab:offline}
\centering
\small
\setlength{\tabcolsep}{3.5pt}
\begin{tabular}{lrrrrrr}
\toprule
 & \multicolumn{2}{c}{AREX: profile $\phi$} & \multicolumn{2}{c}{AREX: moments} & \multicolumn{2}{c}{DPM-Solver-v3: EMS} \\
Model & evals & wall-clock & evals & wall-clock & evals (+JVP) & wall-clock \\
\midrule
CIFAR-10          & $1.0\times10^5$ & $14$\,s     & $0^\dagger$ & --  & $1.2\times10^6$ & $27$\,min \\
SiT-XL/2, $w=1.5$ & $1.0\times10^5$ & $3.6$\,min  & $0^\dagger$ & $3$\,min  & $2.6\times10^5$ & $380$\,min \\
SANA-0.6B         & $5.1\times10^4$ & $3.2$\,min  & $1.0\times10^6\,^\ddagger$ & $62$\,min  & $2.6\times10^5$ & $160$\,min \\
\bottomrule
\end{tabular}
\end{table}

NFE is the cost axis of every table above, and the question we will discuss in this part is what the sampler adds per step beyond the network evaluation, and what it costs once per checkpoint. All timings are on GH200 at the batch size and precision each benchmark script uses, and take the best of 20 repetitions after warm-up.

Inside the evaluation step, an AREX steps including one network evaluation, and adds two changes of basis ($x \to U^\top x$ and back), the propagator $v_G$ and kernel weighted residual. All of the matrix vector products have $O(d^2)$ complexity and the arithmetic has $O(d)$ complexity. And the weights are read from a table computed once pre grid. We present the overhead per sample against the evaluation comparing to DPM-Solver++ in Table ~\ref{tab:cost}. Here DPM-Solver++ stands for every baseline, as none of them changes basis, and what they do per step are beyond the network call. UniPC and CAB use multi-step combinations, STORK uses virtual stages, A-FloPS uses least square fits, DPM-Solver-V3 uses coefficient interpolation. AREX's overhead is about $0.3-1.5\%$ of a network call.

Beyond the network evaluation step, AREX need a offline evaluation to get a moments and the residual profile before the first image, and six of the seven baseline do not need the offline cost. While DPM-Solver-V3 need its empirical model statistics, so we compare AREX with it in Table ~\ref{tab:offline}. The table counts network evaluations and wall clocks at rates of Table ~\ref{tab:cost} and separates the two things AREX needs. The profile is a measurement on the model, and is the only network cost of the method itself requires. The moments are statistics of the target and cost no network evaluation if we can get access to the training set (CIFAR-10 on I-CFM and ImageNet on SiT). The training set of SANA is not released, so we have to use self-sampling to substitute for the training set.

\subsection{Ablation Study}
\label{sec:ablation}
Every row of this section comes from the pipeline of Section ~\ref{sec:exp} with the same checkpoints, grid, seeds, sample numbers, and FID protocol as the main tables. Rows in the configuration of the main tables are set in bold.

\begin{table}[h]
\caption{Order of the quadrature method, rows are from the main tables.}
\label{tab:abl-order}
\centering
\small
\setlength{\tabcolsep}{4pt}

\vspace{0.1cm}

\begin{tabular}{llrrrrr}
\toprule
Model & Rule & NFE 2 & NFE 4 & NFE 8 & NFE 16 & NFE 32 \\
\midrule
\multirow{3}{*}{CIFAR-10}
 & one node & 54.63 & 26.13 & 11.45 & 5.34 & 3.66 \\
 & AB2      & 39.24 & 12.20 &  5.39 & 3.50 & 3.64 \\
 & AB3      & 39.24 & 12.35 &  5.58 & 3.60 & 3.69 \\
\midrule
\multirow{3}{*}{SiT-XL/2}
 & one node & 134.79 & 34.14 & 9.70 & 4.14 & 2.72 \\
 & AB2      &  84.15 &  9.58 & 3.23 & 2.38 & 2.17 \\
 & AB3      &  84.15 &  8.18 & 3.02 & 2.24 & 2.12 \\
\bottomrule
\end{tabular}\\[4pt]
\vspace{0.3cm}

\begin{tabular}{llrrrrrrr}
\toprule
Model & Rule & NFE 4 & NFE 5 & NFE 6 & NFE 7 & NFE 8 & NFE 9 & NFE 10 \\
\midrule
\multirow{3}{*}{SANA-0.6B}
 & one node & 11.02 & 8.60 & 7.70 & 7.23 & 6.92 & 6.81 & 6.73 \\
 & AB2      &  8.15 & 6.86 & 6.53 & 6.39 & 6.33 & 6.38 & 6.35 \\
 & AB3      &  9.91 & 7.51 & 6.85 & 6.66 & 6.52 & 6.51 & 6.44 \\
\bottomrule
\end{tabular}
\end{table}

\paragraph{Order of the quadrature method}
Table ~\ref{tab:abl-order} collects the three AREX rows of Tables~\ref{tab:cifar}--\ref{tab:mjhq}. The largest gap is between the one-node rule and AB2 on every backbone. The two multistep rules are closer, and the better one depends on the backbone. On CIFAR-10, AB2 is ahead at every NFE from $4$ on, by at most $0.19$ FID. On SiT-XL/2, AB3 is ahead at every NFE from $4$ on, by $1.40$ FID at NFE $4$ and by at most $0.21$ from NFE $8$ on. On SANA, AB2 is ahead at every NFE, by $1.76$ FID at NFE $4$ and by $0.09$ at NFE $10$.

\begin{table}[h]
\caption{Classical integrators on the full field at equal NFE, against the AREX row of the main tables. CIFAR-10 and SiT-XL/2 at 50k samples, SANA-0.6B at 30k. Heun and RK4 take their stages inside chunks of two and four consecutive grid intervals, a shorter final chunk uses the rule of its length (Euler for one interval, Heun for two, Kutta's third order rule for three), as for the ETD-RK rows of Table ~\ref{tab:rk}. Thus the RK4 entry at NFE 2 is one Heun step, and at NFE 5, 6, 7, 9 and 10 on SANA the last chunk is shorter,} 
\label{tab:abl-classical}
\vspace{0.1cm}
\centering
\small
\setlength{\tabcolsep}{4pt}
\begin{tabular}{llrrrrr}
\toprule
Model & Integrator & NFE 2 & NFE 4 & NFE 8 & NFE 16 & NFE 32 \\
\midrule
\multirow{6}{*}{CIFAR-10}
 & Euler      & 166.24 &  51.88 &  17.46 &  9.15 & 6.28 \\
 & AB2        & 145.06 &  29.23 &  10.46 &  6.98 & 5.19 \\
 & AB3        & 145.06 &  25.52 &   9.30 &  6.21 & 4.67 \\
 & Heun       & 412.44 & 295.30 & 119.29 & 48.64 & 18.07 \\
 & RK4        & 412.44 & 198.53 &  76.15 & 26.79 & 7.99 \\
 & \textbf{AREX-AB2} & \textbf{39.24} & \textbf{12.20} & \textbf{5.39} & \textbf{3.50} & \textbf{3.64} \\
\midrule
\multirow{6}{*}{SiT-XL/2}
 & Euler      & 154.29 &  53.98 & 11.02 & 4.17 & 2.65 \\
 & AB2        & 132.95 &  15.74 &  4.19 & 2.46 & 2.12 \\
 & AB3        & 132.95 &  12.35 &  3.76 & 2.31 & 2.07 \\
 & Heun       & 283.85 & 131.45 & 14.92 & 3.69 & 2.31 \\
 & RK4        & 283.85 &  85.67 &  7.91 & 2.66 & 2.08 \\
 & \textbf{AREX-C-AB3} & \textbf{84.15} & \textbf{8.18} & \textbf{3.02} & \textbf{2.24} & \textbf{2.12} \\
\bottomrule
\end{tabular}\\[4pt]
\vspace{0.3cm}

\begin{tabular}{llrrrrrrr}
\toprule
Model & Integrator & NFE 4 & NFE 5 & NFE 6 & NFE 7 & NFE 8 & NFE 9 & NFE 10 \\
\midrule
\multirow{6}{*}{SANA-0.6B}
 & Euler      &  25.56 &  18.52 & 14.83 & 12.56 & 11.12 & 10.00 &  9.46 \\
 & AB2        &  38.46 &  17.68 & 10.48 &  8.29 &  7.53 &  7.08 &  6.85 \\
 & AB3        & 148.53 & 129.92 & 89.37 & 49.65 & 26.75 & 15.35 & 10.06 \\
 & Heun       &  37.58 &  17.89 & 12.01 &  9.99 &  8.13 &  7.82 &  7.12 \\
 & RK4        &  58.93 &  23.06 & 14.41 & 10.34 &  8.86 &  8.87 &  7.88 \\
 & \textbf{AREX-C-AB2} & \textbf{8.15} & \textbf{6.86} & \textbf{6.53} & \textbf{6.39} & \textbf{6.33} & \textbf{6.38} & \textbf{6.35} \\
\bottomrule
\end{tabular}
\end{table}

\paragraph{Comparison to classical integrators} 
Table ~\ref{tab:abl-classical} applies Euler, the classical Adams-Bashforth of order two and three, Heun's method and classical RK4 to the full field $v_\theta$ on the grid and noise of main tables and at equal NFE. Higher order by itself does not reach AREX at NFE$\leq 8$. The Runge-Kutta are worse than Euler at low NFE. From NFE 16 the gap to AREX closes, as for the other baselines, at NFE 32 on SiT-XL/2 the full field AB3 and RK4 are slightly ahead of AREX-C-AB3.

\begin{table}[h]
\caption{The residual profile $\phi$ \eqref{eq:phi}: "profile" is the sampler of the main tables, "flat" removes $\phi$ from every weight.}
\label{tab:phi}
\vspace{0.1cm}
\centering
\small
\setlength{\tabcolsep}{3.5pt}
\begin{tabular}{llrrrrr}
\toprule
Model & Sampler & NFE 2 & NFE 4 & NFE 8 & NFE 16 & NFE 32 \\
\midrule
\multirow{6}{*}{CIFAR-10}
 & AREX (one node), flat    & 59.62 & 29.04 & 13.21 & 6.23 & 4.01 \\
 & AREX (one node), profile & 54.63 & 26.13 & 11.45 & 5.34 & 3.66 \\
 & AREX-AB2, flat           & 55.73 & 23.67 &  9.07 & 4.10 & 3.63 \\
 & AREX-AB2, profile        & 39.24 & 12.20 &  5.39 & 3.50 & 3.64 \\
 & AREX-AB3, flat           & 55.73 & 21.23 &  8.06 & 3.88 & 3.73 \\
 & AREX-AB3, profile        & 39.24 & 12.35 &  5.58 & 3.60 & 3.69 \\
\midrule
\multirow{6}{*}{SiT-XL/2, $w{=}1.5$}
 & AREX-C (one node), flat    & 156.41 & 56.21 & 14.36 & 5.04 & 2.92 \\
 & AREX-C (one node), profile & 134.79 & 34.14 &  9.70 & 4.14 & 2.72 \\
 & AREX-C-AB2, flat           & 121.95 & 21.75 &  4.55 & 2.54 & 2.15 \\
 & AREX-C-AB2, profile        &  84.15 &  9.58 &  3.23 & 2.38 & 2.17 \\
 & AREX-C-AB3, flat           & 121.95 & 19.85 &  4.11 & 2.38 & 2.11 \\
 & AREX-C-AB3, profile        &  84.15 &  8.18 &  3.02 & 2.24 & 2.12 \\
\midrule
 & & NFE 4 & NFE 5 & NFE 6 & NFE 8 & NFE 10 \\
\multirow{6}{*}{SANA-0.6B}
 & AREX-C (one node), flat    & 13.63 & 9.28 & 7.93 & 6.91 & 6.68 \\
 & AREX-C (one node), profile & 11.02 & 8.60 & 7.70 & 6.92 & 6.73 \\
 & AREX-C-AB2, flat           & 13.59 & 9.18 & 7.83 & 6.85 & 6.65 \\
 & AREX-C-AB2, profile        &  8.15 & 6.86 & 6.53 & 6.33 & 6.35 \\
 & AREX-C-AB3, flat           & 15.68 & 10.01 & 8.33 & 7.05 & 6.76 \\
 & AREX-C-AB3, profile        & 9.91 & 7.51 & 6.85 & 6.52 & 6.44 \\
\bottomrule
\end{tabular}
\end{table}

\begin{table}[h]
\caption{Classical Adams--Bashforth rules on the full field with ($+\psi$) and without the magnitude profile of the full field, at equal NFE; the AREX row of the main tables for reference.}
\vspace{0.1cm}
\label{tab:abl-clsphi}
\centering
\small
\setlength{\tabcolsep}{4pt}
\begin{tabular}{llrrrrr}
\toprule
Model & Rule & NFE 2 & NFE 4 & NFE 8 & NFE 16 & NFE 32 \\
\midrule
\multirow{5}{*}{CIFAR-10}
 & AB2         & 145.06 & 29.23 & 10.46 & 6.98 & 5.19 \\
 & AB2 $+\phi$ & 216.19 & 32.26 &  7.22 & 5.12 & 4.56 \\
 & AB3         & 145.06 & 25.52 &  9.30 & 6.21 & 4.67 \\
 & AB3 $+\phi$ & 216.19 & 29.55 &  6.54 & 4.60 & 4.15 \\
 & \textbf{AREX-AB2} & \textbf{39.24} & \textbf{12.20} & \textbf{5.39} & \textbf{3.50} & \textbf{3.64} \\
\midrule
\multirow{5}{*}{SiT-XL/2}
 & AB2         & 132.95 & 15.74 & 4.19 & 2.46 & 2.12 \\
 & AB2 $+\phi$ & 135.82 & 18.49 & 4.67 & 2.57 & 2.15 \\
 & AB3         & 132.95 & 12.35 & 3.76 & 2.31 & 2.07 \\
 & AB3 $+\phi$ & 135.82 & 14.59 & 4.20 & 2.43 & 2.11 \\
 & \textbf{AREX-C-AB3} & \textbf{84.15} & \textbf{8.18} & \textbf{3.02} & \textbf{2.24} & \textbf{2.12} \\
\bottomrule
\end{tabular}\\[4pt]
\vspace{0.3cm}

\begin{tabular}{llrrrrrrr}
\toprule
Model & Rule & NFE 4 & NFE 5 & NFE 6 & NFE 7 & NFE 8 & NFE 9 & NFE 10 \\
\midrule
\multirow{5}{*}{SANA-0.6B}
 & AB2         &  38.46 &  17.68 & 10.48 &  8.29 &  7.53 &  7.08 &  6.85 \\
 & AB2 $+\phi$ &  29.95 &  12.39 &  7.76 &  6.70 &  6.44 &  6.31 &  6.25 \\
 & AB3         & 148.53 & 129.92 & 89.37 & 49.65 & 26.75 & 15.35 & 10.06 \\
 & AB3 $+\phi$ & 143.05 & 131.87 & 85.20 & 45.92 & 23.84 & 13.66 &  9.14 \\
 & \textbf{AREX-C-AB2} & \textbf{8.15} & \textbf{6.86} & \textbf{6.53} & \textbf{6.39} & \textbf{6.33} & \textbf{6.38} & \textbf{6.35} \\
\bottomrule
\end{tabular}
\end{table}

\paragraph{The residual profile}
Table ~\ref{tab:phi} removes the profile from the weights. The profile complements the exact transport. After the affine part is integrated exactly, the quadrature still has to follow the size of residual across the coarse step, then $\phi$ carries the predictable part of this change. Table ~\ref{tab:abl-clsphi} gives the classical Adams-Bashforth on the full field and we test the effect of the profile $\phi$ in this case. On CIFAR-10 and SiT-XL/2, $\phi$ does not improves the result on the full field at NFE$\leq4$. On SANA, $\psi$ helps AB2 at every NFE. Full-field AB2 $+\psi$ is behind AREX-C-AB2 by $21.8$ FID at NFE $4$ and by $0.11$ at NFE $8$, and it is ahead of both AREX-C-AB2 and AREX-C-AB3 but still very close at NFE $9$ ($6.31$ vs. $6.38$ and $6.51$) and at NFE $10$ ($6.25$ vs. $6.35$ and $6.44$). Thus, on this backbone the full-field profile closes the gap at the upper end of the NFE range. The profile is informative in which the size of what is integrated has structure. And by subtracting the affine part, the residual then has this structure on the backbone.

\begin{table}[h]
\caption{The ingredients ablation of AREX-C. Pooled $\Sigma_1$ with $m_1$ is plain AREX; $\bar S$ with $\mu(c)$ is AREX-C. SiT-XL/2: AREX-C-AB3; SANA-0.6B: AREX-C-AB2.}
\label{tab:abl-ingr}
\vspace{0.1cm}
\centering
\small
\begin{tabular}{lrrrr}
\toprule
 & \multicolumn{2}{c}{SiT-XL/2} & \multicolumn{2}{c}{SANA-0.6B} \\
Covariance, mean & NFE 4 & NFE 8 & NFE 4 & NFE 8 \\
\midrule
pooled $\Sigma_1$, $m_1$        & 12.58 & 3.18 & 9.49 & 6.38 \\
$\bar S$, $m_1$                 & 12.54 & 3.21 & 7.81 & 6.44 \\
\textbf{$\bar S$, $\mu(c)$}     & \textbf{8.18} & \textbf{3.02} & \textbf{8.15} & \textbf{6.33} \\
\bottomrule
\end{tabular}
\end{table}

\paragraph{Two ingredients of AREX-C} 
Table ~\ref{tab:abl-ingr} replaces one ingredient of AREX-C at a time in the configuration of the main tables. The first replacement exchanges the within-condition covariance $\bar{S}$ for the pooled $\Sigma_1$, and the second exchanges the conditional mean $\mu(c)$ for the pooled mean $m_1$. On SiT-XL/2, the covariance changes little, whereas the conditional mean improves the FID by $4.4$ at NFE $4$ and by $0.19$ at NFE $8$. On SANA, the two ingredients act at different budgets. The within-condition covariance improves the FID by $1.68$ at NFE $4$, whereas the conditional mean improves it by $0.11$ at NFE $8$ but is $0.34$ behind the pooled mean at NFE $4$.

\begin{table}[h]
\small 
\caption{Alternative integrators for the residual at equal total NFE on CIFAR-10, without the residual profile; the AREX rows are the flat rows of Table~\ref{tab:phi}. ETD-RK$k$: exponential Runge--Kutta with $k$ equispaced stages (stage residuals interpolated and integrated against the propagator; ETD-RK2m freezes $r$ at the midpoint stage, the DPM-Solver-2 rule); IF-RK$k$: classical $k$-stage tableau on the residual after the affine part has been transported out (Lawson); AREX+STORK: STORK-4 on the residual with the affine transport re-based at every step, $(s,\text{order})$ selected on the same held-out split as STORK. When NFE is not a multiple of $k$, the last step uses the remaining stages. At NFE $2$, every rule with $k\ge3$ therefore reduces to its two-stage version; ETD-RK3 is also affected at NFE 4, 8 and 16 and is shown for completeness.}
\vspace{0.1cm}
\label{tab:rk}
\centering
\begin{tabular}{lrrrrr}
\toprule
Residual integrator & NFE 2 & NFE 4 & NFE 8 & NFE 16 & NFE 32 \\
\midrule
AREX-AB2, flat (1 NFE/step)  & 55.73 & 23.67 & 9.07 & 4.10 & 3.63 \\
AREX-AB3, flat (1 NFE/step)  & 55.73 & 21.23 & 8.06 & 3.88 & 3.73 \\
ETD-RK2m (2)          & 106.63 & 31.06 & 11.66 & 4.46 & 3.59 \\
ETD-RK2 (2)           & 91.89 & 26.75 & 10.49 & 4.20 & 3.67 \\
ETD-RK3 (3)           & 91.89 & 75.62 & 14.06 & 15.58 & 3.63 \\
ETD-RK4 (4)           & 91.89 & 29.43 & 7.93 & 3.95 & 3.94 \\
IF-RK2 (2)            & 108.76 & 35.01 & 13.18 & 4.88 & 3.60 \\
IF-RK4 (4)            & 108.76 & 122.22 & 40.68 & 14.66 & 5.29 \\
AREX+STORK (1)         & 107.40 & 22.57 & 9.00 & 3.94 & 3.63 \\
\midrule
STORK-4 (1, full field) & 141.40 & 23.21 & 5.73 & 3.72 & 3.59 \\
\bottomrule
\end{tabular}
\end{table}

\paragraph{Alternative integrators for the residual}
Table ~\ref{tab:rk} replaces the Adams--Bashforth rule for the residual by single-step alternatives at equal total NFE. This comparison is run without the residual profile. Thus, the AREX rows are the flat rows of Table ~\ref{tab:phi}, not the sampler of the main tables. The Adams--Bashforth rows use the node position selected on the held-out split without the profile, $\rho=0.5$ at NFE $2$ and $\rho=0.2$ from NFE $4$ on, and the usual startup: the first step is the one-node rule, and AB3 uses AB2 on its second step. The exponential Runge--Kutta stages sit at the equispaced nodes $s+jh/k$, $j=0,\dots,k-1$, of each step. The integrating-factor rules use the classical tableaus, the explicit midpoint rule for $k=2$ and the classical fourth-order rule for $k=4$. When NFE is not a multiple of $k$, the last step uses the remaining stages. At NFE $2$, a rule with $k=3$ or $k=4$ therefore takes a single two-stage step from $t_0$ to $1$, so ETD-RK3 and ETD-RK4 coincide with ETD-RK2, and IF-RK4 with IF-RK2, in that column. AREX+STORK selects $(s,\text{order})$ per NFE on the same held-out split as STORK; the selected values are $(14,1)$, $(14,2)$, $(5,2)$, $(9,2)$, and $(14,1)$ at NFE $2$ to $32$.

Two observations follow. First, at NFE $2$ both Adams--Bashforth rules are ahead of every alternative by more than $36$ FID. At NFE $4$, AB3 is best, and AREX+STORK lies between AB3 and AB2. From NFE $8$ on, the single-step rules catch up: ETD-RK4 is ahead of both Adams--Bashforth rules at NFE $8$, and at NFE $32$ all residual integrators except IF-RK4 lie within $0.35$ FID of each other. Second, the exponential form is ahead of the integrating-factor form at every NFE up to $16$ for both stage counts, by a wide margin for $k=4$. At NFE $32$, IF-RK2 is slightly ahead of ETD-RK2. Thus, at the smallest budgets the multistep rule is the better use of the evaluations.

\subsection{Additional Metrics on ImageNet-256}
Table ~\ref{tab:sitcfgfull} shows the additional metrics on ImageNet-256, including sFID, Inception score (IS), precision (P), and recall (R). Experiment configurations are the same as Table ~\ref{tab:sitcfg}. AREX-C has the best value of every metric at NFE $2$ and the best or tied-best precision at every NFE. At NFE $4$, DPM-Solver-v3 has the best sFID and IS. From NFE $8$ on, AREX-C has the best IS and ties DPM-Solver-v3 on sFID at NFE $8$ and $16$. CAB has the highest recall from NFE $4$ to $16$.

\begin{table}[h]
\caption{SiT-XL/2 on ImageNet-256, $50$k samples, classifier-free guidance $w=1.5$: sFID, Inception score (IS), precision (P), and recall (R) for the samplers of Table ~\ref{tab:sitcfg}, in the same configurations. Bold marks the best value of each metric at each NFE.}
\vspace{0.1cm}
\label{tab:sitcfgfull}
\centering
\small
\setlength{\tabcolsep}{4pt}
\begin{tabular}{llrrrrr}
\toprule
Sampler & Metric & NFE 2 & NFE 4 & NFE 8 & NFE 16 & NFE 32 \\
\midrule
\multirow{3}{*}{Euler} & sFID & 200.0 & 46.0 & 10.5 & 6.1 & 4.9 \\
 & IS & 4.8 & 60.7 & 180.5 & 241.0 & 264.3 \\
 & P / R & .058/.123 & .365/.440 & .702/.505 & .779/.562 & .796/.590 \\
\midrule
\multirow{3}{*}{DPM-Solver++} & sFID & 200.0 & 22.7 & 7.4 & 5.5 & 4.7 \\
 & IS & 4.8 & 98.3 & 233.6 & 265.7 & 275.9 \\
 & P / R & .058/.123 & .517/.455 & .793/.546 & .801/.582 & .804/.588 \\
\midrule
\multirow{3}{*}{UniPC} & sFID & 200.0 & 13.9 & 7.3 & 4.9 & \textbf{4.5} \\
 & IS & 4.8 & 123.0 & 243.4 & 271.5 & 277.3 \\
 & P / R & .058/.123 & .592/.472 & .799/.560 & .803/.590 & .802/.595 \\
\midrule
\multirow{3}{*}{A-FloPS} & sFID & 90.4 & 12.5 & 5.9 & 5.3 & 4.8 \\
 & IS & 9.9 & 123.4 & 248.5 & 267.0 & 259.2 \\
 & P / R & .071/.280 & .584/.503 & .786/.564 & .799/.584 & .787/\textbf{.602} \\
\midrule
\multirow{3}{*}{CAB} & sFID & 132.9 & 14.7 & 8.9 & 5.3 & 4.6 \\
 & IS & 8.4 & 155.6 & 235.2 & 271.2 & 279.1 \\
 & P / R & .104/.260 & .686/\textbf{.521} & .781/\textbf{.580} & .799/\textbf{.599} & .799/.599 \\
\midrule
\multirow{3}{*}{STORK-4} & sFID & 132.9 & 13.5 & 8.2 & 5.9 & 5.2 \\
 & IS & 8.4 & 157.5 & 238.5 & 266.4 & 271.1 \\
 & P / R & .104/.260 & .686/.513 & .786/.567 & .797/.594 & .795/.596 \\
\midrule
\multirow{3}{*}{DPM-Solver-v3} & sFID & 71.3 & \textbf{9.1} & \textbf{5.6} & \textbf{4.6} & 4.6 \\
 & IS & 24.5 & \textbf{191.8} & 262.3 & 276.7 & 278.9 \\
 & P / R & .176/.334 & .719/.444 & .801/.557 & .806/.591 & \textbf{.805}/.590 \\
\midrule
\multirow{3}{*}{AREX-C (one node)} & sFID & 67.4 & 32.8 & 14.5 & 7.8 & 5.6 \\
 & IS & 12.2 & 84.3 & 182.3 & 236.2 & 258.1 \\
 & P / R & .151/.259 & .526/.492 & .701/.558 & .764/.581 & .782/.601 \\
\midrule
\multirow{3}{*}{AREX-C-AB2} & sFID & \textbf{44.5} & 11.4 & 5.8 & 4.8 & 4.6 \\
 & IS & \textbf{25.6} & 178.5 & \textbf{266.5} & 285.3 & 282.1 \\
 & P / R & \textbf{.281}/\textbf{.405} & .708/.494 & \textbf{.802}/.546 & .812/.568 & \textbf{.805}/.582 \\
\midrule
\multirow{3}{*}{AREX-C-AB3} & sFID & \textbf{44.5} & 11.2 & \textbf{5.6} & \textbf{4.6} & 4.6 \\
 & IS & \textbf{25.6} & 188.3 & 265.1 & \textbf{290.5} & \textbf{283.5} \\
 & P / R & \textbf{.281}/\textbf{.405} & \textbf{.727}/.502 & \textbf{.802}/.557 & \textbf{.814}/.572 & \textbf{.805}/.584 \\
\bottomrule
\end{tabular}
\end{table}

\clearpage
\subsection{Qualitative Samples} 

\begin{figure}[h]
\centering
\includegraphics[width=\textwidth]{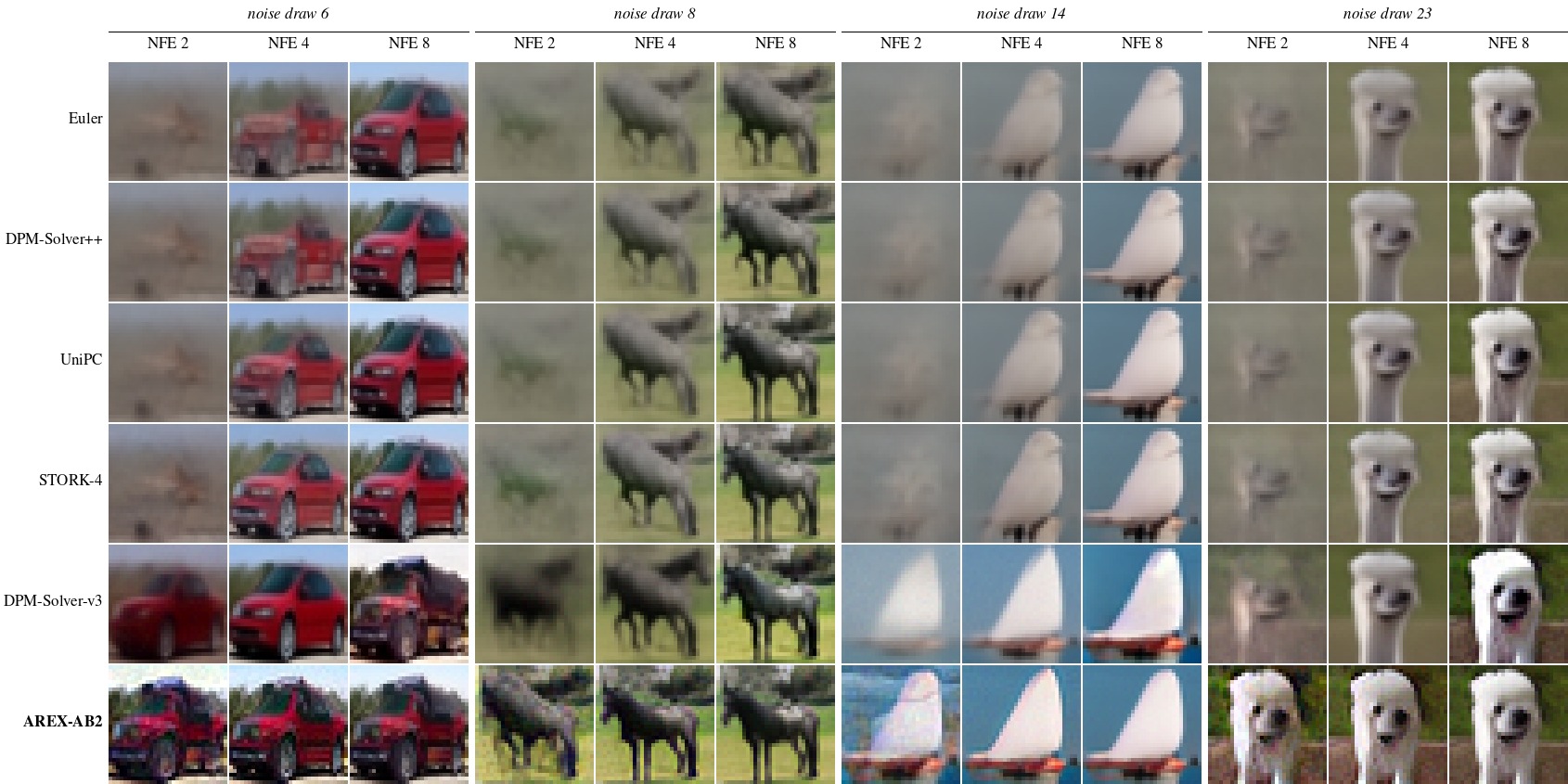}
\caption{CIFAR-10, I-CFM: four noise draws at NFE 2, 4 and 8, every sampler in the configuration of Table~\ref{tab:cifar}. Same initial noise down each column group.}
\label{fig:appcif1}
\end{figure}

\begin{figure}[h]
\centering
\includegraphics[width=\textwidth]{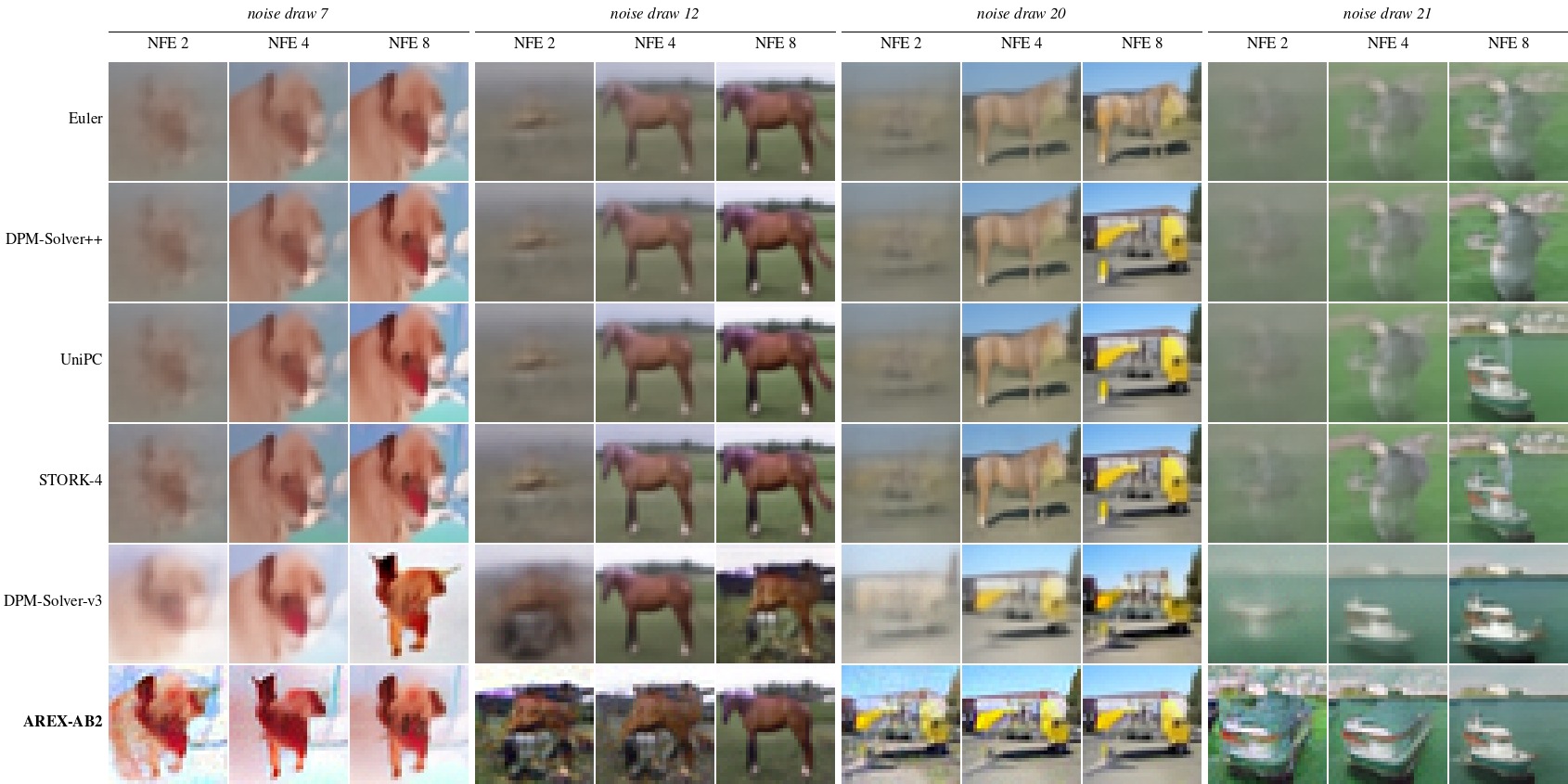}
\caption{As Figure~\ref{fig:appcif1}, four further noise draws.}
\label{fig:appcif2}
\end{figure}

\begin{figure}[h]
\centering
\includegraphics[width=\textwidth]{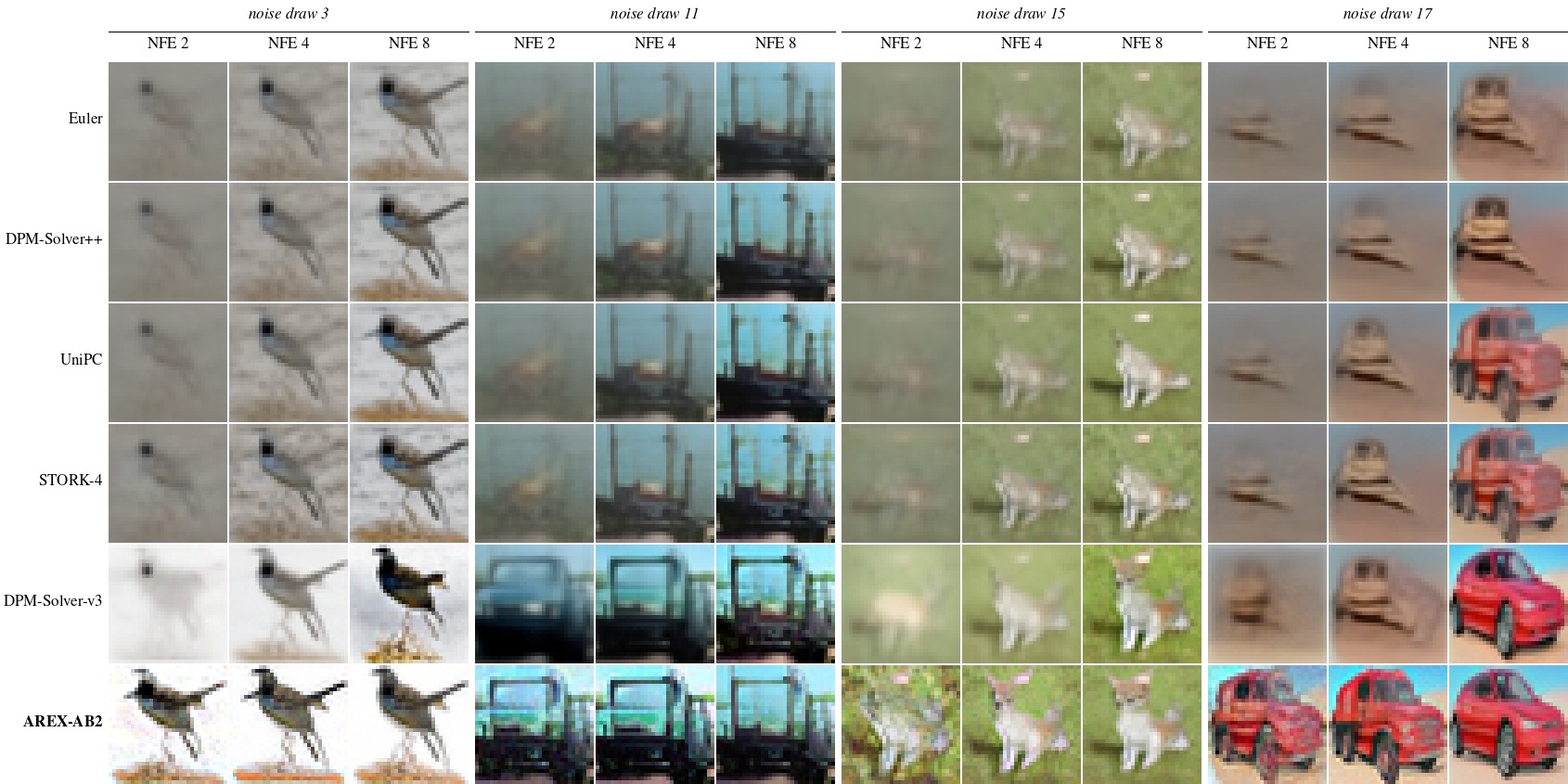}
\caption{As Figure~\ref{fig:appcif1}, four further noise draws.}
\label{fig:appcif3}
\end{figure}

\begin{figure}[!t]
 \centering
 \includegraphics[width=\textwidth]{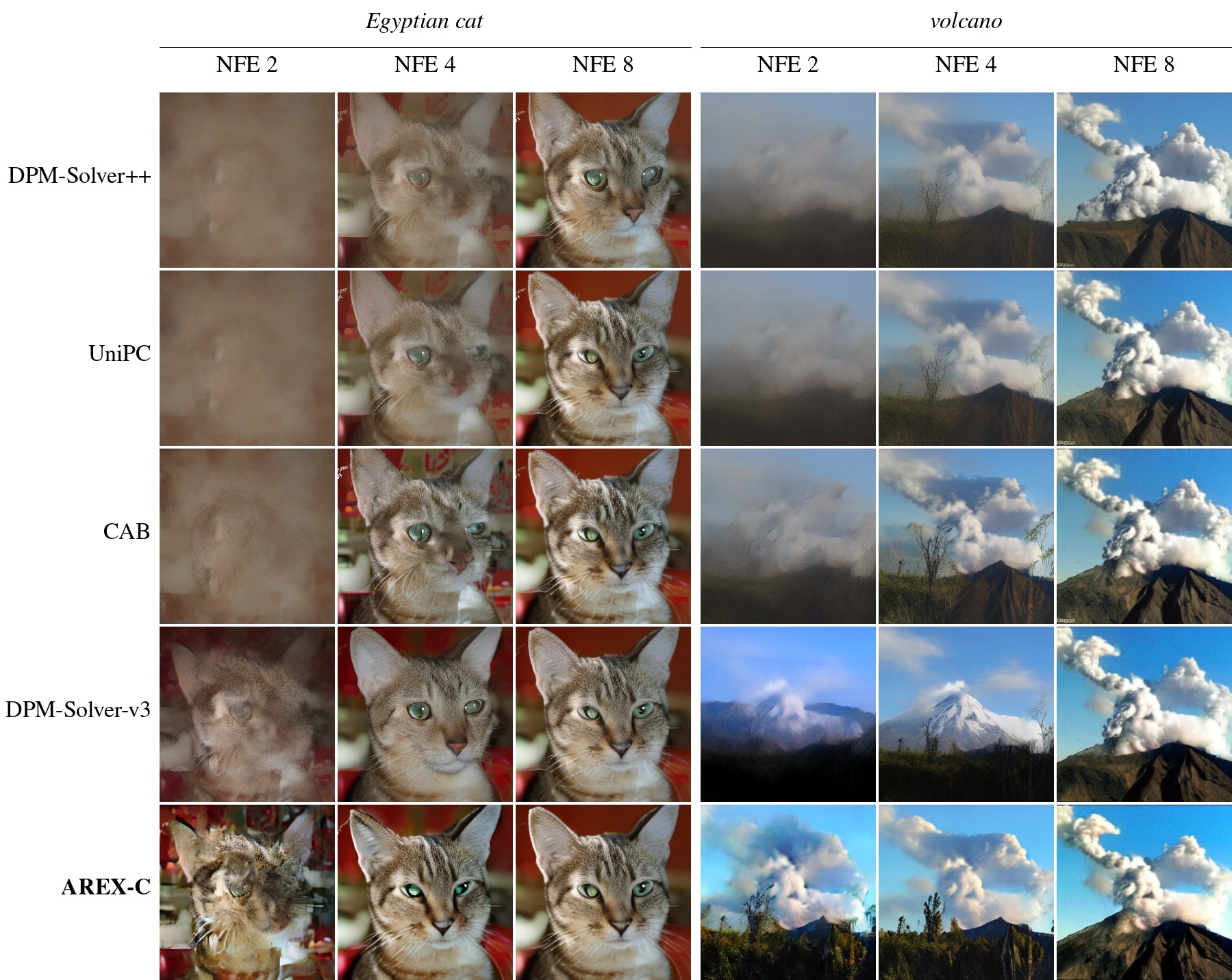}
 \caption{
 Samples generated by different sampling schemes using the same pretrained SiT-XL/2 backbone with CFG $w=1.5$ for the classes {\it Egyptian cat} and {\it volcano}. AREX consistently produces  samples with higher fidelity, more coherent global structure and finer semantic detail than the other schemes, while retaining these improvements as the integration budget increases from NFE $2$ to $8$. }
 \label{fig:appsit1}
 \end{figure}

 \begin{figure}[p]
\centering
\includegraphics[width=\textwidth]{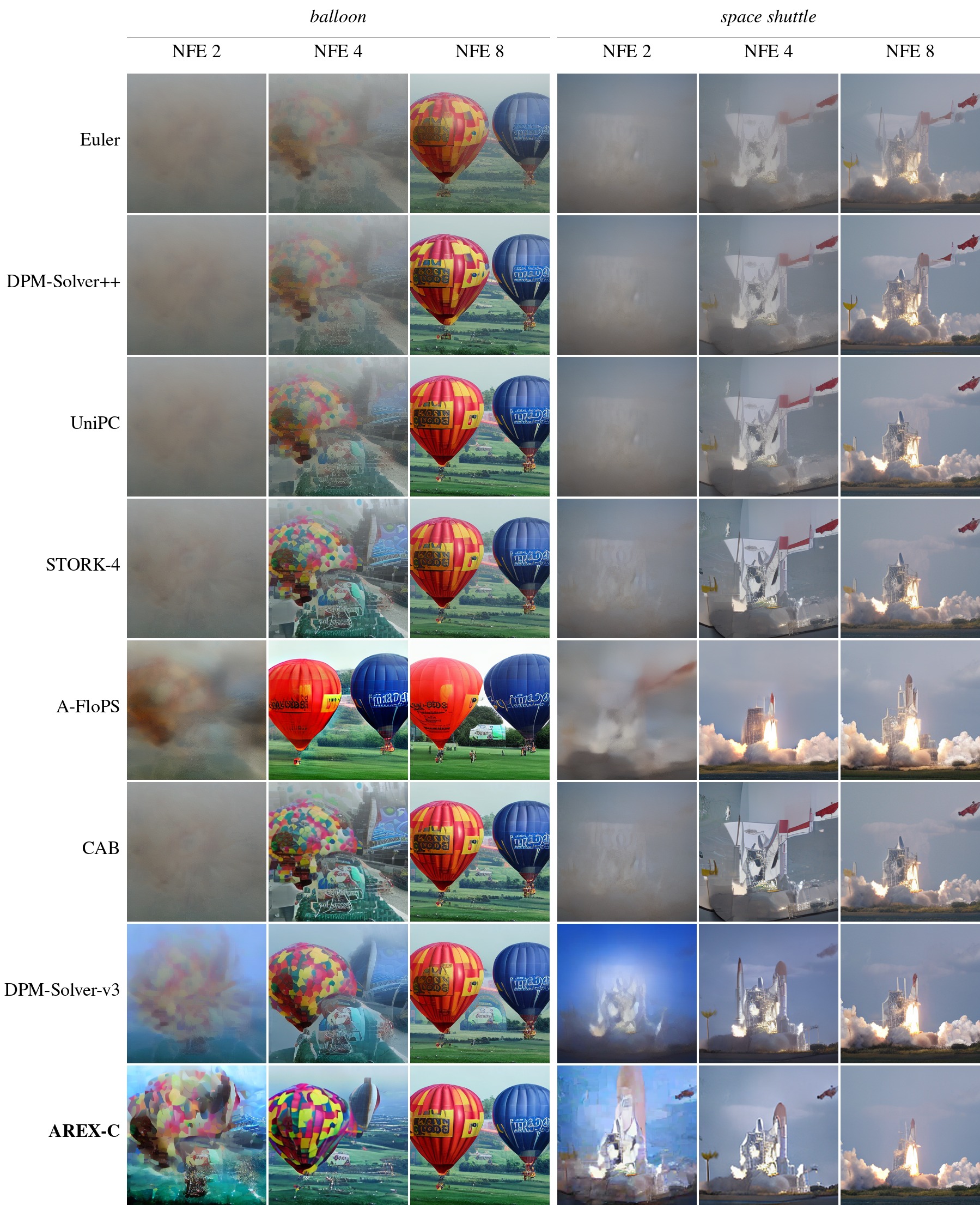}
\caption{As Figure~\ref{fig:appsit1}, classes balloon and space shuttle.}
\label{fig:appsit2}
\end{figure}

\begin{figure}[p]
\centering
\includegraphics[width=\textwidth]{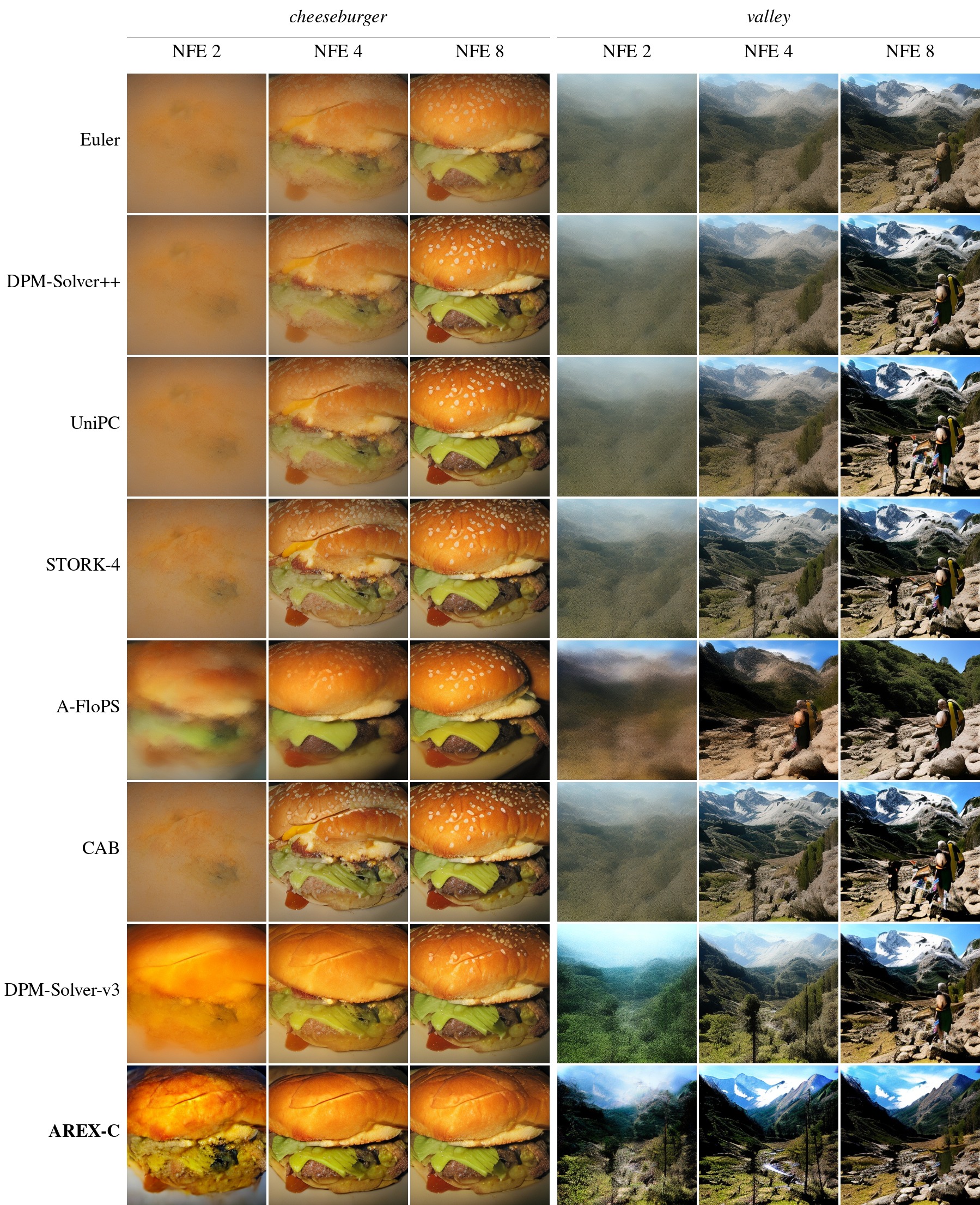}
\caption{As Figure~\ref{fig:appsit1}, classes cheeseburger and valley.}
\label{fig:appsit3}
\end{figure}
 
\begin{figure}[p]
\centering
\includegraphics[width=\textwidth]{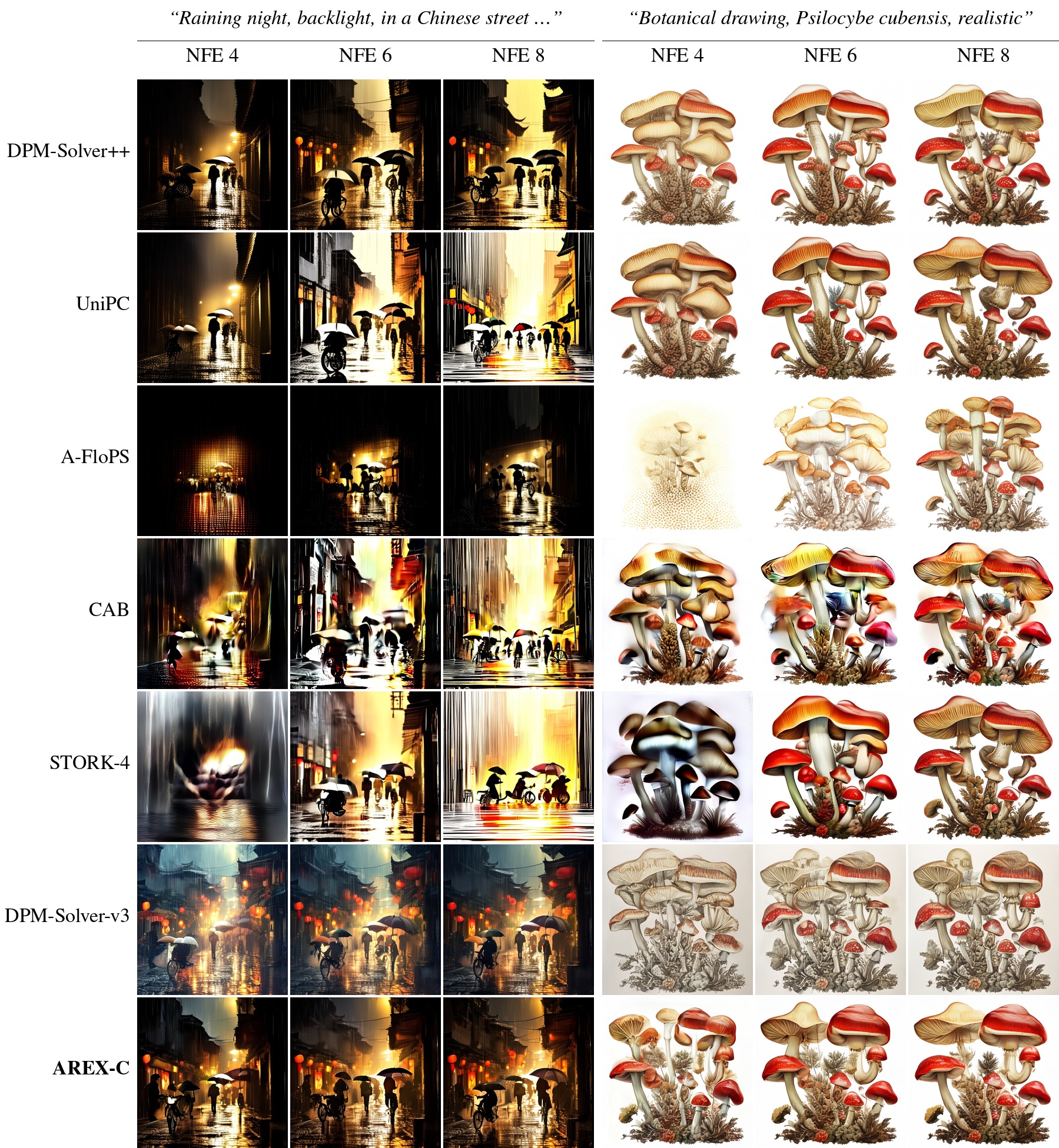}
\caption{SANA-0.6B, MJHQ-30K prompts, $w=4.5$, every baseline of
Table~\ref{tab:mjhq}: prompts "raining night, backlight,
in a Chinese street, people walking with umbrellas" and "botanical
drawing, \emph{Psilocybe cubensis}, realistic" at NFE 4, 6 and 8, one initial noise per column group.}
\label{fig:appsana1}
\end{figure}

\begin{figure}[p]
\centering
\includegraphics[width=\textwidth]{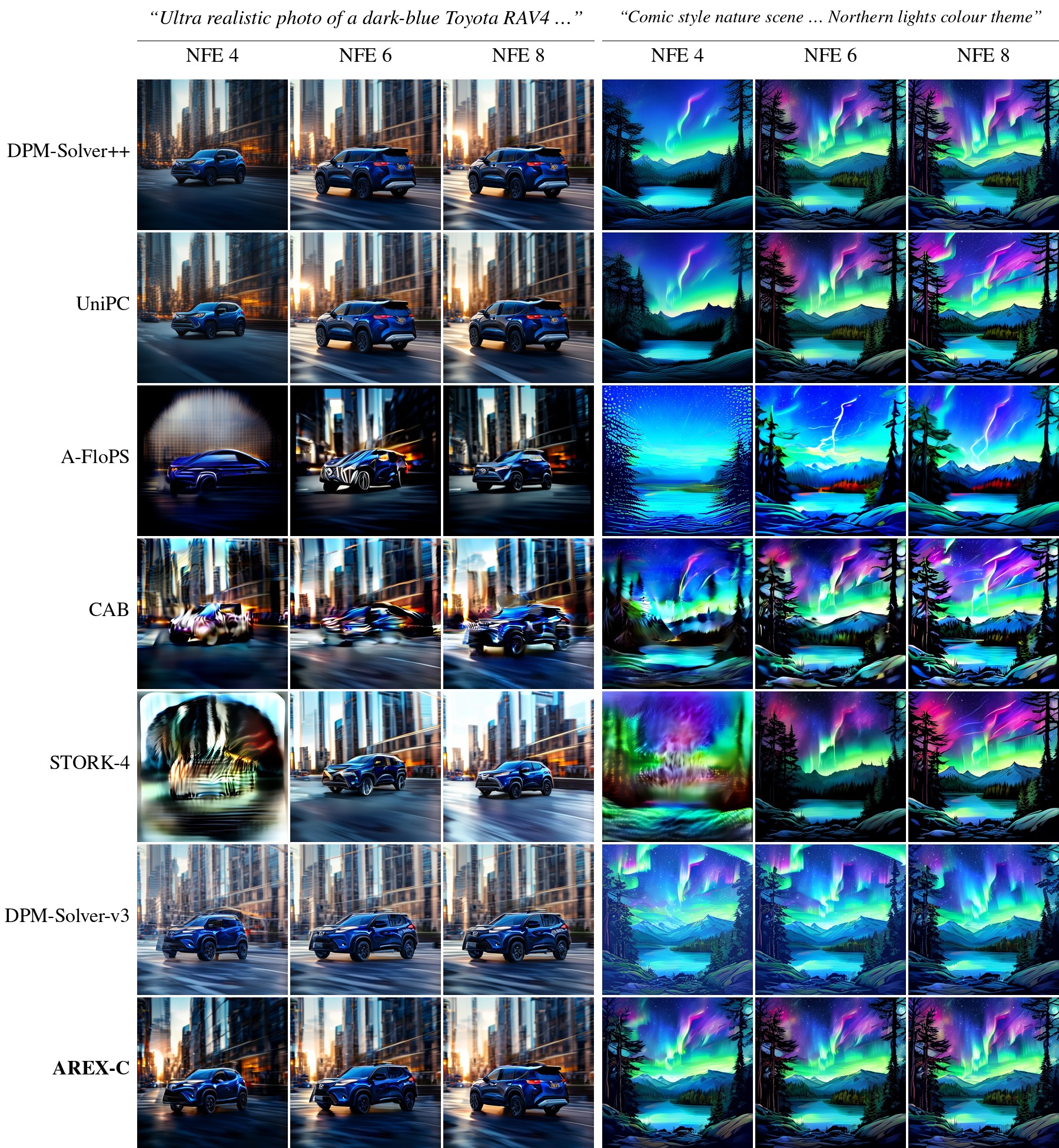}
\caption{As Figure~\ref{fig:appsana1}, prompts "ultra realistic photo of a dark-blue Toyota RAV4 Hybrid passing by, daylight city background'' and comic-style nature scene overlooking a lake with forest, northern-lights colour theme".}
\label{fig:appsana2}
\end{figure}

\begin{figure}[p]
\centering
\includegraphics[width=\textwidth]{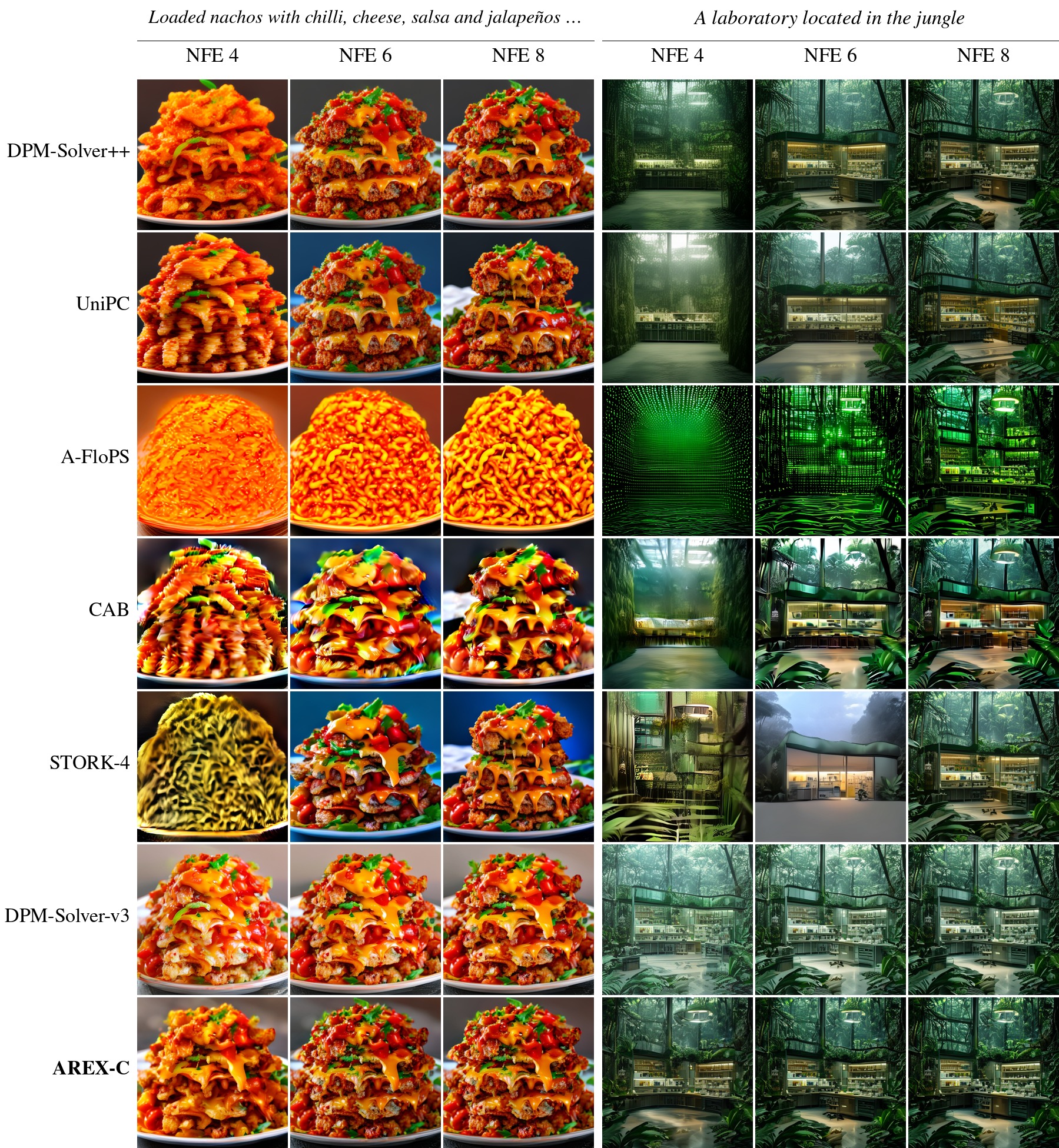}
\caption{As Figure~\ref{fig:appsana1}, prompts "loaded nachos with two layers of chili, melted cheese, salsa and jalape\~nos, hyper realistic" and "a laboratory located in the jungle".}
\label{fig:appsana3}
\end{figure}
\end{document}